\newtheorem*{lemma*}{Lemma}
\def\l{\left}
\def\r{\right}
\newcommand{\mc}[1]{\mathcal{#1}}
\title{Upper Confidence Primal-Dual Reinforcement Learning for CMDP with Adversarial Loss}
\begin{document}
\author{Shuang Qiu\thanks{University of Michigan. 
Email: \texttt{qiush@umich.edu}.} 
       \qquad
	Xiaohan Wei\thanks{University of Southern California.
Email: \texttt{xiaohanw@usc.edu}.}  
	\qquad
       Zhuoran Yang\thanks{
       Princeton University. 
    Email: \texttt{zy6@princeton.edu}.}
	\qquad
       Jieping Ye\thanks{University of Michigan. 
    Email: \texttt{jpye@umich.edu}.}
	\qquad    
       Zhaoran Wang\thanks{Northwestern University.
    Email: \texttt{zhaoranwang@gmail.com}.}
       }       

\maketitle

\begin{abstract}
We consider online learning for episodic stochastically constrained Markov decision processes (CMDPs), which plays a central role in ensuring the safety of reinforcement learning. Here the loss function can vary arbitrarily across the episodes, and both the loss received and the budget consumption are revealed at the end of each episode. Previous works solve this problem under the restrictive assumption that the transition model of the Markov decision processes (MDPs) is known a priori and establish regret bounds that depend polynomially on the cardinalities of the state space $\mathcal{S}$ and the action space $\mathcal{A}$. In this work, we propose a new \emph{upper confidence primal-dual} algorithm, which only requires the trajectories sampled from the transition model. In particular, we prove that the proposed algorithm achieves $\widetilde{\mathcal{O}}(L|\mathcal{S}|\sqrt{|\mathcal{A}|T})$ upper bounds of both the regret and the constraint violation, where $L$ is the length of each episode. Our analysis incorporates a new high-probability drift analysis of Lagrange multiplier processes into the celebrated regret analysis of upper confidence reinforcement learning, which demonstrates the power of   ``optimism in the face of uncertainty'' in constrained online learning.
\end{abstract}

\section{Introduction}
Constrained Markov decision processes play an important role in control and planning. It aims at maximizing a reward or minimizing a penalty metric over the set of all available policies subject to constraints on other relevant metrics. The constraints aim at enforcing the fairness or safety of the policies so that over time the behaviors of the chosen policy are under control. For example, in an edge cloud serving network \citep{urgaonkar2015dynamic, wang2015dynamic}, one would like to minimize the average cost of serving the moving targets subject to a constraint on the average serving delay.  In an autonomous vehicle control problem \citep{le2019batch}, one might be interested in minimizing the driving time subject to certain fuel efficiency or driving safety constraints. 

Classical treatment of CMDPs dates back to \citet{fox1966markov, altman1999constrained} reformulating the problem into a linear program (LP) via stationary state-action occupancy measures. However, to formulate such an LP, one requires the full knowledge of the transition model, reward, and constraint functions, and also assumes them to be fixed. Leveraging the episodic structure of a class of MDPs, \citet{neely2012dynamic} develops online renewal optimization which potentially allows the loss and constraint functions to be stochastically varying and unknown, while still relying on the transition model to solve the subproblem within the episode. 

More recently, policy-search type algorithms have received much attention, attaining state-of-art performance in various tasks without knowledge of the transition model, e.g.  \citet{williams1992simple, baxter2000direct, konda2000actor, kakade2002natural, schulman2015trust, lillicrap2015continuous, schulman2017proximal, sutton2018reinforcement, fazel2018global, abbasi2019politex, abbasi2019exploration, bhandari2019global, cai2019provably, wang2019neural,liu2019neural,agarwal2019optimality}. While most of the algorithms focus on unconstrained reinforcement learning problems, there are efforts to develop policy-based methods in CMDPs where constraints are known with limited theoretical guarantees. The work \citet{chow2017risk} develops a primal-dual type algorithm which is shown to converge to some constraint satisfying policy. \citet{achiam2017constrained} develops a trust-region type algorithm which requires solving an optimization problem with both trust-region and safety constraints during each update. Generalizing ideas from the fitted-Q iteration, \citet{le2019batch} develops a batch offline primal-dual type algorithm which guarantees only the time average primal-dual gap converges. 

The goal of this paper is to solve constrained episodic MDPs with more generality where not only transition models are unknown, but also the loss and constraint functions can change online. In particular, the losses can be arbitrarily time-varying and adversarial. Let $T$ be the number of episodes\footnote{In the non-episodic (infinite-horizon) setting, $T$ denotes the total number of steps, which is a little different from the aforementioned $T$ for the episodic setting.}. When assuming the transition model is known, \citet{even2009online} achieves $\widetilde{\mathcal{O}}(\varrho^2 \sqrt{T \log |\mathcal{A}|})$ regret with $\varrho$ being the mixing time of the MDPs, and the work \citet{yu2009markov} achieves $\widetilde{\mathcal{O}}(T^{2/3})$ regret. These two papers consider a continuous setting that is a little different to the episodic setting that we consider in this paper. \citet{zimin2013online} further studies the episodic MDP and achieves $\widetilde{\mathcal{O}}(L \sqrt{ T \log (|\mathcal{S}||\mathcal{A}|)})$ regret. For the constrained case with known transitions, the work \citet{wei2018online} achieves an $\widetilde{\mathcal{O}}(\text{poly}(|\mc S||\mc A|)\sqrt{T})$ regret and constraint violations, and \citet{zheng2020constrained} attains $\widetilde{\mathcal{O}}(|\mc S||\mc A|T^{3/4})$ for the non-episodic setting.

After we finished the first version of this work, there are several concurrent works appearing which also focus on CMDPs with unknown transitions and losses. The work \citet{efroni2020exploration} 
studies episodic tabular MDPs with unknown but fixed loss and constraint functions, where the feedbacks are stochastic bandits. Leveraging upper confidence bound (UCB) on the reward, constraints, and transitions, they obtain an $\mathcal{O}(\sqrt{T})$ regret and constraint violation via linear program as well as primal-dual optimization. In another work, \citet{ding2020provably} studies the constrained episodic MDPs with a linear structure and adversarial losses via a primal-dual-type policy optimization algorithm, achieving $\widetilde{ \mathcal{O}} (\sqrt{T})$ regret and constraint violation. While the scenario in \citet{ding2020provably} is more general than ours, their dependence on the length of episode is worse when applied to the tabular case. Both of these two works rely on Slater condition which is also more restrictive than that of this work.

On the other hand, for unconstrained online MDPs, the idea of UCB has shown to be effective and helps to achieve tight regret bounds without knowing the transition model, e.g. \citet{jaksch2010near, azar2017minimax, rosenberg2019online, rosenberg2019onlineb, jin2019learning}. The main idea there is to sequentially refine a confidence set of the transition model and choose a model in the interval which performs the best in optimizing the current value.

The main contribution of this paper is to show that UCB is also effective when incorporating with primal-dual type approaches to achieve $\widetilde{\mathcal{O}}(L|\mc S|\sqrt{|\mc A|T})$ regret and constraint violation simultaneously in online CMDPs when the transition model is unknown, the loss is adversarial, and the constraints are stochastic. This almost matches the lower bound  $\Omega(\sqrt{ L|\mathcal{S}| |\mathcal{A}|T})$ for the regret \cite{jaksch2010near} up to an $\tilde{\mathcal{O}}(\sqrt{L|\mathcal{S}|})$ factor. Under the hood is a new Lagrange multiplier condition together with a new drift analysis on the Lagrange multipliers leading to low constraint violation. Our setup is challenging compared to classical constrained optimization in particular due to \textbf{(1)} the unknown loss and constraint functions from the online setup; \textbf{(2)} the time-varying decision sets resulting from moving confidence set estimation of UCB. 
The decision sets can potentially be much larger than or even inconsistent with the true decision set knowing the model, resulting in a potentially large constraint violation. The main idea is to utilize a Lagrange multiplier condition as well as a confidence set of the model to construct a probabilistic bound on an online dual multiplier. We then explicitly take into account the laziness nature of the confidence set estimation in our algorithm to argue that the bound on the dual multiplier gives the $\tilde{\cO}(\sqrt{T})$ bound on constraint violation.

\section{Related Work}
In this paper, we are more interested in a class of online MDP problems where the loss functions are arbitrarily changing, or adversarial. With a known transition model, adversarial losses, and full-information feedbacks (as opposed to bandit feedbacks), \citet{even2009online} achieves $\widetilde{\mathcal{O}}(\varrho^2 \sqrt{T \log |\mathcal{A}|})$ regret with $\varrho$ being the mixing time of the MDP, and the work \citet{yu2009markov} achieves $\widetilde{\mathcal{O}}(T^{2/3})$ regret. These two papers consider a non-episodic setting that is different to the episodic setting that we consider in this paper. The work \citet{zimin2013online} further studies the episodic MDP and achieves an $\widetilde{\mathcal{O}}(L \sqrt{ T \log (|\mathcal{S}||\mathcal{A}|)})$ regret.

A more challenging setting is that the transition model is unknown. Under such setting, there are several works studying the online episodic MDP problems with adversarial losses and full-information feedbacks. \citet{neu2012adversarial} obtains $\widetilde{\mathcal{O}}(L |\mc S| |\mc A| \sqrt{T})$ regret by proposing a Follow the Perturbed
Optimistic Policy (FPOP) algorithm. The recent work \citet{rosenberg2019online} improves the
regret to $\widetilde{\mathcal{O}}(L|\mathcal{S}|\sqrt{|\mathcal{A}|T})$ by proposing an online upper confidence mirror descent algorithm. This regret bound nearly matches the lower bound $\Omega(\sqrt{ L|\mathcal{S}| |\mathcal{A}|T})$ \citep{jaksch2010near} up to $\mathcal{O}(\sqrt{L|\mathcal{S}|})$ and some logarithm factors. Our work is along this line of research, and further considers the setup that there exist stochastic constraints observed at each episode during the learning process.

Besides, a number of papers also investigate online episodic MDPs with bandit feedbacks. Assuming the transition model is known and the losses are adversarial, \citet{neu2010online} achieves $\widetilde{\mathcal{O}}( L^2 \sqrt{T |\mathcal{A}|}/\beta)$ regret, where $\beta$ is the probability that all states are reachable under all policies. Under the same setting, \citet{neu2010online} achieves $\widetilde{\mathcal{O}}(T^{2/3})$ regret without the dependence on $\beta$, and \citet{zimin2013online} obtains $\widetilde{\mathcal{O}}(\sqrt{L|\mathcal{S}||\mathcal{A}|T})$ regret.  Furthermore, with assuming the transition model is not known and the losses are adversarial, \citet{rosenberg2019onlineb} obtains $\widetilde{\mathcal{O}}(T^{3/4})$ regret and also $\widetilde{\mathcal{O}}(
\sqrt{T} /\beta)$ where all states are reachable with probability $\beta$ under any policy. \citet{jin2019learning} further achieves $\widetilde{\mathcal{O}}(L|\mathcal{S}| \sqrt{|\mathcal{A}|T})$ regret under the same setting of the unknown transition model and adversarial losses. 
We remark that our algorithm can be extended to the setting of constrained episodic MDP where both the loss and constraint  functions are time-varying and we only receive bandit feedbacks. 
%Specifically, in this case, we can construct   estimators of the loss and constraint functions as in \citet{jin2019learning} and plug them into our Algorithm \ref{alg:ucpd}. 
We leave such an extension as our future work. 

On the other hand, instead of adversarial losses, extensive works have studied the setting where the feedbacks of the losses are stochastic and have fixed expectations, e.g. \citet{jaksch2010near, azar2017minimax, ouyang2017learning, jin2018q, fruit2018efficient, wei2019model, zhang2019regret, dong2019q}. With assuming that the transition model is known, \citet{zheng2020constrained} studies online CMDPs under the non-episodic setting and attains an $\widetilde{\mathcal{O}}(|\mc S||\mc A|T^{3/4})$ regret which is suboptimal in terms of $T$. The concurrent work \citet{efroni2020exploration} 
studies episodic MDPs with unknown transitions and stochastic bandit feedbacks of the losses and the constraints, and obtains an $\tilde{\mathcal{O}}(\sqrt{T})$ regret and constraint violation.

In addition to the aforementioned papers, there is also a line of policy-search type works, focusing on solving online MDP problems via directly optimizing policies without knowing the transition model, e.g., \citet{williams1992simple, baxter2000direct, konda2000actor, kakade2002natural, schulman2015trust, lillicrap2015continuous, schulman2017proximal, sutton2018reinforcement, fazel2018global, abbasi2019politex, abbasi2019exploration, bhandari2019global, cai2019provably, wang2019neural,liu2019neural,agarwal2019optimality,efroni2020optimistic}. Efforts have also been made in several works \citep{chow2017risk, achiam2017constrained, le2019batch} to investigate CMDP problems via policy-based methods, but with known transition models. In another concurrent work, assuming the transition model is unknown, \cite{ding2020provably} studies the episodic CMDPs with linear structures and proposes a primal-dual type policy optimization algorithm.

\section{Problem Formulation}\label{sec:prob}

Consider an episodic loop-free MDP with a finite state space $S$ and a finite action space $\mathcal A$ at each state over a finite horizon of $T$ episodes. Each episode starts with a fixed initial state $s_0$ and ends with a terminal state $s_L$.
The transition probability is $P:\mc S\times \mc S\times \mc A\rightarrow [0,1]$, where $P(s'|s,a)$ gives the probability of transition from $s$ to $s'$ under an action $a$. This underlying transition model $P$ is assumed to be unknown. The state space is loop-free, i.e., it is divided into layers, i.e., $\mathcal S := \mc S_0\cup \mc S_1\cup\cdots\cup\mc S_L$ with a singleton initial layer 
$\mc S_0= \{s_0\}$ and terminal layer $S_L= \{s_L\}$. Furthermore, $\mc S_k\cap \mc S_\ell = \emptyset,~k\neq \ell$ and transitions are only allowed between consecutive layers, which is $P(s'|s,a)>0$ only if $s'\in \mc S_{k+1}$, $s\in\mc S_k$, and $a\in\mc A$, $\forall k\in\{0,1,2,\cdots, L-1\}$. Note that such an assumption enforces that each path from the initial state to the terminal state takes a fixed length $L$. This is not an excessively restrictive assumption as any loop-free MDP with bounded varying path lengths can be transformed into one with a fixed path length (see \citet{gyorgy2007line} for details). 

The loss function for each episode is $f^t:\mc S\times\mc A\times \mc S\rightarrow\mathbb R$, where $f^t(s,a,s')$ denotes the loss received at episode $t$ for any $s\in\mc S_k,~s'\in\mc S_{k+1}$ and $a\in\mc A$, $\forall k\in\{0,1,2,\ldots,L-1\}$.  We assume $f_t$ can be arbitrarily varying with potentially no fixed probability distribution. There are $I$ stochastic constraint (or budget consumption) functions: $g_{i}^t:\mc S\times\mc A\times \mc S\rightarrow\mathbb R,~\forall i\in\{1,2,\ldots,I\}$, where $g_i^t(s,a,s')$ denotes the price to pay at episode $t$ for any $(s,a,s')$. 
Each stochastic function $g_i^t$ at episode $t$ is sampled according to a random variable $\xi_i^t\sim\mathcal{D}_i$, namely $g_i^t(s,a,s') = g_i(s,a,s'; \xi_i^t)$. Then, we define $g_i(s,a,s') := \E[g_i^t(s,a,s')] = \E_{\xi_i^t}[g_i(s,a,s'; \xi_i^t)]$ where the expectation is taken over the randomness of $\xi_i^t\sim \mathcal{D}_i$. For abbreviation, we denote $g_i = \E[g_i^t]$.
%We assume that for each $i$, the functions $g_i^t$ are i.i.d. over all episodes $t \geq 0$ and independent of the Markov transitions. 
% We denote $g_i := \E[g_i^t]$ where the expectation is taken over the randomness of $\xi_i^t\sim \mathcal{D}_i$. 
In addition, the functions $f^t$ and $g_i^{t}, ~\forall i \in \{1,\ldots, I\}$, are mutually independent and independent of the Markov transitions. Both the loss functions and the budget consumption functions are revealed at the end of each episode. 

\begin{remark}
It might be tempting to consider the more general scenario that both losses and constraints are arbitrarily time-varying. For such a setting, however, there exist counterexamples \citep{mannor2009online} in the arguably simpler constrained online learning scenario that no algorithm can achieve sublinear regret and constraint violation simultaneously. Therefore, we seek to put extra assumptions on the problem so that obtaining sublinear regret and constraint violation is feasible, one of which is to assert constraints to be stochastic instead of arbitrarily varying.
\end{remark}

For any episode $t$, a policy $\pi_t$ is the conditional probability $\pi_t(a|s)$ of choosing an action $a\in\mathcal A$ at any given state $s\in\mc S$.
%Let $d_t$ be the state distribution at episode $t$ under the transition model $P$. By definition, we have the following relation, for any $s'\in\mc S_{k+1}$, $d_t(s') = \sum_{s\in\mc S_{k},a\in\mc A}P(s'|s,a)\pi_t(a|s) d_t(s)$. 
Let $(s_k, a_k, s_{k+1}) \in \mc S_k\times \mc A \times \mc S_{k+1}$ denotes a random tuple generated according to the transition model $P$ and the policy $\pi_t$. The corresponding expected loss is  $\sum_{k=0}^{L-1}\expect{f^t(s_k,a_k,s_{k+1})| \pi_t, P}$, while the budget costs are $\sum_{k=0}^{L-1}\expect{g_i^t(s_k,a_k,s_{k+1})| \pi_t, P}, i\in\{1,\cdots,I\}$, where 
%for any $k\in\{1,\cdots,L\}$ and $t\in\{1,\cdots,T\}$, $\expect{f^t(s_k,a_k,s_{k+1})| \pi_t, P}:=  \mathbb E\sum_{s \in \mc S_k} \sum_{a \in \mc A} \sum_{s' \in \mc S_{k+1}}  f^t(s, a, s')P(s'|s, a)\pi_t(a|s)d_t(s)$, and $\expect{g_i^t(s_k,a_k,s_{k+1})| \pi_t, P} := \mathbb E\sum_{s \in \mc S_k} \sum_{a \in \mc A} \sum_{s' \in \mc S_{k+1}} g_i^t(s,a,s')P(s'|s, a)\pi_t(a|s)d_t(s)$,
%and 
the expectations are taken w.r.t. 
% the randomness of the policy resulting from the randomness of the stochastic function $g_i^t$ plus 
the randomness of the tuples $(s_k, a_k, s_{k+1})$. 
In this paper, we adopt the occupancy measure $\theta(s,a,s')$ for our analysis. In general, the occupancy measure $\theta(s,a,s')$ is a joint distribution of the tuple $(s,a,s') \in \mathcal{S} \times \mathcal{A} \times \mathcal{S}$ under some certain policy and transition model. 
%In particular, we have $\theta(s,a,s')\geq 0$ for any tuple $(s,a,s') \in \mathcal{S}_k \times \mathcal{A} \times \mathcal{S}_{k+1}, \forall k \in \{0,\ldots, L-1\}$ and $\theta(s,a,s') = 0$ otherwise.
Particularly, under the true transition $P$, we define the set $\Delta := \{ \theta ~:~  \theta \text{ satisfies \ref{item:cond1}, \ref{item:cond2}, and \ref{item:cond3}} \}$ \citep{altman1999constrained} where
\begin{enumerate}[label=(\alph*)] 
\item   \label{item:cond1} $\sum_{s\in \mc S_k} \sum_{a\in\mc A} \sum_{s'\in\mc S_{k+1}} \theta(s,a,s')=1, \forall k \in \{0, \ldots, L-1\}, \text{ and }~ \theta(s,a,s')\geq0$. 
\item \label{item:cond2}  $\sum_{s\in\mc S_k} \sum_{a\in \mc A} \theta(s, a, s') = \sum_{a\in \mc A} \sum_{s''\in\mc S_{k+2}}  \theta(s', a, s''), \forall k\in \{ 0, \ldots, L-2 \}, ~s' \in \mc S_{k+1}$.
\item  \label{item:cond3} $ \theta(s,a,s')/\sum_{s''\in \mc S_{k+1}}\theta(s,a,s'') = P(s'|s,a), \forall k\in \{ 0, \ldots, L-1 \}, s \in \mc S_k, a\in \mc A, s' \in\mc S_{k+1}$. 
\end{enumerate}
% transition model $P$ and a policy $\pi$, we have the relation $\overline{\theta}(s,a,s') = P(s'|s,a)\pi(a|s)d(s)$, where $\pi$ is the policy taken and $d$ is the state distribution induced by $\pi$ and $P$. It can be shown that a set of probability measures $\{\overline{\theta}(s,a,s')\}_{(s,a,s')\in\mc S\times \mc A\times\mc S}$ are valid occupancy measure associated with the transition model $P$ if and only if $\overline{\theta} \in \Delta$, where any element $\theta$ in the set $\Delta$ satisfies the following three relations \citep{altman1999constrained}:
We can further recover a policy $\pi$ from $\theta$ via 
$\pi(a|s) = \sum_{s'\in\mc S_{k+1}} \theta(s,a,s')/\sum_{s'\in\mc S_{k+1},a\in\mc A} \theta(s,a,s')$.

We define $\overline{\theta}^t(s,a,s')$, $s\in\mc S,~s'\in\mc S,~a\in\mc A$, to be the occupancy measure at episode $t$ w.r.t. the true transition $P$, resulting from a policy $\pi_t$ at episode $t$. Given the definition of occupancy measure, we can rewrite the expected loss and the budget cost as $\sum_{k=0}^{L-1}\expect{f^t(s_k,a_k,s_{k+1})| \pi_t, P} = \langle f^t, \overline{\theta}^t \rangle$ where $\langle f^t, \overline{\theta}^t \rangle = \sum_{s,a,s'} f^t(s,a,s') \overline{\theta}^t(s,a,s')$ and $\sum_{k=0}^{L-1}\expect{g_i^t(s_k,a_k,s_{k+1})| \pi_t, P} = \langle g_i^t, \overline{\theta}^t \rangle$  with $\langle g_i^t, \overline{\theta}^t \rangle = \sum_{s,a,s'} f^t(s,a,s') \overline{\theta}^t(s,a,s')$. 
We aim to solve the following constrained optimization, and let $\overline{\theta}^*$ be one solution which is further viewed as a reference point to define the regret:
%\begin{align}
%\text{minimize}~~&\sum_{t=1}^T\sum_{k=0}^{L-1}\sum_{s_k,a_k,s_{k+1}}\expect{f^t(s_k,a_k,s_{k+1})\overline\theta^t(s_k,a_k,s_{k+1})}
%\label{eq:prob-3}\\
%\text{subject to}~~
%&\sum_{t=1}^T \sum_{k=0}^{L-1}\sum_{s_k,a_k,s_{k+1}}\expect{g_i^t(s_k,a_k,s_{k+1})\overline\theta^t(s_k,a_k,s_{k+1})}\leq c_i,
%i\in\{1,2,\cdots,I\}, \label{eq:prob-4}\\
%&\overline{\theta}^t\in\Delta,~\forall t. \label{eq:prob-5}
%\end{align}
\begin{align}
\begin{aligned} \label{eq:prob-3}
\minimize_{\theta\in\Delta}~~&\sum_{t=1}^{T}  \langle f^t, \theta \rangle , ~~~\text{subject to}~~~
 \langle g_i, \theta \rangle  \leq c_i, ~~
\forall i\in[I],
\end{aligned}
\end{align}
where $\sum_{t=1}^{T}\langle f^t, \theta \rangle = \sum_{t=1}^{T}\E[ \sum_{k=0}^{L-1} f^t(s_k,a_k,s_{k+1})| \pi, P]$ is the overall loss in $T$ episodes and constraints are enforced on the budget cost $\langle g_i, \theta \rangle = \E [ \sum_{k=0}^{L-1} g_i(s_k,a_k,s_{k+1})| \pi, P]$ based on the expected budget consumption functions $g_i, \forall i \in [I]$. 
To measure the regret and the constraint violation respectively for solving \eqref{eq:prob-3} in an online setting, we define the following two metrics:
\begin{align}
\mathrm{Regret}(T):=\sum_{t=1}^{T} \big\langle f^t, \overline{\theta}^t - \overline{\theta}^* \big\rangle, ~~~\text{ and }~~ \mathrm{Violation}(T):=\Bigg\| \Bigg[ \sum_{t=1}^{T} ( \mf g^t(\overline{\theta}^t) - \mf c ) \Bigg]_{+} \Bigg\|_2,
\end{align}
where the notation $[\mf v]_+$ denotes the entry-wise application of $\max\{\cdot,0\}$ for any vector $\mf v$.  For abbreviation, let $\mf g^t(\theta) := [\dotp{g_1^t}{\theta},~\cdots,~\dotp{g_I^t}{\theta}]^\top$, 
and $\mf c := \l[ c_1,~\cdots,~c_I\r]^\top$. 

The goal is to attain a sublinear regret bound and constraint violation on this problem w.r.t. 
\textit{any fixed stationary policy $\pi$, which does not change over episodes}. In another word, we compare to the best policy $\pi^*$ in hindsight whose corresponding occupancy measure $\overline{\theta}^*\in \Delta$ solves problem \eqref{eq:prob-3}.
%\begin{align*}
%\text{minimize}~~&\sum_{t=1}^T \E [ \langle f^t, \overline\theta \rangle ] \\
%\text{subject to}~~
%&\sum_{t=1}^T  \E [ \langle g_i^t, \overline\theta \rangle ] \leq c_i,
%\qquad i\in\{1,2,\ldots,I\}, \label{eq:prob-4}\\
%&\overline{\theta}\in\Delta. 
%\end{align*}
We make the following assumption on the existence of a solution to \eqref{eq:prob-3}.
\begin{assumption}\label{as:feasible}
There exists at least one fixed policy $\pi$ such that the corresponding probability $\theta \in \Delta$ is feasible, i.e.,
$\langle g_i, \theta \rangle  \leq c_i,
i\in\{1,2,\cdots,I\}$.
\end{assumption}
Then, we assume the following boundedness on function values for simplicity of notations without loss of generality.
\begin{assumption}\label{as:bounded}
We assume the following quantities are bounded. For any $t\in\{1,2,\ldots,T\}$,  \textbf{(1)} $\sup_{s,a,s'} |f^t(s,a,s')|\leq 1$, \textbf{(2)} $\sum_{i=1}^I\sup_{s,a,s'}|g_i^t(s,a,s')|\leq 1$, \textbf{(3)} $\sum_{i=1}^I |c_i|\leq L$. 
\end{assumption}

%
%{\color{red} TBD: The assumption may change to $\sqrt{\sum_{i=1}^I \sup_{\{s_k, a_k, s_{k+1}\} \forall k= 1,...,L}|\sum_{k=1}^L g_i^t(s_k,a_k,s_{k+1}) - c_i|^2}\leq L D_2$ if we adopt averaged constraint.}

When the transition model $P$ is known and Slater's condition holds (i.e., existence of a policy which satisfies all stochastic inequality constraints with a constant $\varepsilon$-slackness), this stochastically constrained online linear program can be solved via similar methods as \citet{wei2018online, yu2017online} with a regret bound that depends polynomially on the cardinalities of state and action spaces, which is highly suboptimal especially when the state or action space is large. The main challenge we will address in this paper is to \textit{solve this problem without knowing the model $P$, or losses and constraints before making decisions, while tightening the dependency on both state and action spaces in the resulting performance bound.}

\section{Proposed Algorithm}

In this section, we introduce our proposed algorithm, namely, the upper confidence primal-dual (UCPD) algorithm, as presented in Algorithm \ref{alg:ucpd}. It adopts a primal-dual mirror descent type algorithm solving constrained problems but with an important difference: 
We maintain a confidence set via past sample trajectories, which contains the true MDP model $P$ with high probability, and choose the policy to minimize the proximal Lagrangian using the most optimistic model from the confidence set. Such an idea, known as \textit{optimism in the face of uncertainty}, is reminiscent of the upper confidence bound (UCB) algorithm \citep{auer2002finite} for stochastic multi-armed bandit (MAB) and first proposed by \citet{jaksch2010near} to obtain a near-optimal regret for reinforcement learning problems.

In the algorithm, we introduce epochs, which are back-to-back time intervals that span several episodes. We use $\ell\in\{1,2,\cdots\}$ to index the epochs and use $\ell(t)$ to denote a mapping from episode index $t$ to epoch index, indicating which epoch the $t$-th episode lives in. 
%The first epoch starts with $t=1$ and we use $E_\ell$ to denote the set of episodes included in the $\ell$-th epoch, i.e.,  all $t$ such that $\ell(t)$ are the same. 
Next, let $N_\ell(s,a)$ and $M_\ell(s,a,s')$ be two global counters which indicate the number of times the tuples $(s,a)$ and $(s,a,s')$ appear before the $\ell$-th epoch. Let 
$n_\ell(s,a)$, $m_\ell(s,a,s')$ be two local counters which indicate the number of times the tuples $(s,a)$ and $(s,a,s')$ appear in the $\ell$-th epoch. 
We start a new epoch whenever there exists $(s,a)$ such that $n_{\ell(t)}(s,a)\geq N_{\ell(t)}(s,a)$. Otherwise, set $\ell(t+1) = \ell(t)$. Such an update rule follows from \citet{jaksch2010near}. Then, we define the empirical transition model $\widehat P_\ell$ at any epoch $\ell>0$ as 
$$
\widehat P_\ell(s'|s,a) := \frac{M_{\ell}(s,a,s')}{\max\{1,N_{\ell}(s,a)\}},~~\forall s,s'\in\mc S,~a\in\mc A.
$$
As shown in Remark \ref{re:change}, introducing the notion of `epoch' is necessary to achieve an $\widetilde{\mathcal{O}} (\sqrt{T})$ constraint violation.

The next lemma shows that with high probability, the true transition model $P$ is contained in a confidence interval around the empirical one no matter what sequence of policies taken, which is adapted from Lemma 1 of \citet{neu2012adversarial}.
\begin{lemma}[Lemma 1 of \citet{neu2012adversarial}]\label{lem:confidence}
For any $\zeta\in(0,1)$, we have that with probability at least $1-\zeta$, for all epoch $\ell\leq \ell(T+1)$ and any state and action pair $(s,a)\in\mc S\times\mc A$,
$$
\Big\|P(\cdot|s,a) -\widehat P_\ell(\cdot|s,a) \Big \|_1\leq \varepsilon_\ell^{\zeta}(s,a),
$$
with the error $\varepsilon_\ell^{\zeta}(s,a)$ being%\footnote{We use $\log$ to denote the natural logarithm.}
\begin{equation}\label{eq:confidence-interval}
\varepsilon_\ell^{\zeta}(s,a):= \sqrt{\frac{2|\mc S_{k(s)+1}|\log[(T+1)|\mc S||\mc A|/\zeta]}{\max\{1,N_{\ell}(s,a)\}}},
\end{equation}
where $k(s)$ is a mapping from state $s$ to the layer that the state $s$ belongs to. %{\color{red}(Remark: Union bound has already been considered in the proof.)}
\end{lemma}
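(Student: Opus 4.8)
The plan is to reduce the claim to a standard concentration inequality for empirical distributions, and then deal with the fact that the visitation count $N_\ell(s,a)$ is itself a random, policy-dependent quantity by taking a union bound over all of its possible values.

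First I would fix a pair $(s,a)$ with $s\in\mc S_k$. Because the MDP is loop-free and layered, every next state $s'$ observed after taking $a$ at $s$ lies in $\mc S_{k+1}$, so both $P(\cdot\mid s,a)$ and every empirical estimate $\widehat P_\ell(\cdot\mid s,a)$ are supported on a set of size $d:=|\mc S_{k+1}|$. This is exactly why the claimed error carries the factor $|\mc S_{k(s)+1}|$ rather than $|\mc S|$. Crucially, since the transition kernel is independent of the policy and of the loss/constraint randomness, the successive next states observed at the (random) times $(s,a)$ is visited form an i.i.d.\ sequence $X_1,X_2,\dots$ drawn from $P(\cdot\mid s,a)$. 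Writing $\widehat P^{(n)}$ for the empirical distribution of the first $n$ such samples, the estimate used in the algorithm coincides with $\widehat P^{(N_\ell(s,a))}$.

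Next, for each \emph{fixed} count $n$ I would invoke the $\ell_1$ deviation bound for empirical distributions on a support of size $d$ (the Bretagnolle--Huber--Carol / Weissman inequality), which yields
\[
\Pr\Big[\big\|\widehat P^{(n)} - P(\cdot\mid s,a)\big\|_1 \geq \epsilon\Big] \leq 2^d \exp\!\big(-n\epsilon^2/2\big).
\]
Choosing $\epsilon = \sqrt{2 d \log[(T+1)|\mc S||\mc A|/\zeta]\,/\,n}$ drives the right-hand side down to order $\zeta/\big((T+1)|\mc S||\mc A|\big)$ once the factor $2^d=\exp(d\log 2)$ is absorbed into the logarithm. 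Since the MDP is loop-free, $(s,a)$ is visited at most once per episode, so $N_\ell(s,a)\in\{0,1,\dots,T\}$; I would therefore union-bound over the at most $T+1$ possible values of the count, over all $|\mc S||\mc A|$ pairs $(s,a)$. This makes the inequality hold simultaneously for every realized value of $N_\ell(s,a)$, hence for every epoch $\ell\leq\ell(T+1)$, and substituting $n=\max\{1,N_\ell(s,a)\}$ into $\epsilon$ recovers exactly the stated error $\varepsilon_\ell^{\zeta}(s,a)$.

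The step I expect to require the most care is precisely this reduction to a fixed-length i.i.d.\ sample in the presence of the \emph{random and policy-dependent} count $N_\ell(s,a)$. Because the policies $\pi_t$ are chosen adaptively from past data, the times at which $(s,a)$ is visited are random stopping times rather than deterministic indices, so the fixed-$n$ inequality cannot be applied directly at $n=N_\ell(s,a)$. The resolution is the union-over-all-counts argument above: conditioned on a visit to $(s,a)$, the next-state draw depends only on $P(\cdot\mid s,a)$ and not on the past, so the draws are genuinely i.i.d.; establishing the concentration \emph{uniformly} over every possible value of $n$ then licenses evaluating it at the data-dependent $N_\ell(s,a)$ while preserving the high-probability guarantee, and the final union over $(s,a)$ together with the logarithmic factor $\log[(T+1)|\mc S||\mc A|/\zeta]$ keeps the total failure probability below $\zeta$.
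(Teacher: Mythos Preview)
Your proposal is correct and follows exactly the standard argument behind this result: apply the $\ell_1$-concentration inequality for empirical distributions on a support of size $|\mc S_{k(s)+1}|$, then union bound over the at most $T+1$ possible values of the visit count (using loop-freeness so that each episode contributes at most one visit) and over all $|\mc S||\mc A|$ pairs. The paper does not supply its own proof of this lemma; it simply cites it as Lemma~1 of \citet{neu2012adversarial}, whose proof is precisely the Weissman-type inequality plus the union bound you describe.
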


\begin{algorithm}[!t]\caption{Upper-Confidence Primal-Dual (UCPD) Mirror Descent} 
	\begin{algorithmic}[1] 
		%\setstretch{0.9}
		\State {\bfseries Input:} Let $V, \alpha> 0$,~$\lambda\in[0,1)$ be some trade-off parameters. Fix $\zeta\in(0,1)$.
		\State {\bfseries Initialize:} $Q_i(1) = 0,~\forall i=1,\ldots, I$. $\theta^1(s,a,s') = 1/(|\mc S_k||\mc S_{k+1}||\mc A|), ~\forall (s,a,s')\in \mc S_k\times\mc A\times\mc S_{k+1}$. $\ell(1) = 1$. $n_{1}(s,a) = 0,~N_{1}(s,a) = 0,~\forall (s,a)\in\mc S\times\mc A$. $m_{1}(s,a,s') = 0,~M_{1}(s,a,s') = 0,~\forall (s,a,s') \in \mc S \times \mc A \times \mc S$.
		\For{$t=1, 2, 3, \ldots$}
			\State Compute $\theta^t$ via \eqref{eq:optimistic-lp} and the corresponding policy $\pi_t$ via \eqref{eq:retrieve-policy}.
			\State Sample a path $(s_0^t, a_0^t,\cdots,s_{L-1}^t,a_{L-1}^t,s_L^t)$ following the policy $\pi_t$. 
			\State Update each dual multiplier $Q_i(t)$ via \eqref{eq:Q-update} and update the local counters:
\begin{align*}
&n_{\ell(t)}(s_k^{t},a_k^{t}) = n_{\ell(t)}(s_k^{t},a_k^{t}) + 1, \\
&m_{\ell(t)}(s_k^{t},a_k^{t},s_{k+1}^{t}) = m_{\ell(t)}(s_k^{t},a_k^{t},s_{k+1}^{t}) + 1.
\end{align*} 
			\State Observe the loss function $f^{t}$ and constraint functions $\{g_i^t\}_{i=1}^I$. 
			\If{$\exists (s,a)\in\mc S\times\mc A, ~n_{\ell(t)}(s,a)\geq N_{\ell(t)}(s,a),$}
				\State \textbf{Start a new epoch:}
				\State Set $\ell(t+1) = \ell(t)+1$, and update the global counters for all $s,s'\in\mc S,~a\in\mc A$ by
%					\begin{align*}
%						&N_{\ell+1}(s,a) = N_{\ell}(s,a) + n_{\ell}(s,a),\\
%						&M_{\ell+1}(s,a,s') = M_{\ell}(s,a,s') + m_{\ell}(s,a,s').
%					\end{align*}
					\begin{align*}
						&N_{\ell(t+1)}(s,a) = N_{\ell(t)}(s,a) + n_{\ell(t)}(s,a),\\
						&M_{\ell(t+1)}(s,a,s') = M_{\ell(t)}(s,a,s') + m_{\ell(t)}(s,a,s').
					\end{align*}
				\State Construct the empirical transition 
				\begin{align*}
				\widehat P_{\ell(t+1)}(s'|s,a) := \frac{M_{\ell(t+1)}(s,a,s')}{\max\{1,N_{\ell(t+1)}(s,a)\}}, \forall(s,a, s').
				\end{align*}

				\State Initialize $n_{\ell(t+1)}(s,a) = 0,~m_{\ell(t+1)}(s,a,s') = 0,~\forall (s,a,s')\in\mc S\times\mc A\times \mc S$.
			\Else 
				\State Set $\ell(t+1)=\ell(t)$.
			\EndIf
        \EndFor
	\end{algorithmic}\label{alg:ucpd}
\end{algorithm}

\subsection{Computing Optimistic Policies}

Next, we show how to compute the policy at each episode. Formally, we introduce a new occupancy measure at episode $t$, namely $\theta^t(s,a,s'),~s,s'\in\mc S,~a\in\mc A$.  It should be emphasized that this is different from the $\overline\theta^t(s,a,s')$ defined in the previous section as $\theta^t(s,a,s')$ is chosen by the decision maker at episode $t$ to construct the policy.
In particular, $\theta^t(s,a,s')$ does not have to satisfy the local balance equation \ref{item:cond3}. 
Once getting $\theta^t(s,a,s')$ (which will be detailed below), we construct the policy as follows
\begin{equation}\label{eq:retrieve-policy}
\pi_t(a|s)= \frac{\sum_{s'}\theta^t(s,a,s')}{\sum_{s',a}\theta^t(s,a,s')},~\forall a\in\mc A,~s\in\mc S.
\end{equation}
Next, we demonstrate the proposed method computing $\theta^t(s,a,s')$. First, we introduce an online dual multiplier $Q_i(t)$ for each constraint in \eqref{eq:prob-3}, which is $0$ when $t=1$ and is updated as follows for $t \geq 2$,
\begin{equation}\label{eq:Q-update}
Q_i(t) = \max\{Q_i(t-1) + \dotp{g_i^{t-1}}{\theta^t} - c_i,~0\}.
\end{equation}
At each episode, we compute the new occupancy measure $\theta^t(s,a,s')$ solving an optimistic regularized linear program (ORLP) with tuning parameters $\lambda,~V,~\alpha>0$. Specifically, we update $\theta^t$ for all $t\geq 2$ by solving
\begin{align}
\begin{aligned}\label{eq:optimistic-lp}
\theta^{t}=\argmin_{\theta\in\Delta(\ell(t), \zeta)} ~~&\Big\langle Vf^{t-1} + \sum_{i=1}^IQ_i(t-1)g_i^{t-1}, \theta \Big\rangle + \alpha D(\theta, \widetilde{\theta}^{t-1}).
\end{aligned}
\end{align}
For $t=1$, we let $\theta^1(s,a,s') = 1/(|\mathcal{S}_k||\mathcal{S}_{k+1}||\mathcal{A}|)$, $\forall (s,a,s')\in \mathcal{S}_k\times \mathcal{A} \times \mathcal{S}_{k+1}$.
The above updating rule introduces extra notations $\Delta(\ell(t), \zeta)$, $\widetilde{\theta}^{t-1}$, and $D(\cdot, \cdot)$ that will be elaborated below.  Specifically,  we denote by $D(\cdot, \cdot)$ the unnormalized Kullback-Leibler(KL) divergence for two different occupancy measures $\theta$ and $\theta'$, which is defined as 
\begin{align}\label{eq:un-KL}
D(\theta,\theta'):= \sum_{s,a,s'}[\theta(s,a,s')\log \frac{\theta(s,a,s')}{\theta'(s,a,s')} - \theta(s,a,s') + \theta'(s,a,s')], \quad \forall \theta, \theta'.
\end{align}
In addition, for $ \forall k = \{0,\ldots, L-1\}$ and $ \forall s\in \mc S_k, a\in \mc A, s' \in \mc S_{k+1}$, we compute $\widetilde{\theta}^{t-1}$ via 
$$
\widetilde{\theta}^{t-1}(s,a,s') =(1-\lambda) \theta^{t-1}(s,a,s') + \frac{\lambda}{ |\mc S_k | |\mc S_{k+1} | |\mc A|},
$$
where $0 \leq \lambda \leq 1$. This equation introduces a probability mixing, pushing the update away from the boundary and encouraging explorations. 

Furthermore, since for any epoch $\ell > 0$, we can compute the empirical transition model $\widehat{P}_{\ell}$ with the confidence interval size $\varepsilon_{\ell}^\zeta$ as defined in \eqref{eq:confidence-interval}, we let every $\theta\in \Delta(\ell,\zeta)$ satisfy that 
\begin{align}
\bigg\|\frac{\theta(s,a,\cdot)}{\sum_{s'}\theta(s,a,s')}  - \widehat P_\ell(\cdot|s,a)\bigg\|_1\leq 
\varepsilon_\ell^{\zeta}(s,a), ~~\forall s \in \mc S, a \in \mc A, \label{eq:approx_local_balance}
\end{align}
such that we can define the feasible set $\Delta(\ell,\zeta)$ for the optimization problem \eqref{eq:optimistic-lp} as follows
\begin{align}\label{eq:subpro_fea_set}
\Delta(\ell,\zeta):= \{ \theta: \theta \text{ satisfies } \ref{item:cond1}, \ref{item:cond2}, \text{ and } \eqref{eq:approx_local_balance} \}.    
\end{align}
% Therefore, we know that $\theta^t \in \Delta(\ell(t), \zeta)$ at the epoch $\ell(t)$. On the other hand, according to Lemma \ref{lem:confidence}, we have that with probability at least $1-\zeta$, for all epoch $\ell$, $\Delta \subseteq \Delta(\ell, \zeta)$ holds.
% By %Theorem 4.2 of 
%  \citet{rosenberg2019online}, the problem \eqref{eq:optimistic-lp} is essentially a linear programming that enjoys a quasi-closed-from solution. We omitted the details here for brevity.
By this definition, we know that $\theta^t \in \Delta(\ell(t), \zeta)$ at the epoch $\ell(t)$. On the other hand, according to Lemma \ref{lem:confidence}, we have that with probability at least $1-\zeta$, for all epoch $\ell$, $\Delta \subseteq \Delta(\ell, \zeta)$ holds.
By %Theorem 4.2 of 
 \citet{rosenberg2019online}, the problem \eqref{eq:optimistic-lp} is essentially a linear programming with a special structure that can be solved efficiently (see details in Section \ref{sec:linear_solver} of the supplementary material).

\section{Main Results} \label{sec:main}

Before presenting our results, we first make assumption on the existence of Lagrange multipliers. We define a partial average function starting from any time slot $t$ as $f^{(t,\tau)}: = \frac{1}{\tau}\sum_{j=0}^{\tau-1}f^{t+j}$. Then, we consider the following static optimization problem (recalling $g_i:=\E[g_i^t]$)
\begin{equation}\label{eq:partial-static-prob}
\minimize_{\theta\in\Delta} ~\langle f^{(t,\tau)}, \theta\rangle~~~\text{s.t.}~~~\dotp{g_i}{\theta}\leq c_i,~\forall i \in \{1, \ldots, I\}.
\end{equation}
Denote the solution to this program as $\theta^*_{t,\tau}$. 
Define the Lagrangian dual function of \eqref{eq:partial-static-prob} as $q^{(t,\tau)}(\eta):= \min_{\theta\in\Delta} ~f^{(t,\tau)}(\theta) + \sum_{i=1}^I\eta_i(g_i(\theta)-c_i)$, where $\eta = [\eta_1, \ldots, \eta_I]^\top \in\mathbb{R}^I$ is a dual variable. We are ready to state our assumption:
\begin{assumption}\label{as:selm}
For any time slot $t$ and any time period $\tau$, the set of primal optimal solution to \eqref{eq:partial-static-prob} is non-empty. Furthermore, the set of Lagrange multipliers, which is 
$\mathcal{V}_{t,\tau}^* := \text{argmax}_{\eta\in\mathbb{R}^I_+}q^{(t,\tau)}(\eta)$, is non-empty and bounded. Any vector in $\mathcal{V}^*_{t,\tau}$ is called a Lagrange multiplier associated with \eqref{eq:partial-static-prob}.
Furthermore, let $B>0$ be a constant such that for any $t\in\{1,\ldots,T\}$ and $\tau=\sqrt{T}$, the dual optimal set 
$\mathcal{V}_{t,\tau}^*$ defined above satisfies $\max_{\eta\in\mathcal{V}_{t,\tau}^*}\|\eta\|_2\leq B$.
\end{assumption}

As is discussed in Section \ref{sec:A0} of the supplementary material, Assumption \ref{as:selm} proposes a weaker condition than the Slater condition commonly adopted in previous constrained online learning works.  The following lemma further shows the relation between Assumption \ref{as:selm} and the dual function.
\begin{lemma}\label{lem:leb}
Suppose Assumption \ref{as:selm} holds, then for any $t\in\{1,\ldots,T\}$ and $\tau=\sqrt{T}$, there exists constants $\vartheta,~\sigma>0$ such that for any $\eta\in\R^I$ satisfying \footnote{We let $\dist( \eta, \mathcal{V}_{t,\tau}^*) := \argmin_{\eta' \in \mathcal{V}_{t,\tau}^*} \|\eta - \eta' \|_2$ as Euclidean distance between a point $\eta$ and the set $\mathcal{V}_{t,\tau}^*$.}  $\dist(\eta, \mathcal{V}_{t,\tau}^*)\geq \vartheta$, we have 
\begin{equation*}
q^{(t,\tau)}(\eta^*_{t,\tau })-q^{(t,\tau)}(\eta) \geq \sigma \cdot \dist(\eta, \mathcal{V}_{t,\tau}^*), ~~\forall ~ \eta^*_{t,\tau } \in \mathcal{V}_{t,\tau}^*.
\end{equation*}
\end{lemma}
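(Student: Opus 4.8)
The plan is to read this as a sharp-growth (linear error-bound) property of the concave dual function and to prove it by reducing to a single fixed ``sphere'' around the optimal set and then propagating the bound outward by concavity. Throughout write $q := q^{(t,\tau)}$ and $\mc V^* := \mc V_{t,\tau}^*$, and let $q^* := q(\eta^*)$ denote the common optimal value attained at every $\eta^* \in \mc V^*$. First I would record the structural facts I need. Since $q(\eta) = \min_{\theta \in \Delta}\big[ f^{(t,\tau)}(\theta) + \sum_{i} \eta_i (g_i(\theta) - c_i)\big]$ is a pointwise minimum of functions that are affine in $\eta$, it is concave; and because $\Delta$ is compact the minimum is finite for every $\eta$, so $q$ is a finite-valued concave function and hence continuous. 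By Assumption \ref{as:selm} the set $\mc V^*$ is nonempty and bounded, and as the argmax of a concave function over the convex set $\mathbb{R}^I_+$ it is convex; being also closed (it equals $\{\eta \in \mathbb{R}^I_+ : q(\eta) = q^*\}$ with $q$ continuous) it is compact. I would carry out the argument on the dual-feasible region $\mathbb{R}^I_+$, which is the relevant domain since the multiplier iterates $Q_i(t)$ produced by \eqref{eq:Q-update} are nonnegative.

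Next I would fix any $\vartheta > 0$ and establish a uniform strictly-positive gap on the sphere $C_\vartheta := \{\eta \in \mathbb{R}^I_+ : \dist(\eta, \mc V^*) = \vartheta\}$. This set is closed, and it is bounded because $\|\eta\|_2 \le \max_{\eta' \in \mc V^*}\|\eta'\|_2 + \vartheta$ whenever $\dist(\eta,\mc V^*)=\vartheta$; hence $C_\vartheta$ is compact. Every $\eta \in C_\vartheta$ lies at positive distance from $\mc V^*$, so $\eta \notin \mc V^*$ and therefore $q(\eta) < q^*$ strictly, because $\mc V^*$ is exactly the maximizer set. The map $\eta \mapsto q^* - q(\eta)$ is continuous and strictly positive on the compact set $C_\vartheta$, so it attains a minimum $\delta > 0$ there. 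I then set $\sigma := \delta / \vartheta$.

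Finally I would propagate this bound to all $\eta$ with $d := \dist(\eta, \mc V^*) \ge \vartheta$ using concavity. Let $\eta^* \in \mc V^*$ be the Euclidean projection of $\eta$ onto $\mc V^*$ and set $\bar\eta := (1 - \vartheta/d)\,\eta^* + (\vartheta/d)\,\eta$. Since $\mathbb{R}^I_+$ is convex and contains both $\eta$ and $\eta^*$, we have $\bar\eta \in \mathbb{R}^I_+$; moreover the projection of $\bar\eta$ onto the convex set $\mc V^*$ is still $\eta^*$ (the obtuse-angle optimality condition $\langle \bar\eta - \eta^*, \cdot - \eta^*\rangle \le 0$ is preserved along the segment), so $\dist(\bar\eta, \mc V^*) = \vartheta$ and thus $\bar\eta \in C_\vartheta$. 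Concavity gives $q(\bar\eta) \ge (1-\vartheta/d)\,q^* + (\vartheta/d)\,q(\eta)$, which rearranges to $q^* - q(\eta) \ge (d/\vartheta)\,(q^* - q(\bar\eta)) \ge (d/\vartheta)\,\delta = \sigma\, d$, i.e. exactly the claimed inequality $q^* - q(\eta) \ge \sigma\,\dist(\eta, \mc V^*)$.

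The step I expect to be the main obstacle is the strict positivity $\delta > 0$ on $C_\vartheta$: it is the only place where Assumption \ref{as:selm} does real work, since compactness of $C_\vartheta$ — and hence the existence of a positive \emph{minimum} rather than a mere positive infimum — relies on the boundedness of $\mc V^*$ guaranteed there, while the strictness $q(\eta) < q^*$ for $\eta \notin \mc V^*$ relies on $\mc V^*$ being precisely the maximizer set. Two points deserve care. I would state the result on $\mathbb{R}^I_+$ rather than on all of $\mathbb{R}^I$, since for $\eta$ with negative components $q(\eta)$ can exceed $q^*$ and the inequality would fail; this matches the usage in the sequel, where $\eta$ is always a nonnegative multiplier. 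I would also note that the linear constraints \ref{item:cond1}--\ref{item:cond3} make $\Delta$ a polytope, so $q$ is in fact piecewise linear with finitely many pieces; this qualitative fact is not needed for the linear growth above, but it confirms that the sharp first-order rate, rather than a weaker one, is the correct order.
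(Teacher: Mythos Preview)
Your proof is correct. The paper does not actually prove this lemma in-text; in the appendix it simply cites Lemma~5 of \citet{wei2019online}. Your compactness-plus-concavity argument---extract a strictly positive gap $\delta$ on the compact ``sphere'' $C_\vartheta = \{\eta \in \mathbb{R}^I_+ : \dist(\eta,\mc V^*)=\vartheta\}$ using boundedness of $\mc V^*$, then push the bound outward along the segment to $\eta^*$ via $q(\bar\eta) \ge (1-\vartheta/d)q^* + (\vartheta/d)q(\eta)$---is the standard route to error bounds of this kind and is essentially what the cited reference does. Your observation that the statement should be read on $\mathbb{R}^I_+$ rather than all of $\mathbb{R}^I$ is sharper than the paper's own wording and matches the only downstream use, where $\eta = \mathbf{Q}(t)/V \ge 0$.
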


Based on the above assumptions and lemmas, we present results of the regret and constraint violation. 
\begin{theorem}\label{thm:main}
Consider any fixed horizon $T\geq|\mc S||\mc A|$ with $ |\mc S |, |\mc A| > 1$. Suppose Assumption \ref{as:feasible}, \ref{as:bounded}, \ref{as:selm} hold and there exist absolute constants $\overline{\sigma}$ and $\overline \vartheta$ such that 
$\sigma\geq\overline{\sigma}$ and $\vartheta\leq\overline{\vartheta}$ for all $\sigma,~\vartheta$ in Lemma \ref{lem:leb} over $t\in\{1,\ldots, T\}$ and $\tau=\sqrt{T}$.
If setting $\alpha = LT,~V=L\sqrt{T}$, $\lambda=1/T$ and $\zeta \in  (0,1/(4+8L/\overline{\sigma})]$ in Algorithm \ref{alg:ucpd}, with probability at least $1-4\zeta$, we have
\begin{align*}
\mathrm{Regret}(T) &\leq  \widetilde{\mc O} \Big( L|\mathcal S| \sqrt{T|\mathcal A|} \Big), \\ \mathrm{Violation}(T) &\leq  \widetilde{\mc O} \Big( L|\mathcal S| \sqrt{ T|\mathcal A|} \Big),
\end{align*}
where the notation $\widetilde{O}(\cdot)$ absorbs the factors $\log^{3/2}(T/\zeta)$ and $ \log (T|\mathcal S| |\mathcal A|/\zeta)$.
\end{theorem}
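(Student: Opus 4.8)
The plan is to run a single drift-plus-penalty (primal-dual) analysis on top of the upper-confidence estimates, conditioning throughout on the event of Lemma~\ref{lem:confidence} that $P$ lies in the confidence set of every epoch, so that $\Delta\subseteq\Delta(\ell,\zeta)$ holds simultaneously for all $\ell$ (costing one $\zeta$ in the failure probability). On this event the reference $\overline\theta^*$ and the Lagrangian-optimal occupancies of the window problems~\eqref{eq:partial-static-prob} are admissible for every subproblem~\eqref{eq:optimistic-lp}, which is precisely what lets me use them as comparators. I would then split each regret term as $\langle f^t,\overline\theta^t-\overline\theta^*\rangle=\langle f^t,\overline\theta^t-\theta^t\rangle+\langle f^t,\theta^t-\overline\theta^*\rangle$ and each constraint term $\langle g_i^t,\overline\theta^t\rangle-c_i$ into a model-error piece $\langle g_i^t,\overline\theta^t-\theta^t\rangle$ and an algorithmic piece $\langle g_i^t,\theta^t\rangle-c_i$.

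The model-error pieces compare occupancy measures induced by the \emph{same} policy $\pi_t$ under $P$ versus $\widehat P_{\ell(t)}$; bounding $\|\overline\theta^t-\theta^t\|_1$ layer-by-layer by the confidence widths $\varepsilon_{\ell(t)}^\zeta$ and summing with the epoch-doubling counting argument of \citet{jaksch2010near} and \citet{rosenberg2019online} is exactly where the $L|\mc S|\sqrt{|\mc A|T}$ rate is produced. For the algorithmic pieces I would apply the three-point property of the KL-proximal step~\eqref{eq:optimistic-lp}: for any $\theta\in\Delta(\ell(t),\zeta)$, $\langle Vf^{t-1}+\sum_iQ_i(t-1)g_i^{t-1},\theta^t-\theta\rangle\le\alpha[D(\theta,\wt\theta^{t-1})-D(\theta,\theta^t)]+\text{stability}$, with the mixing $\lambda=1/T$ keeping $\wt\theta^{t-1}$ off the boundary so the KL terms telescope cleanly from $\theta^1$. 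Squaring the dual update~\eqref{eq:Q-update} gives $\tfrac12\|Q(t)\|^2-\tfrac12\|Q(t-1)\|^2\le\langle Q(t-1),\mf g^{t-1}(\theta^t)-\mf c\rangle+\tfrac12\|\mf g^{t-1}(\theta^t)-\mf c\|^2$, and combining this with the proximal inequality at $\theta=\overline\theta^*$ yields the drift-plus-penalty bound $V\langle f^{t-1},\theta^t-\overline\theta^*\rangle+\tfrac12(\|Q(t)\|^2-\|Q(t-1)\|^2)\le\alpha[D(\overline\theta^*,\wt\theta^{t-1})-D(\overline\theta^*,\theta^t)]+O(V^2+\|Q(t-1)\|^2)$.

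The crux, and the step I expect to be hardest, is turning this into a \emph{high-probability} bound on $\max_t\|Q(t)\|_2$ despite the \emph{moving} sets $\Delta(\ell,\zeta)$ and the lazy epochs. I would run the drift argument over windows of length $\tau=\sqrt T$, comparing $Q(t)/V$ against the multiplier set $\mc V_{t,\tau}^*$ of~\eqref{eq:partial-static-prob} and invoking the error-bound growth of Lemma~\ref{lem:leb}: once $\dist(Q(t)/V,\mc V_{t,\tau}^*)\ge\vartheta$, the window drift is at most $-\Theta(\sigma\,\dist(Q(t)/V,\mc V_{t,\tau}^*))$, so $\|Q(t)\|_2$ sees strong negative drift beyond $\Theta(V(B+\vartheta))$. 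Crucially, feasibility of the comparators in every $\Delta(\ell,\zeta)$ on the good event means the moving sets do not invalidate this comparison; the laziness enters only through the already-summable confidence widths. A Hoeffding/Azuma concentration of the window drift together with a union bound over windows then gives $\max_t\|Q(t)\|_2=\wt{\mc O}(V)=\wt{\mc O}(L\sqrt T)$, where the absolute constants $\overline\sigma,\overline\vartheta$ and the restriction $\zeta\le1/(4+8L/\overline\sigma)$ guarantee the negative drift dominates the per-step fluctuations.

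Finally I would assemble the pieces. Telescoping~\eqref{eq:Q-update} gives $\sum_{t=1}^T(\langle g_i^t,\theta^{t+1}\rangle-c_i)\le Q_i(T+1)\le\|Q(T+1)\|_2=\wt{\mc O}(L\sqrt T)$; adding back the model-error pieces (UCB potential), the one-step stability $\langle g_i^t,\theta^t-\theta^{t+1}\rangle$ from mirror-descent iterate closeness, and a martingale bound on $\sum_t\langle g_i^t-g_i,\theta^t\rangle$ (the $g_i^t$ are independent with mean $g_i$ and $\theta^t$ is predictable) converts this into $\mathrm{Violation}(T)=\wt{\mc O}(L|\mc S|\sqrt{|\mc A|T})$. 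Summing the drift-plus-penalty inequality telescopes $\|Q\|^2$ and the KL terms, so $V\sum_t\langle f^{t-1},\theta^t-\overline\theta^*\rangle\le\alpha\,D(\overline\theta^*,\theta^1)+O(V^2T+\sum_t\|Q(t-1)\|^2)$; dividing by $V=L\sqrt T$, inserting the $\wt{\mc O}(L\sqrt T)$ multiplier bound, choosing $\alpha=LT$ to balance the KL term, and re-adding the model-error piece give $\mathrm{Regret}(T)=\wt{\mc O}(L|\mc S|\sqrt{|\mc A|T})$. The total failure probability $4\zeta$ collects the confidence event, the multiplier-drift concentration, and the martingale bounds.
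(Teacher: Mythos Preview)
Your overall architecture matches the paper's: the same decomposition of regret and violation into a model-error piece $\|\overline\theta^t-\theta^t\|_1$ and an algorithmic piece, the same drift-plus-penalty skeleton built from the three-point KL inequality and the squared dual update, and the same window-based negative-drift argument for $\|\mathbf Q(t)\|_2$ via the error-bound Lemma~\ref{lem:leb}. Two specific steps, however, would not close as written.

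First, the per-step drift-plus-penalty residual you record, $O(V^2+\|\mathbf Q(t-1)\|^2)$, is too loose and is not what the combination actually produces. When you add the three-point inequality at $\overline\theta^*$ to the squared dual update, the $\langle \mathbf Q(t-1),\mf g^{t-1}(\theta^t)\rangle$ terms cancel, and what survives on the right is $\langle \mathbf Q(t-1),\mf g^{t-1}(\overline\theta^*)-\mf c\rangle+O(L^2)$, \emph{not} $O(\|\mathbf Q(t-1)\|^2)$. Summing your residual over $T$ steps and dividing by $V=L\sqrt T$ yields a term of order $T^{3/2}$, which destroys the bound. The paper (Lemma~\ref{lem:regret_primal_bound}) keeps the cross term $\sum_t\langle \mathbf Q(t),\mf g^t(\overline\theta^*)-\mf c\rangle$ intact and then exploits that it is a supermartingale, since $\overline\theta^*$ is feasible for the \emph{expected} constraints so $\E[\mf g^t(\overline\theta^*)-\mf c\mid\mc F^{t-1}]\le 0$; a concentration with increments controlled by $2L\|\mathbf Q(t)\|_2$ and the high-probability bound on $\|\mathbf Q\|_2$ gives $\sum_t\langle \mathbf Q(t),\mf g^t(\overline\theta^*)-\mf c\rangle=\wt{\mc O}(L\omega\sqrt T)$ (Lemma~\ref{lem:term_II}). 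Your martingale instinct is the right one---you applied it to $\sum_t\langle g_i^t-g_i,\theta^t\rangle$, which the violation bound does not need, rather than to this cross term, which the regret bound does.

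Second, ``mirror-descent iterate closeness'' alone does not control $\sum_t\|\theta^{t+1}-\theta^t\|_1$, because the feasible set $\Delta(\ell(t),\zeta)$ \emph{changes} at epoch boundaries and the usual stability argument requires $\theta^t$ and $\theta^{t+1}$ to be projections onto the same set. The paper's Lemma~\ref{lem:term_III_th_diff} splits the sum: within an epoch the stability argument gives $\|\theta^{t+1}-\theta^t\|_1\le O((V+\|\mathbf Q(t)\|_2)/\alpha)$, while at an epoch change one can only use the trivial bound $2L$. The doubling rule then limits the number of epochs to $\wt{\mc O}(\sqrt{|\mc S||\mc A|T})$ (Lemma~\ref{lem:epoch_bound}), so the epoch-change contribution is $\wt{\mc O}(L\sqrt{|\mc S||\mc A|T})$. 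This is exactly where the laziness of the confidence-set updates matters for the violation bound---through the epoch count, not merely through the summable confidence widths you invoked for the $\|\mathbf Q\|_2$ drift.
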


%For unconstrained online episodic MDPs with the unknown transition and adversarial losses, the recent work \citet{rosenberg2019online} achieves a tight regret bound of $\widetilde{\mathcal{O}}(L|\mc S|\sqrt{|\mc A|T})$, almost matches the lower bound $\Omega(\sqrt{ L|\mathcal{S}| |\mathcal{A}|T})$ \citep{jaksch2010near} up to an $\mathcal{O}(\sqrt{L|\mathcal{S}|})$ factor. Comparing to aforementioned works, our proposed algorithm can maintain the $\widetilde{\mathcal{O}}(\sqrt{ L|\mathcal{S}| |\mathcal{A}|T})$ regret bound and also achieve a constraint violation bound of $\widetilde{\mathcal{O}}(\sqrt{ L|\mathcal{S}| |\mathcal{A}|T})$ under the setting of the unknown transition model, the adversarial losses and stochastic constraints.

\section{Theoretical Analysis} 
In this section, we provide lemmas and proof sketches for the regret bound and constraint violation bound in Theorem \ref{thm:main}. The detailed proofs for Section \ref{sec:regret} and Section \ref{sec:constraint} are in Section \ref{sec:A} and Section \ref{sec:B} of the supplementary material respectively. 

\subsection{Proof of Regret Bound} \label{sec:regret}

\begin{lemma}\label{lem:term_I} The updating rules in Algorithm \ref{alg:ucpd} ensure that with probability at least $1-2\zeta$,  
\begin{align*}
\sum_{t=1}^{T} \big\|\theta^t - \overline{\theta}^t \big\|_1 &\leq
(\sqrt 2+1)L|\mc S|\sqrt{2T|\mc A|\log \frac{2T|\mc S||\mc A|}{\zeta}} + 2L^2\sqrt{2T\log\frac{L}{\zeta}}. 
\end{align*} 
\end{lemma}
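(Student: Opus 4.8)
The plan is to exploit the fact that the optimistic measure $\theta^t$ and the true measure $\overline\theta^t$ are generated by the \emph{same} policy $\pi_t$ but under \emph{different} transition kernels, so their $L_1$ discrepancy is controlled entirely by the confidence width $\varepsilon_{\ell(t)}^\zeta$. First I would introduce the induced kernel $P^t(s'|s,a):=\theta^t(s,a,s')/\sum_{s''}\theta^t(s,a,s'')$; since $\theta^t$ satisfies the flow constraints \ref{item:cond1}, \ref{item:cond2} and, by definition of $P^t$, the local balance with $P^t$, and since the policy recovered from $\theta^t$ via \eqref{eq:retrieve-policy} is exactly $\pi_t$, the measure $\theta^t$ is precisely the occupancy measure of $(\pi_t,P^t)$. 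The membership $\theta^t\in\Delta(\ell(t),\zeta)$ together with \eqref{eq:approx_local_balance} gives $\|P^t(\cdot|s,a)-\widehat P_{\ell(t)}(\cdot|s,a)\|_1\le\varepsilon_{\ell(t)}^\zeta(s,a)$, and Lemma~\ref{lem:confidence} supplies, on an event of probability at least $1-\zeta$ holding uniformly over all epochs and all $(s,a)$, the bound $\|\widehat P_{\ell(t)}(\cdot|s,a)-P(\cdot|s,a)\|_1\le\varepsilon_{\ell(t)}^\zeta(s,a)$. The triangle inequality then yields $\|P^t(\cdot|s,a)-P(\cdot|s,a)\|_1\le 2\varepsilon_{\ell(t)}^\zeta(s,a)$.

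Next I would invoke the layer-by-layer ``simulation'' bound for loop-free occupancy measures: writing both $\theta^t$ and $\overline\theta^t$ as the shared policy $\pi_t$ composed with the state-visitation distributions propagated through $P^t$ and $P$ respectively, the error in the layer-$(k{+}1)$ state distribution obeys a recursion whose inhomogeneous term is $\sum_{s\in\mathcal S_k,a}\overline\theta^t(s,a)\|P^t(\cdot|s,a)-P(\cdot|s,a)\|_1$, where $\overline\theta^t(s,a):=\sum_{s'}\overline\theta^t(s,a,s')$. Unrolling this over the $L$ layers and adding the direct transition-error contribution gives, up to the constant absorbed from the factor $2$ above,
\begin{align*}
\big\|\theta^t-\overline\theta^t\big\|_1 \;\lesssim\; L\sum_{s,a}\overline\theta^t(s,a)\,\varepsilon_{\ell(t)}^\zeta(s,a).
\end{align*}

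Summing over $t$, I would decompose $\overline\theta^t(s,a)=X^t(s,a)+\big(\overline\theta^t(s,a)-X^t(s,a)\big)$, where $X^t(s,a)$ indicates that the sampled path visits $(s,a)$ at episode $t$; since $\mathbb{E}[X^t(s,a)\mid\mathcal F_{t-1}]=\overline\theta^t(s,a)$, the remainder is a martingale difference, and an Azuma--Hoeffding estimate carried out layerwise with a union bound over the $L$ layers (whence the $\log(L/\zeta)$) converts expected visitation into realized counts and contributes the lower-order term $2L^2\sqrt{2T\log(L/\zeta)}$. For the leading term, the realized part satisfies $\sum_t X^t(s,a)\varepsilon_{\ell(t)}^\zeta(s,a)=\sum_\ell n_\ell(s,a)\varepsilon_\ell^\zeta(s,a)$ because the width is frozen within an epoch; here I would apply the epoch-doubling pigeonhole lemma of \citet{jaksch2010near}, namely $\sum_\ell n_\ell(s,a)/\sqrt{\max\{1,N_\ell(s,a)\}}\le(\sqrt2+1)\sqrt{N_{\mathrm{tot}}(s,a)}$. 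Substituting $\varepsilon_\ell^\zeta(s,a)=\sqrt{2|\mathcal S_{k(s)+1}|\log(\cdots)/\max\{1,N_\ell(s,a)\}}$, regrouping by layer, and applying Cauchy--Schwarz with $\sum_{s\in\mathcal S_k,a}N_{\mathrm{tot}}(s,a)=T$ and $\sum_k\sqrt{|\mathcal S_k||\mathcal S_{k+1}|}\le|\mathcal S|$ produces the main term $(\sqrt2+1)L|\mathcal S|\sqrt{2T|\mathcal A|\log(2T|\mathcal S||\mathcal A|/\zeta)}$. A final union bound over the confidence event and the martingale event gives the stated probability $1-2\zeta$.

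The main obstacle I anticipate is the bookkeeping that links the simulation argument to the lazy epoch structure. One must verify carefully that $\theta^t$ is genuinely realized as the $\pi_t$-occupancy of the valid kernel $P^t$ so that the same-policy propagation applies even though $\theta^t$ violates the true local balance \ref{item:cond3}, and that the pigeonhole counting remains valid while $\varepsilon_{\ell(t)}^\zeta$ is held constant across an epoch but the doubling rule $n_\ell(s,a)\ge N_\ell(s,a)$ governs the growth of $N_\ell$. Equally delicate is keeping the layer-dependent factor $\sqrt{|\mathcal S_{k(s)+1}|}$ inside $\varepsilon$ aligned with the Cauchy--Schwarz step so that the state-space dependence collapses to a single $|\mathcal S|$ rather than $|\mathcal S|^{3/2}$, which is exactly what makes the bound nearly optimal in $|\mathcal S|$.
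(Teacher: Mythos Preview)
Your proposal is correct and follows essentially the same route as the paper: the same-policy simulation recursion (the paper's Lemma~\ref{lem:err_shr}), the indicator/martingale split of the true visitation with a layer-wise Azuma bound, and the Jaksch--Jaksch--Ortner pigeonhole sum combined with Cauchy--Schwarz over layers to collapse $\sum_k\sqrt{|\mathcal S_k||\mathcal S_{k+1}|}$ into a single $|\mathcal S|$. The only bookkeeping point to watch is that the paper performs the martingale step with the weight $\xi^t(s,a)=\|P^t(\cdot|s,a)-P(\cdot|s,a)\|_1\le 2$ \emph{before} invoking the confidence width, whereas you first bound by $\varepsilon_{\ell(t)}^\zeta$ and then decompose; doing it your way the martingale increments carry the unbounded-in-$|\mathcal S|$ factor $\varepsilon$, so to recover the exact lower-order constant $2L^2\sqrt{2T\log(L/\zeta)}$ you should swap that order.
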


\begin{lemma} \label{lem:regret_primal_bound} The updating rules in Algorithm \ref{alg:ucpd} ensure that with probability at least $1-\zeta$,
\begin{align*}
\sum_{t=1}^{T} \big\langle f^t,\theta^t - \overline{\theta}^*\big \rangle  \leq& \frac{ 4 L^2 T  + (\lambda T+1)\alpha  L \log |\mathcal{S}|^2 |\mathcal{A}|}{V}    + 2\lambda LT  + \frac{LT}{2\alpha}+ \frac{1}{V}\sum_{t=1}^{T} \l\langle \mathbf{Q}(t), \mathbf{g}^t(\overline{\theta}^*)- \mathbf{c} \r\rangle  . 
\end{align*}
\end{lemma}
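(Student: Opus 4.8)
The plan is to carry out the standard online mirror descent (OMD) regret analysis for the primal sequence $\{\theta^t\}$ generated by the proximal update \eqref{eq:optimistic-lp}, adapted to three nonstandard features: the time-varying feasible sets $\Delta(\ell(t),\zeta)$, the probability mixing in the Bregman center $\widetilde{\theta}^{t-1}$, and the dual-weighted constraint term in the linear objective. First I would fix the randomness so that the comparator is admissible throughout: by Lemma \ref{lem:confidence}, with probability at least $1-\zeta$ one has $\Delta\subseteq\Delta(\ell,\zeta)$ for every epoch $\ell$, and since $\overline{\theta}^*\in\Delta$, the optimal occupancy measure lies in the feasible set of every instance of \eqref{eq:optimistic-lp}. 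This is the origin of the $1-\zeta$ confidence in the statement and legitimizes using $\overline{\theta}^*$ as the comparison point in each step's three-point inequality.

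Because $\theta^t$ is computed from the delayed loss $f^{t-1}$ while $f^t$ is revealed only afterwards, I cannot relate consecutive adversarial losses directly; instead I would analyze the step producing $\theta^{t+1}$ from $f^t$ and use the shift decomposition
\[
\langle f^t,\theta^t-\overline{\theta}^*\rangle=\langle f^t,\theta^t-\theta^{t+1}\rangle+\langle f^t,\theta^{t+1}-\overline{\theta}^*\rangle.
\]
For the second term I would invoke the generalized-Pythagorean (three-point) identity for the $\alpha D(\cdot,\widetilde{\theta}^t)$-proximal minimizer $\theta^{t+1}$, which for the feasible comparator $\overline{\theta}^*$ gives $\langle Vf^t+\sum_iQ_i(t)g_i^t,\theta^{t+1}-\overline{\theta}^*\rangle\le\alpha[D(\overline{\theta}^*,\widetilde{\theta}^t)-D(\overline{\theta}^*,\theta^{t+1})-D(\theta^{t+1},\widetilde{\theta}^t)]$; dividing by $V$ and moving the dual inner product to the right isolates $\langle f^t,\theta^{t+1}-\overline{\theta}^*\rangle$.

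Summing over $t$, I would then dispatch the resulting contributions. (i) Bregman telescoping: after rewriting each center $\widetilde{\theta}^t=(1-\lambda)\theta^t+\lambda\theta^1$ in terms of $\theta^t$, the $D(\overline{\theta}^*,\cdot)$ differences telescope to the initialization $D(\overline{\theta}^*,\theta^1)\le L\log(|\mathcal S|^2|\mathcal A|)$ (using that $\theta^1$ is uniform and $\sum_{s,a,s'}\overline{\theta}^*=L$) plus one mixing-gap per round; since $\widetilde{\theta}^t\ge(1-\lambda)\theta^t$ entrywise each gap is $O(\lambda L)$, and together with the scaling $\alpha/V$ these yield the combined term $\tfrac{(\lambda T+1)\alpha L\log(|\mathcal S|^2|\mathcal A|)}{V}$. (ii) Primal stability: I would bound $\langle f^t,\theta^t-\theta^{t+1}\rangle$ by peeling off $\langle f^t,\theta^t-\widetilde{\theta}^t\rangle\le\|f^t\|_\infty\cdot2\lambda L$, which sums to the $2\lambda LT$ term, and then absorbing $\langle f^t,\widetilde{\theta}^t-\theta^{t+1}\rangle$ against the leftover $-\alpha D(\theta^{t+1},\widetilde{\theta}^t)$ using the layerwise Pinsker inequality (the unnormalized KL \eqref{eq:un-KL} is strongly convex in $\|\cdot\|_1$ on the product of layer simplices) together with $\|f^t\|_\infty\le1$; this contributes the $\tfrac{LT}{2\alpha}$ term. (iii) Dual bookkeeping: I would split $-\langle\sum_iQ_i(t)g_i^t,\theta^{t+1}-\overline{\theta}^*\rangle$ into the surviving comparator piece $\langle\mathbf{Q}(t),\mathbf{g}^t(\overline{\theta}^*)-\mathbf{c}\rangle$ and a residual $-\sum_iQ_i(t)(\langle g_i^t,\theta^{t+1}\rangle-c_i)$, obtained by adding and subtracting $\sum_iQ_i(t)c_i$. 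For the residual I would square the update \eqref{eq:Q-update}, $Q_i(t+1)^2\le(Q_i(t)+\langle g_i^t,\theta^{t+1}\rangle-c_i)^2$, giving $-Q_i(t)(\langle g_i^t,\theta^{t+1}\rangle-c_i)\le\tfrac12[Q_i(t)^2-Q_i(t+1)^2+(\langle g_i^t,\theta^{t+1}\rangle-c_i)^2]$; summing over $i,t$ telescopes $\|\mathbf{Q}\|_2^2$ (with $\mathbf{Q}(1)=0$), and Assumption \ref{as:bounded} bounds $\sum_i|\langle g_i^t,\theta^{t+1}\rangle-c_i|\le2L$ so that $\sum_i(\cdot)^2\le4L^2$ per round, producing the $\tfrac{4L^2T}{V}$ term. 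Collecting (i)--(iii) after dividing by $V$ gives the claim.

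The hard part will be the clean coupling of the dual-multiplier drift with the mirror-descent telescoping. I must keep the constraint contribution strictly linear --- rather than folding $\sum_iQ_i(t)g_i^t$ into the stability bound --- so that precisely $\tfrac1V\sum_t\langle\mathbf{Q}(t),\mathbf{g}^t(\overline{\theta}^*)-\mathbf{c}\rangle$ survives for the subsequent constraint-violation analysis, while the algorithm's index alignment (the multiplier update reads $g^{t-1},\theta^t$, here matched to $g^t,\theta^{t+1}$) is exactly what makes the residual telescope into a bounded quadratic drift rather than an uncontrolled term. A secondary difficulty is ensuring the moving sets $\Delta(\ell(t),\zeta)$ never invalidate the comparator across epoch changes and that the mixing corrections in both the Bregman telescope and the stability term remain at the $O(\lambda)$ scale.
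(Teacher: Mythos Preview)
Your proposal is correct and follows essentially the same approach as the paper's proof: confidence-set inclusion to justify $\overline{\theta}^*$ as comparator, the three-point Bregman inequality for the proximal step, lower-bounding the stability term $V\langle f^t,\theta^t-\theta^{t+1}\rangle+\alpha D(\theta^{t+1},\widetilde{\theta}^t)$ via layerwise Pinsker, the mixing correction $D(\overline{\theta}^*,\widetilde{\theta}^t)-D(\overline{\theta}^*,\theta^t)\le\lambda L\log(|\mathcal S|^2|\mathcal A|)$, and the $Q$-squaring telescope to absorb the residual dual term into $4L^2$ per step. The only cosmetic difference is that the paper writes everything at index $(t-1,t)$ and subtracts $V\langle f^{t-1},\theta^{t-1}\rangle$ from both sides rather than stating your decomposition $\langle f^t,\theta^t-\overline{\theta}^*\rangle=\langle f^t,\theta^t-\theta^{t+1}\rangle+\langle f^t,\theta^{t+1}-\overline{\theta}^*\rangle$ explicitly; the algebra is identical after the index shift.
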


Here we let $\mathbf{Q}(t) := [Q_1(t),~Q_2(t),~\cdots,~Q_I(t)]^\top$. Next, we present Lemma \ref{lem:Q_drift_diff}, which is one of the key lemmas in our proof.  Then, this lemma indicates that $\|\mathbf{Q}(t)\|_2$ is bounded by $\mc O(\sqrt{T})$ with high probability when setting the parameters $\tau, V, \alpha, \lambda$ as in Theorem \ref{thm:main}. Thus, introducing stochastic constraints retains the $\mc O(\sqrt{T})$ regret. Moreover, this lemma will lead to the constraint violation in the level of $\mc O(\sqrt{T})$. Lemma \ref{lem:Q_drift_diff} is proved by using Assumption \ref{as:selm} and Lemma \ref{lem:leb}.

\begin{lemma}\label{lem:Q_drift_diff}
Letting $\tau=\sqrt{T}$ and $\zeta$ satisfy $\overline{\sigma}/4 \geq \zeta  (\overline{\sigma}/2 + 2 L)$, the updating rules in Algorithm \ref{alg:ucpd} ensure that with probability at least $1-T\delta$, the following inequality holds for all $t \in \{1,\ldots,T+1\}$, 
\begin{align*}
\|\mathbf{Q}(t)\|_2 \leq \omega : =\psi + \tau \frac{512L^2 }{ \overline{\sigma}} \log \bigg(1+\frac{128L^2}{\overline{\sigma}^2} e^{\overline{\sigma} / (32 L )} \bigg) + \tau \frac{64L^2}{\overline{\sigma}} \log \frac{1}{\delta}+2\tau L,
\end{align*}
where we define $\psi:=(2 \tau L + C_{V,\alpha, \lambda})/\overline{\sigma} + 2\alpha  L \log ( |\mathcal{S}|^2 |\mathcal A|/ \lambda) / (\overline{\sigma}\tau) + \tau \overline{\sigma}/2$ and
$C_{V, \alpha, \lambda} :=  2(\overline{\sigma} B+  \overline{\sigma} ~ \overline{\vartheta})V + (6  + 4\overline{\vartheta})  V L    +  VL/\alpha + 4L\lambda  V  + 2\alpha \lambda L \log |\mathcal S|^2 |\mathcal A|+ 8 L^2$.
%\begin{align*}
%&\psi:=(2 \tau L + C_{V,\alpha, \lambda})/\overline{\sigma} + 2\alpha \lambda L \log ( |\mathcal{S}|^2 |\mathcal A|/ \lambda) / (\overline{\sigma}\tau) + \tau \overline{\sigma}/2, \\
%&C_{V, \alpha, \lambda} :=  2(\overline{\sigma} B+  \overline{\sigma} ~ \overline{\vartheta})V + (6  + 4\overline{\vartheta})  V L    +  VL/\alpha + 4L\lambda  V  + 2\alpha \lambda L \log |\mathcal S|^2 |\mathcal A|+ 8 L^2.
%\end{align*}
\end{lemma}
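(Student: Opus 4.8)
The plan is to treat $\{\|\mathbf{Q}(t)\|_2\}_t$ as a scalar stochastic process and establish a \emph{drift condition}: the per-step increments are uniformly bounded, and over any window of length $\tau=\sqrt{T}$ the process has a strictly negative conditional drift whenever it exceeds the threshold $\psi$. Such a condition, fed into a Hajek-type drift lemma (as in the stochastic-network-optimization literature), yields an exponential tail bound on $\|\mathbf{Q}(t)\|_2$ for each fixed $t$; a union bound over $t\in\{1,\ldots,T+1\}$ then upgrades this to the uniform-in-$t$ statement with confidence $1-T\delta$. The three quantities $\psi$, the $\log(1/\delta)$ term, and the $2\tau L$ term in $\omega$ correspond respectively to the drift threshold, the exponential tail, and the bounded-increment slack.

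\textbf{Bounded increments and the quadratic drift.} First I would record that, by the update \eqref{eq:Q-update} and Assumption~\ref{as:bounded}, each coordinate moves by at most $|\langle g_i^{t},\theta^{t+1}\rangle-c_i|$, so $\|\mathbf{Q}(t+1)-\mathbf{Q}(t)\|_2=\mathcal{O}(L)$, which contributes the $2\tau L$ summand. Squaring \eqref{eq:Q-update} coordinatewise and telescoping over a window $[t,t+\tau)$ gives
\begin{align*}
\|\mathbf{Q}(t+\tau)\|_2^2 - \|\mathbf{Q}(t)\|_2^2 \leq 2\sum_{j=0}^{\tau-1}\big\langle \mathbf{Q}(t+j),\, \mathbf{g}^{t+j}(\theta^{t+j+1}) - \mathbf{c}\big\rangle + \mathcal{O}(\tau L^2).
\end{align*}
Using the per-step bound to replace each $\mathbf{Q}(t+j)$ by $\mathbf{Q}(t)$ (at cost $\mathcal{O}(\tau^2 L^2)$) reduces the problem to controlling the single inner product $\langle \mathbf{Q}(t),\, \sum_{j}(\mathbf{g}^{t+j}(\theta^{t+j+1})-\mathbf{c})\rangle$, and I would finally convert the $\ell_2^2$-drift into an $\ell_2$-drift through $\sqrt{a}-\sqrt{b}\leq (a-b)/(2\sqrt{b})$.

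\textbf{Negative drift via the error bound --- the main obstacle.} The crux is showing this inner product is strongly negative once $\|\mathbf{Q}(t)\|_2$ is large. Writing $\eta=\mathbf{Q}(t)/V$, the weights $Vf^{t+j}+\sum_i Q_i(t+j)g_i^{t+j}$ in the ORLP \eqref{eq:optimistic-lp} identify $\eta$ as the dual variable for the window's static program \eqref{eq:partial-static-prob}. I would combine (i) the optimality of $\theta^{t+j+1}$ over $\Delta(\ell(t+j),\zeta)\supseteq\Delta$ together with the proximal KL regularizer to relate $\sum_j\langle\cdot,\theta^{t+j+1}\rangle$ to the Lagrangian at a primal minimizer, absorbing the mirror-descent and mixing errors into the $\alpha$-, $V$-, and $\lambda$-dependent pieces of $C_{V,\alpha,\lambda}$ (notably the $2\alpha L\log(|\mathcal{S}|^2|\mathcal{A}|)$ and $VL/\alpha$ terms); and (ii) a martingale concentration (Azuma--Hoeffding) to replace the stochastic $g_i^{t+j}$ by their means $g_i$, which both handles the adaptivity of $\theta^{t+j+1}$ and seeds the $\log(1/\delta)$ term. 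This yields a bound of the shape $\tau V\,[q^{(t,\tau)}(\eta)-q^{(t,\tau)}(\eta^*)]+\tau V\langle\eta^*,\mathbf{g}(\cdot)-\mathbf{c}\rangle+(\text{error})$, where the $\langle\eta^*,\cdot\rangle$ piece is $\mathcal{O}(BVL)$ and is absorbed via the $\overline{\sigma}BV$ term of $C_{V,\alpha,\lambda}$. Applying Lemma~\ref{lem:leb} gives $q^{(t,\tau)}(\eta^*)-q^{(t,\tau)}(\eta)\geq\overline{\sigma}\cdot\operatorname{dist}(\eta,\mathcal{V}^*_{t,\tau})$ whenever $\operatorname{dist}(\eta,\mathcal{V}^*_{t,\tau})\geq\overline{\vartheta}$, and since $\mathcal{V}^*_{t,\tau}$ lies in the ball of radius $B$ (Assumption~\ref{as:selm}) we have $\operatorname{dist}(\eta,\mathcal{V}^*_{t,\tau})\geq\|\mathbf{Q}(t)\|_2/V-B$. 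Thus a large $\|\mathbf{Q}(t)\|_2$ forces a large distance and a leading drift term $\approx-\tau\overline{\sigma}\|\mathbf{Q}(t)\|_2$, giving (after the $\ell_2$-conversion) a drift at most $-\tau\overline{\sigma}/2$ once $\|\mathbf{Q}(t)\|_2\geq\psi$. The hardest part is the exact bookkeeping that collects every one of these approximation errors --- regularization, confidence-set relaxation $\Delta(\ell,\zeta)\supsetneq\Delta$, adversarial-loss averaging, and constraint stochasticity --- precisely into $C_{V,\alpha,\lambda}$ and $\psi$; in particular the confidence-set failure event contributes an $\mathcal{O}(\zeta L)$ term to the conditional drift, which the hypothesis $\overline{\sigma}/4\geq\zeta(\overline{\sigma}/2+2L)$ keeps below $\overline{\sigma}/4$ so that the net drift stays negative.

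\textbf{Applying the drift lemma.} With bounded increments $\nu_{\max}=\mathcal{O}(L)$, window $t_0=\tau$, and negative drift $-\tau\overline{\sigma}/2$ above threshold $\psi$, I would invoke the Hajek-type drift lemma to obtain $\mathbb{E}[e^{r\|\mathbf{Q}(t)\|_2}]\leq c$ for $r\asymp\overline{\sigma}/L^2$, hence $\Pr[\|\mathbf{Q}(t)\|_2\geq \psi+\mathcal{O}(\tfrac{\tau L^2}{\overline{\sigma}}\log\tfrac{1}{\delta})+\mathcal{O}(\tfrac{\tau L^2}{\overline{\sigma}}\log(\cdots))]\leq\delta$; these base and tail terms are exactly the remaining summands of $\omega$. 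A union bound over $t\in\{1,\ldots,T+1\}$ then gives $\|\mathbf{Q}(t)\|_2\leq\omega$ simultaneously for all such $t$ with probability at least $1-T\delta$, completing the proof.
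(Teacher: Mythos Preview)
Your proposal is essentially correct and follows the same architecture as the paper's proof: bounded one-step increments $|\,\|\mathbf Q(t+1)\|_2-\|\mathbf Q(t)\|_2\,|\le 2L$; a $\tau$-step bound on $\|\mathbf Q(t+\tau)\|_2^2-\|\mathbf Q(t)\|_2^2$ obtained by summing the ORLP optimality inequality (Lemma~\ref{lem:breg_tria}) against an arbitrary comparator $\theta\in\Delta$ on the event $\mathcal D_T=\{\Delta\subseteq\cap_\ell\Delta(\ell,\zeta)\}$; the error-bound condition (Lemma~\ref{lem:leb}) to turn the dual-function gap into $-\overline\sigma\|\mathbf Q(t)\|_2/V$; completion of the square plus Jensen to pass from $\ell_2^2$- to $\ell_2$-drift; the $\zeta$-hypothesis to absorb the $\mathcal D_T$-failure probability into the unconditional drift; and finally the Hajek-type drift lemma (Lemma~\ref{lem:drift}) with a union bound over $t$.

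One correction, however: you should \emph{not} invoke Azuma--Hoeffding to replace the stochastic $g_i^{t+j}$ by their means, and the $\log(1/\delta)$ term is \emph{not} seeded there. Lemma~\ref{lem:drift} requires only the \emph{conditional expectation} $\E[\|\mathbf Q(t+\tau)\|_2-\|\mathbf Q(t)\|_2\mid\mathcal F^{t-1}]$. Since $\mathbf Q(t)$ is $\mathcal F^{t-1}$-measurable and the comparator $\theta\in\Delta$ is fixed (not $\theta^{t+j+1}$, which you have already traded away via optimality), taking $\E[\,\cdot\mid\mathcal F^{t-1},\mathcal D_T]$ directly replaces each $g_i^{t+j}$ by $g_i$ and $\tfrac{1}{\tau}\sum_j f^{t+j}$ by $f^{(t,\tau)}$, so minimizing over $\theta\in\Delta$ yields exactly $q^{(t,\tau)}(\mathbf Q(t)/V)$ with no concentration step. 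The $\log(1/\delta)$ term then arises solely from the exponential tail produced by the drift lemma itself. If you instead establish a high-probability drift via Azuma, you no longer have the conditional-expectation hypothesis that Lemma~\ref{lem:drift} consumes, and you would have to redo the drift argument; the paper's route avoids this entirely.
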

The upper bound of $||Q(t)||_2$ in the above lemma actually holds for any $\tau$ and it is a convex function w.r.t. $\tau$, which thus indicates that there exists a tight upper bound of $||Q(t)||_2$ if $\tau$ is chosen by finding the minimizer of this upper bound. In this paper, we directly set $\tau=\sqrt{T}$, which suffices to give an $\tilde{\cO}(\sqrt{T})$ upper bound.

\begin{remark}\label{re:understand_term} We discuss the upper bound of the term $\log\big(1+\frac{128 L^2}{\overline{\sigma}^2} e^{\overline{\sigma} / (32 L )}\big)$ in the following way: \textbf{(1)} if $\frac{128 L^2 }{\overline{\sigma}^2} e^{\overline{\sigma} / (32L )} \geq 1$, then this term is bounded by $\log \big(\frac{256L^2}{\overline{\sigma}^2} e^{\overline{\sigma} / (32 L )} \big) =  \frac{\overline{\sigma}}{32L} + \log\frac{256L^2 }{\overline{\sigma}^2}$; \textbf{(2)} if $\frac{128 L^2}{\overline{\sigma}^2} e^{\overline{\sigma} / (32L)} < 1$, then the term is bounded by $\log 2$. Thus, combining the two cases, we have 
$$
\log \l(1+\frac{128L^2 }{\overline{\sigma}^2} e^{\overline{\sigma} / (32L)} \r) \leq \log 2 + \frac{\overline{\sigma}}{32 L} + \log\frac{256L^2 }{\overline{\sigma}^2}.
$$ 
This discussion shows that the $\log$ term in the result of Lemma \ref{lem:Q_drift_diff} will not introduce extra dependency on $L$ except a $\log L$ term.
\end{remark}

With the bound of $\|\mathbf{Q}(t)\|_2$ in Lemma \ref{lem:Q_drift_diff}, we further obtain the following lemma.

\begin{lemma}\label{lem:term_II} By Algorithm \ref{alg:ucpd}, if $\overline{\sigma}/4 \geq \zeta  (\overline{\sigma}/2 +2  L)$, then with probability at least $1 - 2T\delta$, 
\begin{align*}
&\sum_{t=1}^{T}\langle \mathbf{Q}(t), \mathbf{g}^t(\overline{\theta}^*)- \mathbf{c} \rangle \leq 2 L \omega \sqrt{T\log \frac{1}{T\delta}},
\end{align*}
with $\omega$ defined as the same as in Lemma \ref{lem:Q_drift_diff}.
\end{lemma}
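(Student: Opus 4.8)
The plan is to isolate the part of the target sum that carries a martingale structure and to dispose of the remainder using the feasibility of $\overline{\theta}^*$. Let $\mc F_{t-1}$ denote the $\sigma$-algebra generated by everything revealed through episode $t-1$, i.e. by $f^1,\ldots,f^{t-1}$, $g^1,\ldots,g^{t-1}$ and the sampled trajectories up to episode $t-1$. Since $\theta^t$ is computed from $\mc F_{t-1}$ via \eqref{eq:optimistic-lp} and $\mathbf{Q}(t)$ is obtained from $\mathbf{Q}(t-1)$, $g^{t-1}$ and $\theta^t$ through \eqref{eq:Q-update}, the vector $\mathbf{Q}(t)$ is $\mc F_{t-1}$-measurable. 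By the independence assumption on the constraint functions, $g^t$ is independent of $\mc F_{t-1}$, so $\E[\mathbf{g}^t(\overline{\theta}^*)\mid \mc F_{t-1}] = \mathbf{g}(\overline{\theta}^*)$, where $\mathbf{g}(\overline{\theta}^*):=[\dotp{g_1}{\overline{\theta}^*},\ldots,\dotp{g_I}{\overline{\theta}^*}]^\top$ with $g_i=\E[g_i^t]$.

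First I would decompose each summand as
\begin{align*}
\dotp{\mathbf{Q}(t)}{\mathbf{g}^t(\overline{\theta}^*)-\mathbf{c}} = \underbrace{\dotp{\mathbf{Q}(t)}{\mathbf{g}^t(\overline{\theta}^*)-\mathbf{g}(\overline{\theta}^*)}}_{=:\,D_t} + \dotp{\mathbf{Q}(t)}{\mathbf{g}(\overline{\theta}^*)-\mathbf{c}}.
\end{align*}
The second term is nonpositive: $\overline{\theta}^*$ solves \eqref{eq:prob-3} and is therefore feasible, so $\dotp{g_i}{\overline{\theta}^*}\leq c_i$, and since $Q_i(t)\geq 0$ we obtain $\dotp{\mathbf{Q}(t)}{\mathbf{g}(\overline{\theta}^*)-\mathbf{c}}\leq 0$. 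The term $D_t$ is a martingale difference, since $\E[D_t\mid \mc F_{t-1}] = \dotp{\mathbf{Q}(t)}{\E[\mathbf{g}^t(\overline{\theta}^*)\mid\mc F_{t-1}]-\mathbf{g}(\overline{\theta}^*)}=0$. Hence $\sum_{t=1}^T \dotp{\mathbf{Q}(t)}{\mathbf{g}^t(\overline{\theta}^*)-\mathbf{c}}\leq \sum_{t=1}^T D_t$, which reduces the lemma to a concentration bound on $\sum_t D_t$.

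Next I would bound the increments. By Cauchy--Schwarz, $|D_t|\leq \|\mathbf{Q}(t)\|_2\,\|\mathbf{g}^t(\overline{\theta}^*)-\mathbf{g}(\overline{\theta}^*)\|_2$. Using Assumption \ref{as:bounded} (in particular $\sum_i\sup_{s,a,s'}|g_i^t(s,a,s')|\leq1$, whence also $\sum_i\sup_{s,a,s'}|g_i(s,a,s')|\leq1$) together with $\sum_{s,a,s'}\overline{\theta}^*(s,a,s')=L$ (summing condition \ref{item:cond1} over the $L$ layers), one gets $\|\mathbf{g}^t(\overline{\theta}^*)-\mathbf{g}(\overline{\theta}^*)\|_2\leq\|\mathbf{g}^t(\overline{\theta}^*)-\mathbf{g}(\overline{\theta}^*)\|_1\leq 2L$; and by Lemma \ref{lem:Q_drift_diff} we have $\|\mathbf{Q}(t)\|_2\leq\omega$ for all $t$ on an event of probability at least $1-T\delta$, so $|D_t|\leq 2L\omega$ there. \textbf{The main obstacle} is that Azuma--Hoeffding requires an almost-sure bound on the increments, whereas $\|\mathbf{Q}(t)\|_2\leq\omega$ holds only with high probability. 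To circumvent this I would truncate, setting $\wt{\mathbf{Q}}(t):=\mathbf{Q}(t)\,\mathbb{1}\{\|\mathbf{Q}(t)\|_2\leq\omega\}$; since $\mathbf{Q}(t)$ is $\mc F_{t-1}$-measurable so is the indicator, hence $\wt D_t:=\dotp{\wt{\mathbf{Q}}(t)}{\mathbf{g}^t(\overline{\theta}^*)-\mathbf{g}(\overline{\theta}^*)}$ is again a martingale difference, but now with the deterministic bound $|\wt D_t|\leq 2L\omega$, and $\wt D_t=D_t$ for all $t$ on the Lemma \ref{lem:Q_drift_diff} event.

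Finally I would apply the Azuma--Hoeffding inequality to the bounded martingale difference sequence $\{\wt D_t\}_{t=1}^T$ with confidence level $T\delta$, yielding $\sum_{t=1}^T\wt D_t\leq 2L\omega\sqrt{T\log(1/(T\delta))}$ (up to the absolute constant from the concentration inequality) with probability at least $1-T\delta$. Taking a union bound with the event of Lemma \ref{lem:Q_drift_diff}, on the resulting event of probability at least $1-2T\delta$ we have simultaneously $\sum_t D_t=\sum_t\wt D_t$ and $\sum_t\dotp{\mathbf{Q}(t)}{\mathbf{g}^t(\overline{\theta}^*)-\mathbf{c}}\leq\sum_t D_t$, which establishes the claimed bound. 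Note that the hypothesis $\overline{\sigma}/4\geq\zeta(\overline{\sigma}/2+2L)$ is precisely what Lemma \ref{lem:Q_drift_diff} needs for $\|\mathbf{Q}(t)\|_2\leq\omega$, so no new restriction on the parameters is introduced at this step.
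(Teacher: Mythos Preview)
Your proposal is correct and follows essentially the same strategy as the paper: both rely on the supermartingale/martingale structure coming from the feasibility of $\overline{\theta}^*$ (so that $Q_i(t)\geq0$ and $\langle g_i,\overline\theta^*\rangle-c_i\leq0$), bound the increments by $2L\|\mathbf{Q}(t)\|_2$, and invoke Lemma~\ref{lem:Q_drift_diff} together with a martingale concentration inequality.

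The one presentational difference worth noting is how the randomly bounded increments are handled. The paper keeps the full sum $Z(t)=\sum_{\tau\le t}\langle\mathbf{Q}(\tau),\mathbf{g}^\tau(\overline\theta^*)-\mathbf{c}\rangle$ as a supermartingale and applies a tailor-made concentration lemma (Lemma~\ref{lem:supermar}, from \citet{yu2017online}) that directly allows the increment bound to fail on small-probability events. You instead peel off the nonpositive drift, truncate $\mathbf{Q}(t)$ at level $\omega$ to obtain a genuinely bounded martingale difference sequence $\{\wt D_t\}$, apply standard Azuma--Hoeffding, and then union-bound with the event of Lemma~\ref{lem:Q_drift_diff}. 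Your route is slightly more elementary in that it avoids the auxiliary supermartingale lemma; conceptually both are doing the same thing (indeed your truncation-plus-union-bound is essentially what underlies the proof of Lemma~\ref{lem:supermar}). Both yield the same bound up to the harmless $\sqrt{2}$ constant you flagged.
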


\begin{proof}[Proof of the Regret Bound in Theorem \ref{thm:main}]
Recall that $\theta^t$ is the probability vector chosen by the decision maker, and $\overline{\theta}^t$ is the true occupancy measure at time $t$ while $\overline{\theta}^*$ is the solution to the problem \eqref{eq:prob-3}. 
The main idea is to decompose the regret as follows:
\begin{align}
\begin{aligned} \label{eq:regret_decomp}
\hspace*{-0.5cm}\sum_{t=1}^{T}\big\langle f^t, \overline{\theta}^t - \overline{\theta}^*\big \rangle &= \sum_{t=1}^{T} \Big( \big \langle f^t, \overline{\theta}^t - \theta^t \big \rangle +
\big \langle f^t , \theta^t - \overline\theta^*\big \rangle \Big) \\
& \leq   \underbrace{\sum_{t=1}^{T} \big\|\overline{\theta}^t - \theta^t \big\|_1}_{\text{Term (I)}} + \underbrace{\sum_{t=1}^{T} \big \langle f^t, \theta^t - \overline\theta^*\big \rangle}_{\text{Term (II)}} , 
\end{aligned}
\end{align}
where we use Assumption \ref{as:bounded} such that $\big\langle f^t, \overline{\theta}^t - \theta^t \big \rangle \leq \|f^t\|_\infty\big\|\overline{\theta}^t - \theta^t\big\|_1\leq \big\|\overline{\theta}^t - \theta^t\big\|_1$. Thus, it suffices to bound the Term (I) and Term (II).

We first show the bound for Term (I). According to Lemma \ref{lem:term_I}, by the fact that $L \leq |\mc S| $ and $|\mc S|, |\mc A| \geq 1$, we have that with probability at least $1-2\zeta$, the following holds
\begin{align} \label{eq:term_I_1}
\text{Term (I)}  \leq \mc O \l(L|\mc S|\sqrt{ T|\mc A|} \log^{\frac{1}{2}} ( T|\mc S||\mc A|/\zeta) \r). 
\end{align}
For Term (II), setting $V=L\sqrt{T}$, $\alpha = LT$, $\tau = \sqrt{T}$, and $\lambda = 1/T$, by Lemma \ref{lem:regret_primal_bound}, we obtain 
\begin{align*}
\text{Term (II)} \leq 8 L \sqrt{T |\mathcal{S}| |\mathcal{A}| } + \frac{1}{L\sqrt{T}}\sum_{t=1}^{T} \l\langle \mathbf{Q}(t), \mathbf{g}^t(\overline{\theta}^*)- \mathbf{c} \r\rangle , 
\end{align*}
where we use the inequality that $\log |\mc S| |\mc A| \leq \sqrt{|\mc S| |\mc A|} $ due to $\sqrt{x} \geq \log x$. Thus, we further need to bound the last term of the above inequality. By Lemma \ref{lem:term_II} and Remark \ref{re:understand_term}, with probability at least $1- 2T \delta$ for all  $t \in \{ 1,\ldots, T\}$, we have
$$
\frac{1}{L\sqrt{T}} \sum_{t=1}^{T}\l\langle \mathbf{Q}(t), \mathbf{g}^t(\overline{\theta}^*)- \mathbf{c} \r\rangle \leq  \mc O \l(L|\mc S|\sqrt{ T|\mc A|} \log^{\frac{3}{2}} (T/\delta) \r),
$$
by the facts that $L \leq |\mc S|$ , $|\mc S| > 1$, $|\mc A| > 1$, and the assumption $T \geq |\mc S| |\mc A|$, as well as the computation of $\psi$ as 
$$
\psi = \mc O \l( L^2\sqrt{T} + L \log |\mc S| |\mc A| + L^2 \sqrt{T} \log (T|\mc S| |\mc A|) \r).
$$
Therefore, with probability at least $1- 2 T\delta$, the following holds
\begin{align} \label{eq:term_I_2}
\text{Term (II)} \leq  \mc O \l(L|\mc S|\sqrt{ T|\mc A|} \log^{\frac{3}{2}} (T/\delta) \r).
\end{align}
Combining \eqref{eq:term_I_1} and \eqref{eq:term_I_2} with \eqref{eq:regret_decomp}, and letting $\delta = \zeta / T$, by union bound, we eventually obtain that with probability at least $1 -  4 \zeta$, the regret bound is
$$
\mathrm{Regret}(T) \leq \widetilde{\mc O} \l( L|\mc S|\sqrt{ T|\mc A|} \r),
$$ 
where the notation $\widetilde{\mc O}(\cdot)$ absorbs the logarithm factors. Further let $ \zeta \leq 1/(4+8L/\overline{\sigma}) < 1/4$ (such that $\overline{\sigma}/4 \geq \zeta  (\overline{\sigma}/2 + 2L )$ is guaranteed). This completes the proof.
\end{proof}

\subsection{Proof of Constraint Violation Bound}\label{sec:constraint}

\begin{lemma}\label{lem:term_III_all}
The updating rules in Algorithm \ref{alg:ucpd} ensure
\begin{align*}
\Bigg\| \Bigg[ \sum_{t=1}^{T} \Big( \mf g^t(\theta^t) - \mf c \Big) \Bigg]_{+} \Bigg\|_2  \leq \|\mathbf{Q}(T+1)\|_2 + \sum_{t=1}^{T}    \big\|\theta^{+1}  - \theta^{t} \big\|_1. 
\end{align*}
\end{lemma}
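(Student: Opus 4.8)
The plan is to treat each dual multiplier $Q_i(t)$ as a virtual queue and run a telescoping (drift) argument directly on the recursion \eqref{eq:Q-update}. First I would drop the outer maximum: since $Q_i(t+1)=\max\{Q_i(t)+\langle g_i^t,\theta^{t+1}\rangle-c_i,\,0\}\ge Q_i(t)+\langle g_i^t,\theta^{t+1}\rangle-c_i$, we obtain the one-step inequality $\langle g_i^t,\theta^{t+1}\rangle - c_i \le Q_i(t+1)-Q_i(t)$ for every $i$ and $t$. Summing over $t=1,\dots,T$ and using the initialization $Q_i(1)=0$ telescopes the right-hand side, giving $\sum_{t=1}^T(\langle g_i^t,\theta^{t+1}\rangle - c_i)\le Q_i(T+1)$.

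The quantity inside the violation is paired with $\theta^t$, not $\theta^{t+1}$, so the next step corrects this one-step index shift. Writing $\langle g_i^t,\theta^t\rangle - c_i = (\langle g_i^t,\theta^{t+1}\rangle - c_i) + \langle g_i^t, \theta^t-\theta^{t+1}\rangle$ and summing, the $i$-th coordinate of $\sum_{t=1}^T(\mathbf g^t(\theta^t)-\mathbf c)$ is at most $Q_i(T+1)+\sum_{t=1}^T\langle g_i^t,\theta^t-\theta^{t+1}\rangle$. I would then apply the positive part coordinatewise, using the elementary subadditivity $[a+b]_+\le [a]_+ + [b]_+$, together with $Q_i(T+1)\ge 0$ and the Hölder bound $\langle g_i^t,\theta^t-\theta^{t+1}\rangle \le \|g_i^t\|_\infty\,\|\theta^{t+1}-\theta^t\|_1$, to obtain $\big[\sum_t(\langle g_i^t,\theta^t\rangle-c_i)\big]_+\le Q_i(T+1)+\sum_t\|g_i^t\|_\infty\,\|\theta^{t+1}-\theta^t\|_1$ for each $i$.

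Finally I would pass from these coordinatewise bounds to the $\ell_2$ norm. Since every coordinate involved is nonnegative, the componentwise inequality is preserved under $\|\cdot\|_2$, and Minkowski's inequality splits the bound into $\|\mathbf Q(T+1)\|_2$ plus the $\ell_2$ norm of the vector whose $i$-th entry is $\sum_t\|g_i^t\|_\infty\,\|\theta^{t+1}-\theta^t\|_1$. Pulling the scalars $\|\theta^{t+1}-\theta^t\|_1$ out by Minkowski once more leaves the factor $\big\|(\|g_i^t\|_\infty)_i\big\|_2$, which is at most $\sum_i\|g_i^t\|_\infty\le 1$ by Assumption~\ref{as:bounded}(2) (using $\|\cdot\|_2\le\|\cdot\|_1$ and nonnegativity of the entries). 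This collapses the second term to $\sum_{t=1}^T\|\theta^{t+1}-\theta^t\|_1$ and yields the stated inequality (here $\theta^{+1}$ in the statement should read $\theta^{t+1}$).

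The argument is essentially bookkeeping, and I expect no serious obstacle; the one point that needs care is the index mismatch between the $\theta^{t+1}$ that the queue update \eqref{eq:Q-update} naturally produces and the $\theta^t$ defining the violation — this mismatch is precisely what generates the $\sum_t\|\theta^{t+1}-\theta^t\|_1$ drift term. The only other mild subtlety is converting the per-coordinate estimates into the $\ell_2$ statement, where one must invoke Minkowski's inequality and the $\ell_1$ budget bound of Assumption~\ref{as:bounded}(2) rather than handling the $I$ constraints separately.
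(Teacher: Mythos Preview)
Your proposal is correct and follows essentially the same route as the paper: drop the $\max$ in \eqref{eq:Q-update} to get a telescoping one-step inequality, shift the index from $\theta^{t+1}$ to $\theta^t$ at the cost of the H\"older term $\|g_i^t\|_\infty\|\theta^{t+1}-\theta^t\|_1$, and then aggregate the coordinatewise bounds into the $\ell_2$ estimate via Minkowski and Assumption~\ref{as:bounded}(2). The only cosmetic difference is that the paper observes directly that the right-hand side is nonnegative (so $[\,\cdot\,]_+$ can be applied for free) rather than invoking the subadditivity $[a+b]_+\le[a]_++[b]_+$; your observation that $\theta^{+1}$ is a typo for $\theta^{t+1}$ is also correct.
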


\begin{lemma}\label{lem:term_III_th_diff}
The updating rules in Algorithm \ref{alg:ucpd} ensure
\begin{align*}
\sum_{t=1}^T \big\|\theta^{t+1}-\theta^{t}\big\|_1 &\leq 3L  \sqrt{T|\mathcal S | |\mathcal A |} \log  \frac{8T}{ |\mathcal S | |\mathcal A | }  +  \frac{2L }{(1-\lambda)^2 \alpha } \sum_{t=1}^{T} \|\mathbf{Q}(t)\|_2 +\frac{2V LT }{(1-\lambda)^2 \alpha } \\
&\quad + \frac{2\lambda LT}{1-\lambda} +  \frac{\sqrt{8\lambda \log |\mathcal{S}|^2 |\mathcal{A}|}}{1-\lambda}LT.
\end{align*}
\end{lemma}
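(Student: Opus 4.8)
The plan is to control each increment $\|\theta^{t+1}-\theta^t\|_1$ through the first-order optimality condition of the KL-proximal update \eqref{eq:optimistic-lp} that defines $\theta^{t+1}$, and then sum over $t$. Introduce the shorthand $\mathbf{w}^t := V f^t + \sum_{i=1}^I Q_i(t)\, g_i^t$ for the linear part of the objective at step $t+1$, so that $\theta^{t+1}=\argmin_{\theta\in\Delta(\ell(t+1),\zeta)} \langle \mathbf{w}^t,\theta\rangle + \alpha D(\theta,\widetilde\theta^t)$. The key observation is that whenever no new epoch starts between episodes $t$ and $t+1$ we have $\Delta(\ell(t+1),\zeta)=\Delta(\ell(t),\zeta)$, so the previous iterate $\theta^t$ is a feasible comparison point for the optimality condition of $\theta^{t+1}$. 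The remaining, logarithmically few, episodes at which the feasible set changes are handled separately by a crude diameter bound.

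For a within-epoch step I would plug $\theta=\theta^t$ into the variational inequality $\langle \mathbf{w}^t + \alpha(\log\theta^{t+1}-\log\widetilde\theta^t),\, \theta-\theta^{t+1}\rangle\ge0$ and invoke the three-point identity for the unnormalized KL divergence, $\langle \log\theta^{t+1}-\log\widetilde\theta^t,\, \theta^t-\theta^{t+1}\rangle = D(\theta^t,\widetilde\theta^t)-D(\theta^t,\theta^{t+1})-D(\theta^{t+1},\widetilde\theta^t)$. This yields $\alpha[D(\theta^t,\theta^{t+1})+D(\theta^{t+1},\widetilde\theta^t)] \le \langle \mathbf{w}^t,\, \theta^t-\theta^{t+1}\rangle + \alpha D(\theta^t,\widetilde\theta^t)$. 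I then convert the divergence into an $\ell_1$ distance by a layer-wise Pinsker inequality combined with Cauchy–Schwarz over the $L$ layers (each a distribution by condition \ref{item:cond1}), giving $D(\theta^t,\theta^{t+1})\ge \tfrac{1}{2L}\|\theta^t-\theta^{t+1}\|_1^2$, and bound the linear term by $\langle\mathbf{w}^t,\theta^t-\theta^{t+1}\rangle\le\|\mathbf{w}^t\|_\infty\|\theta^t-\theta^{t+1}\|_1$, where Assumption \ref{as:bounded} gives $\|\mathbf{w}^t\|_\infty\le V+\|\mathbf{Q}(t)\|_2$ (using $\|f^t\|_\infty\le1$ and $\sum_i\sup|g_i^t|\le1$). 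The center term $D(\theta^t,\widetilde\theta^t)$ is controlled via the entrywise bound $\widetilde\theta^t\ge(1-\lambda)\theta^t$, producing the $\lambda$-dependent contributions. Solving the resulting scalar inequality for $\|\theta^t-\theta^{t+1}\|_1$ and summing over the within-epoch indices gives the last four terms: the factors $V$ and $\|\mathbf{Q}(t)\|_2$ inherit the common prefactor $\tfrac{2L}{(1-\lambda)^2\alpha}$, the $(1-\lambda)^{-2}$ arising from propagating $\widetilde\theta^t\ge(1-\lambda)\theta^t$ through the strong-convexity constant of the regularizer, and the $\sqrt{8\lambda\log|\mc S|^2|\mc A|}$ term from Pinsker applied to $D(\theta^{t+1},\widetilde\theta^t)$ with entropy range $\log(|\mc S|^2|\mc A|)$.

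For the epoch-boundary steps I would use only that $\theta^{t+1}$ and $\theta^t$ are both layer-wise distributions, so $\|\theta^{t+1}-\theta^t\|_1\le 2L$, together with the fact that the doubling rule $n_{\ell(t)}(s,a)\ge N_{\ell(t)}(s,a)$ forces the number of distinct epochs up to horizon $T$ to be $\mc O(|\mc S||\mc A|\log(8T/(|\mc S||\mc A|)))$. Multiplying the per-boundary diameter by the epoch count and using the standing assumption $T\ge|\mc S||\mc A|$ (so that $|\mc S||\mc A|\le\sqrt{T|\mc S||\mc A|}$) loosely upper-bounds this contribution by $3L\sqrt{T|\mc S||\mc A|}\log\frac{8T}{|\mc S||\mc A|}$, the first term of the lemma.

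The main obstacle is the moving feasible set introduced by the upper-confidence construction: the optimality comparison for $\theta^{t+1}$ is valid only against points of $\Delta(\ell(t+1),\zeta)$, so the clean one-step argument breaks exactly at epoch boundaries and must be isolated and absorbed into the logarithmically many epoch-change steps. The other delicate point, purely computational, is propagating the smoothing lower bound $\widetilde\theta^t\ge(1-\lambda)\theta^t$ correctly through the strong-convexity constant so as to recover the precise $(1-\lambda)^{-2}$ and $(1-\lambda)^{-1}$ prefactors, and ensuring the quadratic-inequality step separates cleanly into the stated additive terms rather than a single mixed root.
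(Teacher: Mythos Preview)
Your proposal is correct and follows the same two-case decomposition as the paper: within-epoch steps via the three-point identity for the KL-proximal update together with Pinsker and H\"older, and epoch-boundary steps via the crude diameter bound $\|\theta^{t+1}-\theta^t\|_1\le 2L$ combined with the epoch count from Lemma~\ref{lem:epoch_bound}. The one place where you diverge slightly is which divergence you lower-bound by Pinsker. The paper drops $D(\theta^t,\theta^{t+1})$ and applies Pinsker to $D(\theta^{t+1},\widetilde\theta^t)$, then recovers $\|\theta^{t+1}-\theta^t\|_1$ from $\|\theta^{t+1}-\widetilde\theta^t\|_1$ by the triangle inequality; this last step is the true source of the $(1-\lambda)^{-1}$ loss (and, after solving the resulting quadratic, the $(1-\lambda)^{-2}$ in front of the $V$ and $\|\mathbf Q(t)\|_2$ terms as well as the additive $\tfrac{2\lambda L}{1-\lambda}$). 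Your route of applying Pinsker directly to $D(\theta^t,\theta^{t+1})$ is actually cleaner and avoids all $(1-\lambda)$ losses, so your attribution of the $(1-\lambda)^{-2}$ to ``propagating $\widetilde\theta^t\ge(1-\lambda)\theta^t$ through the strong-convexity constant'' and of the $\sqrt{\lambda}$ term to ``Pinsker applied to $D(\theta^{t+1},\widetilde\theta^t)$'' is off---the $\sqrt{\lambda}$ contribution comes from bounding the right-hand-side term $D(\theta^t,\widetilde\theta^t)\le\lambda L\log(|\mc S|^2|\mc A|)$ via Lemma~\ref{lem:breg_diff}---but the within-epoch bound you will obtain is at least as strong as the stated one, and the overall argument goes through.
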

\begin{remark} \label{re:change}
The proof of Lemma \ref{lem:term_III_th_diff} uses the fact that the confidence set of the transition model $P$ changes only  $\sqrt{T|\mathcal S | |\mathcal A |}\log(8T/(|\mathcal S | |\mathcal A |))$ times due to the doubling of epoch length in Algorithm \ref{alg:ucpd}. Within each epoch where the confidence set is unchanged,  we further show $\|\theta^{t+1}-\theta^{t}\|_1$ is sufficiently small. Thus, the epoch length doubling eventually ensures that the constraint violation is in the level of $\tilde{\cO}(\sqrt{T})$. 
\end{remark}

\begin{proof}[Proof of the Constraint Violation Bound in Theorem \ref{thm:main}] We decompose $\mathrm{Violation}(T)$ as
\begin{align}
\begin{aligned}\label{eq:constr_decomp}
\Bigg\|\Bigg[\sum_{t=1}^{T} \Big(\mf g^t(\overline{\theta}^t) - \mf c\Big)\Bigg]_+\Bigg\|_2 & \leq \sum_{t=1}^{T} \big\|\mf g^t(\theta^t) - \mf g^t(\overline{\theta}^t)\big\|_2+ 
\Bigg\|\Bigg[\sum_{t=1}^{T}\l(\mf g^t(\theta^t) - \mf c\r)\Bigg]_+\Bigg\|_2  \\
&\leq \underbrace{\sum_{t=1}^{T}\big\|\overline{\theta}^t - \theta^t \big\|_1}_{\text{Term (III)}} +\underbrace{\Bigg\|\Bigg[\sum_{t=1}^{T}\Big(\mf g^t(\theta^t) - \mf c\Big)\Bigg]_+\Bigg\|_2}_{\text{Term (IV)}}, 
\end{aligned}
\end{align}
where the second inequality is due to Assumption \ref{as:bounded} that $\|\mf g^t(\theta^t) - \mf g^t(\overline{\theta}^t)\|_2= \big(\sum_{i=1}^I | \langle g_i^t, \theta^t - \overline{\theta}^t \rangle |^2\big)^{\frac{1}{2}} \leq \sum_{i=1}^I \|g_i^t\|_\infty\|\theta^t - \overline{\theta}^t\|_1\leq \|\theta^t - \overline{\theta}^t\|_1$. Thus, it suffices to bound Terms (III) and (IV).

For Term (III), we already have its bound as \eqref{eq:term_I_1}. 
Then, we focus on proving the upper bound of Term (IV). Set $V=L\sqrt{T}$, $\alpha = LT$, $\tau = \sqrt{T}$, and $\lambda = 1/T$ as in the proof of the regret bound. By Lemma \ref{lem:term_III_all}, we know that to bound Term (IV) requires bounding the terms $\|\mathbf{Q}(T+1)\|_2$ and $\sum_{t=1}^{T} \|\theta^{t+1}  - \theta^{t}\|_1$. By Lemma \ref{lem:Q_drift_diff}, combining it with Remark \ref{re:understand_term} and $\psi = \mc O \big( L^2\sqrt{T} + L \log |\mc S| |\mc A| + L^2 \log (T|\mc S| |\mc A|) /\sqrt{T} \big)$ as shown in the proof of the regret bound, letting $\overline{\sigma}/4 \geq \zeta  (\overline{\sigma}/2 + 2 L)$, with probability $1-T\delta$, for all $t \in \{1,\ldots, T+1\}$, the following inequality holds
\begin{align}
\|\mathbf{Q}(t)\|_2 \leq \mc O \l( L^2\sqrt{T} \log(L/\delta) \r), \label{eq:cons_bound_Q}
\end{align}
where we use $\log x \leq  \sqrt{x}$. This gives the bound of $\|\mathbf{Q}(T+1)\|_2$ that $
\|\mathbf{Q}(T+1)\|_2 \leq \mc O \big(L^2\sqrt{T} \log(L/\delta) \big)$.

Furthermore, by Lemma \ref{lem:term_III_th_diff}, we know that the key to bounding $\sum_{t=1}^{T} \|\theta^{t+1}  - \theta^{t}\|_1$ is also the drift bound for $\mathbf{Q}(t)$. Therefore, by \eqref{eq:cons_bound_Q} and the settings of the parameters $\alpha, \lambda, V$, we have
\begin{align}
\sum_{t=1}^T \|\theta^{t+1}-\theta^{t}\|_1 \leq \mc O \l( L |\mc S| \sqrt{|\mc A| T} \log ( T |\mc S| |\mc A|/\delta) \r),\label{eq:cons_delta_theta} 
\end{align}
by the facts that $L \leq |\mc S|$ , $|\mc S| > 1$, $|\mc A| > 1$ and the condition $|\mc S| |\mc A| \leq  T$. Thus combining \eqref{eq:cons_bound_Q} and \eqref{eq:cons_delta_theta} with Lemma \ref{lem:term_III_all}, and letting $\delta = \zeta / T$, then with probability at least $1-\zeta$, we have
\begin{align*}
\text{Term (IV)} \leq \mc O \l( L |\mc S| \sqrt{|\mc A| T} \log (T |\mc S| |\mc A|/\delta) \r).
\end{align*}
Combining results for Term (III) and Term (IV) with \eqref{eq:constr_decomp}, by union bound, with probability at least $1-4\zeta$, the constraint violation is bounded as $\mathrm{Violation}(T) \leq  \widetilde{\mc O} ( L|\mathcal S| \sqrt{T|\mathcal A|} ).
$ This finishes the proof.
\end{proof}

\section{Conclusion}
In this paper, we propose a new upper confidence primal-dual algorithm to solve online constrained episodic MDPs with arbitrarily varying losses and stochastically changing constraints. In particular, our algorithm does not require transition models of the MDPs and delivers an $\mathcal{O}(L|\mc S|\sqrt{|\mc A|T})$ upper bounds of both the regret and the constraint violation. %The analysis builds upon a Lagrange multiplier condition on a sequence of time varying constrained problems. Such a condition enables a new drift analysis making use of the upper confidence bound together with the Lazy update nature of the sequence of confidence interval constructions on the models.

\bibliographystyle{ims}
\bibliography{bibliography}

\newpage
\begin{appendices}
%!TEX root =OwC_online.tex

\onecolumn
\vspace{1em}
\centerline{ {\LARGE Supplementary Material} }
\renewcommand{\thesection}{\Alph{section}}

\vspace{1em}

\section{Efficient Solver for Subproblem \eqref{eq:optimistic-lp}} \label{sec:linear_solver}

In this section, we provide the details on how to efficiently solve the subproblem \eqref{eq:optimistic-lp}. We can further rewrite \eqref{eq:optimistic-lp} into the following equivalent form
\begin{align*}
\theta^{t}=\argmin_{\theta\in\Delta(\ell(t), \zeta)} ~\alpha^{-1}\langle \psi^{t-1}, \theta \rangle +  D(\theta, \widetilde{\theta}^{t-1}),
\end{align*}
where we let $\psi^{t-1} := Vf^{t-1} + \sum_{i=1}^IQ_i(t-1)g_i^{t-1}$. According to \citet{rosenberg2019online}, solving the above problem is decomposed to the following two steps
\begin{align}
\underline{\theta}^t&=\argmin_{\theta} ~\alpha^{-1}\langle \psi^{t-1}, \theta \rangle +  D(\theta, \widetilde{\theta}^{t-1}),\label{eq:subpro_step1}\\
\theta^{t}&=\argmin_{\theta\in\Delta(\ell(t), \zeta)} ~ D(\theta, \underline{\theta}^t). \label{eq:subpro_step2}
\end{align}
Note that the first step, i.e., \eqref{eq:subpro_step1},  is an unconstrained problem, which has a closed-form solution
\begin{align}
\underline{\theta}^t(s,a,s') = \widetilde{\theta}^{t-1}(s,a,s') e^{-\psi^{t-1}/\alpha}, ~~~~\forall (s,a,s')\in \mathcal S_k\times \mathcal A \times \mathcal S_{k+1}, ~~\forall k =0, \ldots, L-1. \label{eq:sol_subpro_step1}
\end{align}
The second step, i.e., \eqref{eq:subpro_step2}, can be viewed as a projection of $\underline{\theta}^t(s,a,s')$ onto the feasible set $\Delta(\ell(t), \zeta)$.  With the definition of the feasible set as in \eqref{eq:subpro_fea_set}, further by Theorem 4.2 of \citet{rosenberg2019online} and Lemma 7 of \citet{jin2019learning}, and plugging in $\underline{\theta}^t$ computed as \eqref{eq:sol_subpro_step1},  we have
\begin{align}
\theta^{t}(s,a,s') = \frac{\widetilde{\theta}^{t-1}(s,a,s')}{Z_{t}^{k(s)}(\mu^{t}, \beta^{t})} e^{B_{t}^{\mu^{t}, \beta^{t}}(s,a,s')}, \label{eq:sol_subpro_step2}
\end{align}
where $k(s)$ is a mapping from state $s$ to its associated layer index, and $Z_{t}^{k}(\mu, \beta)$ and $B_{t}^{\mu, \beta}$ are defined as follows
\begin{align*}
B_{t}^{\mu, \beta}(s,a,s') &= \mu^{-}(s,a,s') - \mu^{+}(s,a,s') + (\mu^{+}(s,a,s')+\mu^{-}(s,a,s')) \varepsilon_{\ell(t)}^{\zeta}(s,a) + \beta(s') \\
&\quad - \beta(s)-\psi^{t-1}(s,a,s') / \alpha  - \sum_{s''\in \mathcal{S}_{k(s) + 1}} \widehat{P}_{\ell(t)}(s''|s,a) (\mu^{-}(s,a,s'') - \mu^{+}(s,a,s'')),\\
Z_{t}^{k}(\mu, \beta) &= \sum_{s\in \mathcal{S}_k} \sum_{a\in \mathcal{A}} \sum_{s'\in \mathcal{S}_{k+1}}\widetilde{\theta}^{t-1}(s,a,s') e^{B_{t}^{\mu, \beta}(s,a,s')},
\end{align*}
where $\beta : \mathcal{S} \rightarrow \mathbb{R}$ and $\mu = (\mu^{+}, \mu^{-})$ with $\mu^{+}, \mu^{-}:\mathcal{S}\times\mathcal{A}\times\mathcal{S} \rightarrow \mathbb{R}_{\geq 0}$. Specifically, the variables $\mu^{t}, \beta^{t}$ in \eqref{eq:sol_subpro_step2} are obtained by solving a convex optimization with only non-negativity constraints, which is
\begin{align}
\mu^t, \beta^t = \argmin_{\mu, \beta \geq 0}\sum_{k=0}^{L-1}\log Z_{t}^{k}(\mu, \beta). \label{eq:non-neg_sol} 
\end{align}
Therefore, by solving \eqref{eq:non-neg_sol}, we can eventually compute $\theta^{t}$ by \eqref{eq:sol_subpro_step2}. Since \eqref{eq:non-neg_sol} is associated with a convex optimization with only non-negativity constraints, it can be solved much efficiently.

\section{Structure of Optimization Problem Sequence} \label{sec:A0}

We have the following simple sufficient condition which is a direct corollary of Lemma 1 in \citet{nedic2009approximate}:
\begin{lemma}
Suppose that the problem \eqref{eq:partial-static-prob} is feasible. Then, the set of Lagrange multipliers $\mathcal{V}_{t,\tau}^*$ defined in Assumption \ref{as:selm} is nonempty and bounded if the Slater condition holds, i.e., $\exists \theta\in\Delta,~\varepsilon>0$ such that $\dotp{g_i}{\theta} \leq c_i-\varepsilon,~\forall i \in \{1,\ldots,I\}$.  
\end{lemma}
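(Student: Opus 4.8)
The plan is to treat \eqref{eq:partial-static-prob} as a finite-dimensional convex (indeed linear) program over the compact polytope $\Delta$ and invoke standard Lagrangian duality, but to make the dependence on the Slater point fully explicit so that the a-priori bound on $\mathcal{V}_{t,\tau}^*$ drops out directly. Fix $t$ and $\tau$, write $q^* := \langle f^{(t,\tau)}, \theta^*_{t,\tau}\rangle$ for the primal optimal value (finite and attained, since the feasibility hypothesis guarantees a nonempty feasible set, $\Delta$ is compact, and the objective is linear), and let $\bar\theta\in\Delta$ with $\langle g_i,\bar\theta\rangle - c_i \le -\varepsilon$ for all $i$ be the Slater point. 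The two claims to establish are nonemptiness (dual attainment with zero gap) and boundedness of $\mathcal{V}_{t,\tau}^* = \mathrm{argmax}_{\eta\geq0}\, q^{(t,\tau)}(\eta)$.

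For nonemptiness I would run the perturbation/supporting-hyperplane argument. The set $\{(u,w) : \exists\,\theta\in\Delta,\ \langle g_i,\theta\rangle - c_i \le u_i\ \forall i,\ \langle f^{(t,\tau)},\theta\rangle \le w\}$ is convex because $\Delta$ is convex and the maps are affine, and it is ``upward closed'' in $(u,w)$. The point $(0,q^*)$ lies on its lower boundary (no feasible $\theta$ beats $q^*$ at $u=0$), so it admits a supporting hyperplane with normal $(\mu,\nu)$, where the upward-closedness forces $\mu\ge0$ and $\nu\ge0$. The Slater point is what rules out the degenerate case $\nu=0$; normalizing $\nu=1$ then yields a vector $\eta^* := \mu \in \mathbb{R}^I_+$ with $q^{(t,\tau)}(\eta^*) = q^*$, i.e.\ strong duality holds and the dual optimum is attained. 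This is precisely the content invoked from \citet{nedic2009approximate}, so once convexity and Slater are verified the citation closes this step.

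For boundedness I would plug $\bar\theta$ into the definition of the dual function at an arbitrary $\eta^*\in\mathcal{V}_{t,\tau}^*$. Using $\eta^*\ge0$ and $\langle g_i,\bar\theta\rangle - c_i \le -\varepsilon$,
\begin{equation*}
q^{(t,\tau)}(\eta^*) \le \langle f^{(t,\tau)},\bar\theta\rangle + \sum_{i=1}^I \eta_i^*\big(\langle g_i,\bar\theta\rangle - c_i\big) \le \langle f^{(t,\tau)},\bar\theta\rangle - \varepsilon \sum_{i=1}^I \eta_i^*.
\end{equation*}
Combining with strong duality $q^{(t,\tau)}(\eta^*) = q^*$ and rearranging gives $\sum_{i=1}^I \eta_i^* \le \big(\langle f^{(t,\tau)},\bar\theta\rangle - q^*\big)/\varepsilon$, a finite quantity under Assumption \ref{as:bounded}. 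Since every coordinate of $\eta^*$ is nonnegative, this controls $\|\eta^*\|_1$ and hence $\|\eta^*\|_2$ uniformly over $\mathcal{V}_{t,\tau}^*$, proving boundedness.

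I expect the genuinely delicate step to be the dual attainment rather than the bound: the bound is a one-line consequence of nonnegativity, whereas existence of a maximizer requires either the supporting-hyperplane construction above or, equivalently, the Weierstrass argument that the superlevel set $\{\eta\ge0 : q^{(t,\tau)}(\eta) \ge q^*-1\}$ is closed (by upper semicontinuity of $q^{(t,\tau)}$, an infimum of affine functions of $\eta$) and bounded (by the same Slater computation applied to near-optimal $\eta$), hence compact, so that the concave, upper-semicontinuous $q^{(t,\tau)}$ attains its maximum there. Because the problem is a linear program over a compact polytope this is entirely standard, and I would streamline the writeup by appealing directly to \citet{nedic2009approximate} after checking its hypotheses.
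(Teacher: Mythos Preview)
Your proposal is correct and matches the paper's approach: the paper does not give an independent proof but simply states the lemma as ``a direct corollary of Lemma~1 in \citet{nedic2009approximate},'' and your argument is exactly the standard Slater/strong-duality computation underlying that cited result, with the boundedness step made explicit via the Slater point. Your final remark that one can streamline by citing \citet{nedic2009approximate} after verifying its hypotheses is precisely what the paper does.
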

In fact, it can be shown that some certain constraint qualification condition more general than Slater condition can imply the boundedness of Lagrange multipliers (see, for example, Lemma 18 of \citet{wei2019online}). Thus, Assumption \ref{as:selm} is weaker than Slater condition commonly adopted in previous constrained online learning works. The motivation for such a Lagrange multiplier condition is that it is a sufficient condition  of a key structural property on the dual function $q^{(t,\tau)}(\eta)$, namely, the error bound condition. Formally, we have the following definition:
\begin{definition}[Error Bound Condition (EBC)]\label{def:ebc}
Let $h(\mathbf{x})$ be a concave function over $\mathbf{x}\in\mathcal{X}$, where $\mathcal{X}$ is closed and convex. 
Suppose $\Lambda^*:= \argmax_{\mathbf{x}\in\mathcal{X}}h(\mathbf{x})$ is non-empty. The function $h(\mathbf{x})$ satisfies the EBC if there exists constants $\vartheta,~\sigma>0$ such that for any $\mf x\in\mathcal{X}$ satisfying\footnote{We let $\dist(\mf x, \Lambda^*) := \min_{\mf x' \in \Lambda^*} \|\mf x - \mf x' \|_2$ as the Euclidean distance between a point $\mf x$ and the set $\Lambda^*$.}  
$\dist(\mf x,\Lambda^*)\geq \vartheta$,
\begin{equation*}
 h(\mf x^*)-h(\mf x) \geq \sigma \cdot \dist(\mf x,\Lambda^*) ~~\text{ with } \mf x^* \in \Lambda^*.
\end{equation*}
\end{definition}
Note that in Definition \ref{def:ebc}, $\Lambda^*$ is a closed convex set, which follows from the fact that $h(\mf x)$ is a concave function and thus all superlevel sets are closed and convex. 
The following lemma, whose proof can be found in Lemma 5 of \citet{wei2019online}, shows the relation between the Lagrange multiplier condition and the dual function:
\begin{lemma}
Fix $T\geq 1$. Suppose Assumption \ref{as:selm} holds, then for any $t\in\{1,\ldots,T\}$ and $\tau=\sqrt{T}$, the dual function $q^{(t,\tau)}(\eta)$ satisfies the EBC with $\sigma > 0$ and $\vartheta > 0$.  
\end{lemma}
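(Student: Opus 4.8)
The plan is to prove the general convex-analytic fact that every upper semicontinuous concave function whose set of maximizers is nonempty and bounded obeys the EBC of Definition \ref{def:ebc}, and then to specialize it to $h := q^{(t,\tau)}$ on the domain $\mathcal{X} = \mathbb{R}^I_+$ with $\Lambda^* := \mathcal{V}_{t,\tau}^*$. First I would record the structure. Since $q^{(t,\tau)}(\eta)=\min_{\theta\in\Delta}\big[f^{(t,\tau)}(\theta)+\sum_{i=1}^I\eta_i(g_i(\theta)-c_i)\big]$ is a pointwise minimum of functions that are affine in $\eta$, it is concave and upper semicontinuous on $\mathbb{R}^I_+$. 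By Assumption \ref{as:selm}, $\mathcal{V}_{t,\tau}^*$ is nonempty and bounded with $\|\eta\|_2\le B$; being a top superlevel set $\{\eta: q^{(t,\tau)}(\eta)\ge h^*\}$ of a concave upper semicontinuous function (where I write $h^*:=q^{(t,\tau)}(\eta^*_{t,\tau})$ for the optimal value), it is moreover closed and convex, hence compact.

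Next I would fix an arbitrary $\vartheta>0$ and introduce the ``sphere'' $S_\vartheta := \{\eta\in\mathbb{R}^I_+: \dist(\eta,\mathcal{V}_{t,\tau}^*)=\vartheta\}$. Because $\eta\mapsto\dist(\eta,\mathcal{V}_{t,\tau}^*)$ is continuous, vanishes on the bounded set $\mathcal{V}_{t,\tau}^*$, and grows without bound on the unbounded set $\mathbb{R}^I_+$, the intermediate value property makes $S_\vartheta$ nonempty; it is also closed and bounded, since every $\eta\in S_\vartheta$ satisfies $\|\eta\|_2\le B+\vartheta$, hence compact. As each point of $S_\vartheta$ lies at positive distance from the argmax set, none is a maximizer, so attaining the maximum of the upper semicontinuous $h$ on the compact $S_\vartheta$ gives $h_\vartheta := \max_{\eta\in S_\vartheta}h(\eta)<h^*$. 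I then set $\sigma := (h^*-h_\vartheta)/\vartheta>0$, and claim this pair $(\vartheta,\sigma)$ witnesses the EBC.

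The core step is an interpolation toward the projection. For any $\eta\in\mathbb{R}^I_+$ with $r:=\dist(\eta,\mathcal{V}_{t,\tau}^*)\ge\vartheta$, let $\eta^*$ be its unique Euclidean projection onto the closed convex set $\mathcal{V}_{t,\tau}^*$ and set $\eta':=\eta^*+(\vartheta/r)(\eta-\eta^*)$. The variational inequality for projection, $\langle\eta-\eta^*,z-\eta^*\rangle\le 0$ for all $z\in\mathcal{V}_{t,\tau}^*$, scales by $\vartheta/r$ to show that $\eta^*=\mathrm{proj}(\eta')$ as well, whence $\dist(\eta',\mathcal{V}_{t,\tau}^*)=\|\eta'-\eta^*\|_2=\vartheta$; convexity of $\mathbb{R}^I_+$ gives $\eta'\in\mathbb{R}^I_+$, so $\eta'\in S_\vartheta$ and $h(\eta')\le h_\vartheta$. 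Writing $s:=\vartheta/r\in(0,1]$ so that $\eta'=(1-s)\eta^*+s\eta$, concavity yields $h(\eta')\ge(1-s)h^*+s\,h(\eta)$; combining this with $h(\eta')\le h_\vartheta$ and rearranging gives $h^*-h(\eta)\ge (h^*-h_\vartheta)/s=\sigma r=\sigma\cdot\dist(\eta,\mathcal{V}_{t,\tau}^*)$, which is exactly the EBC for $h=q^{(t,\tau)}$.

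I expect the only delicate point to be the projection identity $\dist(\eta',\mathcal{V}_{t,\tau}^*)=\vartheta$, namely that pulling $\eta$ radially toward its projection leaves the projection fixed so that the distance shrinks by exactly the factor $s$ rather than by more; this is precisely where the convexity and boundedness of $\mathcal{V}_{t,\tau}^*$, inherited from Assumption \ref{as:selm} and the concavity of $q^{(t,\tau)}$, are indispensable. Everything else reduces to the Weierstrass attainment of $h_\vartheta$ on the compact sphere and a one-line concavity rearrangement, so the argument is self-contained and matches the statement of Lemma 5 of \citet{wei2019online}.
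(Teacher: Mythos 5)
Your proof is correct, and it is genuinely more self-contained than what the paper does: the paper never proves this lemma at all, but simply defers to Lemma 5 of \citet{wei2019online}, so the entire convex-analytic content that you supply is outsourced to that reference. Your argument is the standard (and right) one for this fact: $q^{(t,\tau)}$ is a pointwise minimum of functions affine in $\eta$, hence concave and upper semicontinuous (indeed finite and continuous, since $\Delta$ is a compact polytope); $\mathcal{V}_{t,\tau}^*$ is a nonempty, bounded, closed, convex top superlevel set; the sphere $S_\vartheta$ is nonempty (IVT along a segment in the convex set $\mathbb{R}^I_+$) and compact, so $h_\vartheta<h^*$ is attained; and the radial-interpolation step is handled correctly — in particular you correctly verify, via the variational inequality $\langle\eta-\eta^*,z-\eta^*\rangle\le 0$ scaled by $\vartheta/r$, that the projection point is unchanged under the contraction, so that $\dist(\eta',\mathcal{V}_{t,\tau}^*)$ equals exactly $\vartheta$ and the concavity rearrangement delivers $h^*-h(\eta)\ge\sigma\,\dist(\eta,\mathcal{V}_{t,\tau}^*)$. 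What your route buys: it makes the lemma verifiable within the paper, it isolates the only place where Assumption \ref{as:selm} is used (nonemptiness and boundedness of $\mathcal{V}_{t,\tau}^*$, which give compactness of $S_\vartheta$ and strictness of $h_\vartheta<h^*$), and it actually proves something slightly stronger than stated — the EBC holds for \emph{every} $\vartheta>0$ with a $\vartheta$-dependent $\sigma=(h^*-h_\vartheta)/\vartheta$, not merely for some pair. One caveat worth stating explicitly if you write this up: the constants $(\sigma,\vartheta)$ you produce depend on $(t,\tau)$ through $h_\vartheta$; this matches the lemma, but the uniform bounds $\overline{\sigma},\overline{\vartheta}$ used in Theorem \ref{thm:main} remain a separate assumption and do not follow from your construction.
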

This lemma is equivalent to Lemma \ref{lem:leb} in the main text.

\section{Proofs of the Lemmas in  Section \ref{sec:regret}} \label{sec:A}

\subsection{Proof of Lemma \ref{lem:term_I}} 
We first provide Lemmas \ref{lem:num_seq_bound} and \ref{lem:err_shr} below. Then, we give the proof of Lemma \ref{lem:term_I} based on these lemmas.

\begin{lemma}[Lemma 19 in \citet{jaksch2010near}] \label{lem:num_seq_bound} For any sequence of numbers $x_1, \ldots, x_n$ with $0\leq x_k \leq X_{k-1}:=\max \big\{1, \sum_{i=1}^{k-1} x_i \big\}$ with $1\leq k \leq n$, the following inequality holds
\begin{align*}
\sum_{k=1}^n \frac{x_k}{\sqrt{X_{k-1}}} \leq (\sqrt{2} + 1)\sqrt{X_n}.
\end{align*}

\end{lemma}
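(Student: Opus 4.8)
The plan is to treat this as a purely combinatorial summation bound and prove it by a telescoping argument over the increments of $\sqrt{X_k}$, after first isolating the degenerate regime where the partial sums stay below $1$. Writing $Y_k := \sum_{i=1}^{k} x_i$ with $Y_0 = 0$, so that $X_k = \max\{1, Y_k\}$ and $X_0 = 1$, the first thing I would establish is the two-sided bound $1 \le X_{k-1} \le X_k \le 2X_{k-1}$ for every $k$. The lower bounds are immediate from $x_k \ge 0$ and the floor $\max\{1,\cdot\}$. The upper bound $X_k \le 2X_{k-1}$ is precisely where the hypothesis $x_k \le X_{k-1}$ is used: since $X_k = \max\{1, Y_{k-1} + x_k\} \le \max\{1, Y_{k-1}\} + x_k = X_{k-1} + x_k \le 2X_{k-1}$.

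The engine of the argument is the pointwise inequality $\frac{x_k}{\sqrt{X_{k-1}}} \le (\sqrt 2 + 1)(\sqrt{X_k} - \sqrt{X_{k-1}})$. Whenever the floor is inactive, i.e. $X_k = X_{k-1} + x_k$, I would prove it by factoring $X_k - X_{k-1} = (\sqrt{X_k} - \sqrt{X_{k-1}})(\sqrt{X_k} + \sqrt{X_{k-1}})$ and then bounding $\sqrt{X_k} + \sqrt{X_{k-1}} \le (\sqrt 2 + 1)\sqrt{X_{k-1}}$, which follows from $X_k \le 2X_{k-1}$. Summing this over $k$ telescopes to $(\sqrt 2 + 1)\sqrt{X_n}$. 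The main obstacle is that this pointwise inequality genuinely \emph{fails} in the initial regime where $Y_{k-1} < 1$: there $X_{k-1} = 1$ is pinned at the floor, the increment $X_k - X_{k-1}$ can be strictly smaller than $x_k$ or even zero, so the term $x_k/\sqrt{X_{k-1}} = x_k$ need not be dominated by $(\sqrt 2 + 1)(\sqrt{X_k} - \sqrt{X_{k-1}})$. For the same reason a naive induction on $n$ does not close. This is the step that requires care.

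To get past it I would split the sum at the first index $m := \min\{k : Y_k \ge 1\}$, handling the easy case $Y_n < 1$ separately (there every $X_k = 1$, so the whole sum equals $Y_n < 1 \le (\sqrt 2 + 1)\sqrt{X_n}$). For $k \le m$ one has $Y_{k-1} < 1$, hence $X_{k-1} = 1$, so these terms contribute exactly $\sum_{k=1}^{m} x_k = Y_m = Y_{m-1} + x_m < 1 + X_{m-1} = 2$; since $X_m = Y_m < 2$ this piece is at most $\sqrt 2\,\sqrt{X_m}$. For $k > m$ the floor is inactive, so the telescoping bound applies and contributes $(\sqrt 2 + 1)(\sqrt{X_n} - \sqrt{X_m})$. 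Adding the two pieces gives $\sqrt 2\,\sqrt{X_m} + (\sqrt 2 + 1)(\sqrt{X_n} - \sqrt{X_m}) = (\sqrt 2 + 1)\sqrt{X_n} - \sqrt{X_m} \le (\sqrt 2 + 1)\sqrt{X_n}$, which is exactly the claimed bound.
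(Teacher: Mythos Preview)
Your argument is correct. The paper does not actually prove this lemma; it merely quotes it as Lemma 19 of \citet{jaksch2010near} and uses it as a black box inside the proof of Lemma \ref{lem:term_I}. So there is no ``paper's own proof'' to compare against, and your self-contained elementary derivation is a genuine addition.

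A couple of minor remarks on the write-up. First, when you say ``whenever the floor is inactive, i.e.\ $X_k = X_{k-1} + x_k$'', what you really use for $k>m$ is that \emph{both} $Y_{k-1}\ge 1$ and $Y_k\ge 1$, so that $X_{k-1}=Y_{k-1}$ and $X_k=Y_k$; this is exactly what your choice of $m$ guarantees, but it is worth stating explicitly. Second, the chain ``$Y_m<2$ and $X_m=Y_m$, hence the first piece is at most $\sqrt 2\,\sqrt{X_m}$'' is the step $Y_m\le\sqrt 2\,\sqrt{Y_m}\iff Y_m\le 2$, which is fine; you might spell out that equivalence in one line. With those two clarifications the proof reads cleanly and the telescoping plus splitting at $m$ gives exactly $(\sqrt 2+1)\sqrt{X_n}-\sqrt{X_m}\le(\sqrt 2+1)\sqrt{X_n}$ as you wrote.
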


\begin{lemma} \label{lem:err_shr} Let $\widehat{d}_t(s)$ and $d_t(s)$ be the state stationary distributions associated with $\theta^t$ and $\overline{\theta}^t$ respectively, and $\widehat{P}_{\ell(t)}(s'|a,s)$ and $P(s'|a,s)$ be the corresponding transition distributions. Denote $\pi_t(a|s)$ as the policy at episode $t$. There are $\theta^t(s,a,s') = \widehat{d}_t (s)\pi_t(a|s) \widehat{P}_{\ell(t)}(s'|a,s)$ and $\overline{\theta}^t(s,a,s) = d_t (s)\pi_t(a|s) P(s'|a,s)$. On the other hand, there are also $\widehat{d}_t(s')=\sum_{s \in\mc S_k}\sum_{a \in \mc A} \theta^t(s,a,s'), \forall s' \in \mc S_{k+1}$, and $d_t(s')=\sum_{s\in \mc S_k}\sum_{a \in \mc A} \theta^t(s,a,s'), \forall s' \in \mc S_{k+1}$. Then, we have the following inequality
\begin{align*}
\|\theta^t - \overline{\theta}^t\|_1 \leq\sum_{k=0}^{L-1}\sum_{j = 0}^{k}\sum_{s \in \mc S_j} \sum_{a\in \mc A} \mu_t(s,a)\|\widehat{P}_{\ell(t)}(\cdot|s,a) - P(\cdot|s,a)\|_1,
\end{align*}
where we let $\mu_t(s,a) = d_t(s) \pi_t(a|s)$.
\end{lemma}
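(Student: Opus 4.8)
The plan is to prove the bound by a layer-by-layer error-propagation argument, exploiting the loop-free structure so that the discrepancy accumulates additively across the $L$ layers. Since both $\theta^t$ and $\overline{\theta}^t$ are generated by the \emph{same} policy $\pi_t$ and differ only through the transition kernel and the induced state distribution, I would first record, for $s\in\mc S_k$, $a\in\mc A$, $s'\in\mc S_{k+1}$, the single-layer decomposition
\[
\theta^t(s,a,s') - \overline{\theta}^t(s,a,s') = d_t(s)\pi_t(a|s)\big[\widehat{P}_{\ell(t)}(s'|a,s) - P(s'|a,s)\big] + \big[\widehat{d}_t(s) - d_t(s)\big]\pi_t(a|s)\widehat{P}_{\ell(t)}(s'|a,s),
\]
obtained by adding and subtracting the hybrid term $d_t(s)\pi_t(a|s)\widehat{P}_{\ell(t)}(s'|a,s)$. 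This isolates a \textbf{local} transition-error contribution from a \textbf{propagated} state-distribution-error contribution.

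Next I would apply the triangle inequality to the two summands and then sum over $s'\in\mc S_{k+1}$ and over $(s,a)\in\mc S_k\times\mc A$. Summing the first term over $s'$ produces exactly $d_t(s)\pi_t(a|s)\|\widehat{P}_{\ell(t)}(\cdot|s,a) - P(\cdot|s,a)\|_1 = \mu_t(s,a)\|\widehat{P}_{\ell(t)}(\cdot|s,a) - P(\cdot|s,a)\|_1$, while summing the second term over $(a,s')$ collapses to $|\widehat{d}_t(s)-d_t(s)|$, since $\pi_t(\cdot|s)$ and $\widehat{P}_{\ell(t)}(\cdot|a,s)$ each sum to one. Writing $e_k := \sum_{s\in\mc S_k}|\widehat{d}_t(s)-d_t(s)|$ and $b_k := \sum_{s\in\mc S_k}\sum_{a\in\mc A}\mu_t(s,a)\|\widehat{P}_{\ell(t)}(\cdot|s,a)-P(\cdot|s,a)\|_1$, the layer-$k$ block of $\|\theta^t-\overline{\theta}^t\|_1$ is thus bounded by $b_k + e_k$.

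The key step is then to control the propagated error $e_k$ by a recursion. Using the marginalization identities $\widehat{d}_t(s')=\sum_{s\in\mc S_k}\sum_{a}\theta^t(s,a,s')$ and $d_t(s')=\sum_{s\in\mc S_k}\sum_{a}\overline{\theta}^t(s,a,s')$ for $s'\in\mc S_{k+1}$, together with the triangle inequality, I obtain $e_{k+1}\leq b_k + e_k$, with base case $e_0=0$ because the initial layer is the singleton $\{s_0\}$ and both state distributions are the point mass there. Unrolling this recursion yields $e_k\leq\sum_{j=0}^{k-1}b_j$.

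Finally, I would assemble the pieces: summing the layer bounds $b_k+e_k$ over $k=0,\ldots,L-1$ and substituting $e_k\leq\sum_{j=0}^{k-1}b_j$ gives
\[
\|\theta^t-\overline{\theta}^t\|_1 \leq \sum_{k=0}^{L-1}\Big(b_k + \sum_{j=0}^{k-1}b_j\Big) = \sum_{k=0}^{L-1}\sum_{j=0}^{k}b_j,
\]
which, after expanding $b_j$, is precisely the claimed inequality. The only point requiring care is the bookkeeping of the two distinct roles the layer index plays — one as the index of the current layer whose $\ell_1$ block is being bounded, and one as the running index over all earlier layers whose transition errors have already propagated into the state distribution — so that the nested double sum $\sum_{k}\sum_{j\le k}$ emerges correctly. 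There is no analytic difficulty beyond this telescoping.
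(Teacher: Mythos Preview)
Your proposal is correct and follows essentially the same approach as the paper: both add and subtract the hybrid term $d_t(s)\pi_t(a|s)\widehat{P}_{\ell(t)}(s'|a,s)$, apply the triangle inequality to split into a transition-error term and a state-distribution-error term, and then propagate the latter through the layers via a recursion that unrolls into the nested double sum. Your explicit $e_k,b_k$ notation and the recursion $e_{k+1}\le b_k+e_k$ make the telescoping slightly more transparent than the paper's ``recursively applying the above inequality,'' but the argument is the same.
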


\begin{proof} [Proof of Lemma \ref{lem:err_shr}] By the definitions of $\widehat{d}_t$, $d_t$, $\widehat{P}_{\ell(t)}$, $P$, and $\pi_t$ shown in Lemma \ref{lem:err_shr}, we have
\begin{align*}
\|\theta^t - \overline{\theta}^t\|_1 &=  \sum_{k=0}^{L-1} \sum_{s \in \mc S_k} \sum_{a \in \mc A}  \| \theta^t(a,s, \cdot ) - \overline{\theta}^t(a,s, \cdot )  \|_1 \\
&= \sum_{k=0}^{L-1} \sum_{s \in \mc S_k} \sum_{a \in \mc A}  \pi_t(a | s )   \| \widehat{P}_{\ell(t)} ( \cdot | a,s) \widehat{d}_t (s) -  P( \cdot | a,s)d_t (s)   \|_1 \\
&= \sum_{k=0}^{L-1} \sum_{s \in \mc S_k} \sum_{a \in \mc A}  \pi_t(a | s )   \| \widehat{P}_{\ell(t)} ( \cdot | a,s) \widehat{d}_t (s)  -\widehat{P} ( \cdot | a,s) d_t (s) \\
& \qquad \qquad \qquad \qquad \qquad  + \widehat{P} ( \cdot | a,s) d_t (s)  -  P( \cdot | a,s)d_t (s)\|_1 .
\end{align*}
Thus, with the above equalities, and by triangle inequality for $\|\cdot\|_1$, we can bound the term $\|\theta^t-\overline{\theta}^t\|_1$ in the following way
\begin{align}
\begin{aligned}\label{eq:prob_decomp}
\|\theta^t - \overline{\theta}^t\|_1 &\leq   \sum_{k=0}^{L-1} \sum_{s \in \mc S_k} \sum_{a \in \mc A}  \pi_t(a | s )  [ \| \widehat{P}_{\ell(t)} ( \cdot | a,s) \widehat{d}_t (s)  -\widehat{P}_{\ell(t)} ( \cdot | a,s) d_t (s) \|_1 \\
& \qquad  + \|\widehat{P}_{\ell(t)} ( \cdot | a,s) d_t (s)  -  P( \cdot | a,s)d_t (s)   \|_1 ] \\
& \leq  \sum_{k=0}^{L-1} \sum_{s \in \mc S_k} \sum_{a \in \mc A}  \pi_t(a | s )  d_t (s)   \| \widehat{P}_{\ell(t)} ( \cdot | a,s) -  P( \cdot | a,s)\|_1 \\
&\qquad +   \sum_{k=0}^{L-1} \sum_{s \in \mc S_k} \sum_{a \in \mc A}  \pi_t(a | s )   \| \widehat{P}_{\ell(t)} ( \cdot | a,s)\|_1  \cdot  |\widehat{d}_t (s)  -d_t (s)|.
\end{aligned}
\end{align}
Then we need to bound the last two terms of \eqref{eq:prob_decomp} respectively. For the first term of RHS in \eqref{eq:prob_decomp}, we have
\begin{align}
\begin{aligned}\label{eq:prob_decomp_1}
&\sum_{k=0}^{L-1} \sum_{s \in \mc S_k} \sum_{a \in \mc A}  \pi_t(a | s )  d_t (s)   \| \widehat{P}_{\ell(t)} ( \cdot | a,s) -  P( \cdot | a,s)\|_1 \\ 
&\qquad =  \sum_{k=0}^{L-1} \sum_{s \in \mc S_k} \sum_{a \in \mc A}   \mu_t (s, a)   \| \widehat{P}_{\ell(t)} ( \cdot | a,s) -  P( \cdot | a,s)\|_1,
\end{aligned}
\end{align}
since $\mu_t (s, a) = \pi_t(a | s )  d_t (s)$ denotes the joint distribution probability of $(s,a)$.

Next, we bound the last term of RHS in \eqref{eq:prob_decomp}, which is
\begin{align*}
\sum_{k=0}^{L-1} \sum_{s \in \mc S_k} \sum_{a \in \mc A}  \pi_t(a | s )   \| \hat{P}_{\ell(t)} ( \cdot | a,s)\|_1  \cdot  |\widehat{d}_t (s)  -d_t (s)|  = \sum_{k=0}^{L-1} \sum_{s \in \mc S_k} \sum_{a \in \mc A}  \pi_t(a | s )  |\widehat{d}_t (s)  -d_t (s)|, 
\end{align*}
since $ \| \hat{P}_{\ell(t)} ( \cdot | a,s)\|_1 = \sum_{s'\in \mc S_{k+1}} \hat{P}_{\ell(t)} ( s' | a,s) = 1$. Furthermore, we can bound the last term above as
\begin{align*}
&\sum_{k=0}^{L-1} \sum_{s \in \mc S_k} \sum_{a \in \mc A}  \pi_t(a | s )  |\widehat{d}_t (s)  -d_t (s)| \\
&\qquad =\sum_{k=0}^{L-1} \sum_{s\in \mc S_k} |\widehat{d}_t (s)  -d_t (s)| \\
&\qquad =\sum_{k=1}^{L-1} \sum_{s\in \mc S_k} |\widehat{d}_t (s)  -d_t (s)| \\
&\qquad =\sum_{k=1}^{L-1} \sum_{s\in \mc S_k}  \Big|\sum_{s'' \in \mc S_{k-1}} \sum_{a\in \mc A}\theta^t (s'', a, s)  - \sum_{s'' \in \mc S_{k-1}} \sum_{a\in \mc A} \overline{\theta}^t (s'', a, s) \Big|, 
\end{align*}
where the first equality is due to $\sum_{a\in \mc A} \pi_t(a | s) = 1$, the second equality is due to $\widehat{d}_t(s_0) = d_t(s_0)  = 1$, and the third equality is by the relations $\widehat{d}_t(s)=\sum_{s'' \in \mc S_{k-1}}\sum_{a \in \mc A} \theta^t(s'',a,s)$ and $d_t(s)=\sum_{s'' \in \mc S_{k-1}}\sum_{a \in \mc A} \overline{\theta}^t(s'',a',s)$, $\forall s \in \mc S_k$. Further bounding the last term of the above equation gives
\begin{align*}
&\sum_{k=1}^{L-1} \sum_{s\in \mc S_k}  \bigg|\sum_{s'' \in \mc S_{k-1}}\sum_{a \in \mc A} \theta^t (s'', a, s)  - \sum_{s'' \in \mc S_{k-1}}\sum_{a \in \mc A}  \overline{\theta}^t (s'', a, s) \bigg| \\
&\qquad \leq \sum_{k=1}^{L-1} \sum_{s\in \mc S_k} \sum_{s'' \in \mc S_{k-1}}\sum_{a \in \mc A}   \bigg|\theta^t (s'', a, s)  - \overline{\theta}^t (s'', a, s) \bigg| \\
&\qquad = \sum_{k=1}^{L-1} \sum_{s'' \in \mc S_{k-1}}\sum_{a \in \mc A}  \big\|\theta^t (s'', a,\cdot )  - \overline{\theta}^t (s'', a, \cdot ) \big\|_1 \\
&\qquad = \sum_{k=0}^{L-2} \sum_{s\in \mc S_k} \sum_{a\in \mc A}  \big\|\theta^t (s, a,\cdot )  - \overline{\theta}^t (s, a, \cdot ) \big\|_1, 
\end{align*}
which eventually implies that the last term on RHS of \eqref{eq:prob_decomp} can be bounded as
\begin{align}
\sum_{k=0}^{L-1} \sum_{s \in \mc S_k} \sum_{a \in \mc A}  \pi_t(a | s )   \| P ( \cdot | a,s)\|_1  \cdot  |\widehat{d}_t (s)  -d_t (s)| \leq \sum_{k=0}^{L-2} \sum_{s\in \mc S_k} \sum_{a\in \mc A}  \big\|\theta^t (s, a,\cdot )  - \overline{\theta}^t (s, a, \cdot ) \big\|_1. \label{eq:prob_decomp_2}
\end{align}
Therefore, plugging the bounds \eqref{eq:prob_decomp_1} and \eqref{eq:prob_decomp_2} in \eqref{eq:prob_decomp}, we have
\begin{align*}
\|\theta^t - \bar{\theta}^t\|_1 = & \sum_{k=0}^{L-1} \sum_{s \in \mc S_k} \sum_{a \in \mc A}  \big\| \theta^t(a,s, \cdot ) - \overline{\theta}^t(a,s, \cdot )  \big\|_1 \\
\leq & \sum_{k=0}^{L-1} \sum_{s \in \mc S_k} \sum_{a \in \mc A}   \mu_t (s, a)   \big\| \widehat{P}_{\ell(t)} ( \cdot | a,s) -  P( \cdot | a,s) \big\|_1 \\
&+ \sum_{k=0}^{L-2} \sum_{s\in \mc S_k} \sum_{a\in \mc A}  \big\|\theta^t (s, a,\cdot )  - \overline{\theta}^t (s, a, \cdot ) \big\|_1. 
\end{align*}
Recursively applying the above inequality, we obtain 
\begin{align*}
\|\theta^t - \overline{\theta}^t\|_1 \leq \sum_{k=0}^{L-1}\sum_{j = 0}^{k}\sum_{s \in \mc S_j} \sum_{a\in \mc A} \mu_t(s,a) \Big\|\widehat{P}_{\ell(t)}(\cdot|s,a) - P(\cdot|s,a)\Big\|_1,
\end{align*}
which completes the proof.
\end{proof}

Now, we are in position to give the proof of Lemma \ref{lem:term_I}.

%\begin{lemma*}[Lemma \ref{lem:term_I}] The updating rules in Algorithm \ref{alg:ucpd} ensure that with probability at least $1-2\zeta$, the following inequality holds 
%\begin{align*}
%\sum_{t=0}^{T-1}\|\theta^t - \overline{\theta}^t\|_1\leq
%(\sqrt 2+1)L|\mc S|\sqrt{2T|\mc A|\log\frac{T|\mc S||\mc A|}{\zeta}} + 2L^2\sqrt{2T\log\frac{L}{\zeta}}. 
%\end{align*} 
%\end{lemma*}

\begin{proof} [Proof of Lemma \ref{lem:term_I}]
The proof for Lemma \ref{lem:term_I} adopts similar ideas in \citet{neu2012adversarial, rosenberg2019online}.

We already know $\widehat{P}_{\ell(t)}(s'|s,a)=\frac{\theta^t(s,a,s')}{\sum_{s'\in S_{k+1}}\theta^t(s,a,s')}$ and 
$\mu_t(s,a)=\sum_{s' \in\mc S_{k+1}}\theta^t(s,a,s')$, $\forall s \in \mc S_k, a \in \mc A, s' \in \mc S_{k+1}$, $\forall k \in \{0,\ldots, L-1\}$. By Lemma \ref{lem:err_shr}, one can show that
\begin{align*}
\|\theta^t - \overline{\theta}^t\|_1\leq& \sum_{k=0}^{L-1}\sum_{j = 0}^{k}\sum_{s \in \mc S_j} \sum_{a\in \mc A}
\mu_t(s,a)\big\|\widehat{P}_{\ell(t)}(\cdot|s,a) - P(\cdot|s,a)\big\|_1\\
=&\sum_{k=0}^{L-1}\sum_{j = 0}^{k}\sum_{s \in \mc S_j} \sum_{a\in \mc A}
\big[(\mu_t(s,a) - \mathbb I\{s^t_j = s,~a^t_j = a\})\big\|\widehat{P}_{\ell(t)}(\cdot|s,a) - P(\cdot|s,a)\big\|_1\\
&\qquad + \mathbb I\{s^t_j = s,~a^t_j = a\}\big\|\widehat{P}_{\ell(t)}(\cdot|s,a) - P(\cdot|s,a)\big\|_1 \big],
\end{align*}
where we denote $\mathbb I\{s^t_j = s,~a^t_j = a\})$ as the indicator random variable that equals $1$ with probability $\mu_t(s,a), \forall s \in S_j, a\in \mc A $ and $0$ otherwise. Denote $\xi^t(s,a) = \|\widehat{P}_{\ell(t)}(\cdot|s,a) - P(\cdot|s,a)\|_1$ for abbreviation. We can see that $\xi^t(s,a) \leq \|\widehat{P}_{\ell(t)}(\cdot|s,a)\|_1 + \|P(\cdot|s,a)\|_1 = 2$. Summing both sides of the above inequality over $T$ time slots, we obtain
\begin{align}
\begin{aligned} \label{eq:theta_diff_0}
\sum_{t=1}^{T}\|\theta^t - \overline{\theta}^t\|_1 &\leq \sum_{t=1}^{T} \sum_{k=0}^{L-1}\sum_{j = 0}^{k}
\sum_{s \in \mc S_j} \sum_{a \in \mc A}  (\mu_t(s,a) - \mathbb I\{s^t_j = s,~a^t_j = a\})\xi^t(s,a)\\
&\qquad +\sum_{t=1}^{T} \sum_{k=0}^{L-1}\sum_{j = 0}^{k}
\sum_{s \in \mc S_j} \sum_{a \in \mc A} [ \mathbb I\{s^t_j = s,~a^t_j = a\}\xi^t(s,a). 
\end{aligned}
\end{align}
Next, we bound the first term on RHS of \eqref{eq:theta_diff_0}. Let $\mc F^{t-1}$ be the system history up to $(t-1)$-th episode. Then, by the definition of $\mathbb I(\cdot, \cdot )$, we have
\[
\E \Big\{ \sum_{s \in \mc S_j} \sum_{a \in \mc A}(\mu_t(s,a) - \mathbb I\{s^t_j = s,~a^t_j = a\})\xi^t(s,a)~\Big|~\mc F^{t-1} \Big \} = 0,
\]
since $\xi^t$ is only associated with system randomness history up to $t-1$ episodes. Thus, the term $\sum_{s \in \mc S_j} \sum_{a \in \mc A}(\mu_t(s,a) - \mathbb I\{s^t_j = s,~a^t_j = a\})\xi^t(s,a)$ is a martingale difference sequence with respect to $\mc F^{t-1}$. Furthermore, by $\xi^t(s,a) \leq 2$ and $\sum_{s \in \mc S_j} \sum_{a \in \mc A} \mathbb I\{s^t_j = s,~a^t_j = a\}) = 1$, there will be
\begin{align*}
&\Bigg|\sum_{s \in \mc S_j} \sum_{a \in \mc A}(\mu_t(s,a) - \mathbb I\{s^t_j = s,~a^t_j = a\})\xi^t(s,a)\Bigg|\\
&\qquad \leq \Bigg| \sum_{s \in \mc S_j} \sum_{a \in \mc A}  \mathbb I\{s^t_j = s,~a^t_j = a\} \Bigg|\xi^t(s,a) + \Bigg| \sum_{s \in \mc S_j} \sum_{a \in \mc A}\mu_t(s,a)\Bigg|\xi^t(s,a) \leq 4.
\end{align*}
Thus, by Azuma's inequality, we obtain that with probability at least $1-\zeta/L$,
\[
\sum_{t=1}^{T}\sum_{s \in \mc S_j} \sum_{a \in \mc A}(\mu_t(s,a) - \mathbb I\{s^t_j = s,~a^t_j = a\})\xi^t(s,a)
\leq 4\sqrt{2T\log\frac{L}{\zeta}}.
\]
According to union bound, we further have that with probability at least $1-\zeta$, the above inequality holds for all $j = 0,...,L-1$. This implies that with probability at least $1-\zeta$, the following inequality holds
\begin{align}
\sum_{t=1}^{T} \sum_{k=0}^{L-1}\sum_{j = 0}^{k}\sum_{s \in \mc S_j} \sum_{a \in \mc A}
(\mu_t(s,a) - \mathbb I\{s^t_j = s,~a^t_j = a\}) \xi^t(s,a) \leq  2L^2\sqrt{2T\log\frac{L}{\zeta}}. \label{eq:theta_diff_1}
\end{align}
On the other hand, we adopt the same argument as the first part of the proof of Lemma 5 in \citet{neu2012adversarial} to show the upper bound of $\sum_{t=1}^{T} \sum_{k=0}^{L-1}\sum_{j = 0}^{k} \sum_{s \in \mc S_j} \sum_{a \in \mc A} \mathbb I\{s^t_j = s,~a^t_j = a\}\xi^t(s,a)$ in \eqref{eq:theta_diff_0}. Recall that $\ell(t)$ denotes the epoch that the $t$-th episode belongs to. By the definition of the state-action pair counter $N_{\ell}(s,a)$ and $n_{\ell}(s,a)$, we have
\begin{align*}
N_{\ell}(s,a) = \sum_{q=0}^{\ell-1} n_{q}(s,a).
\end{align*}
According to Lemma \ref{lem:num_seq_bound}, we have
\begin{align}
\sum_{q=1}^{\ell(t)} \frac{n_q(s,a)}{\max\{1, \sqrt{N_q(s,a)}\}} \leq (\sqrt{2} + 1) \sqrt{\sum_{q=1}^{\ell(t)}n_q(s,a) }.\label{eq:counter_bound}
\end{align}
Since we can rewrite 
\begin{align*}
&\sum_{t=1}^{T} \sum_{k=0}^{L-1}\sum_{j = 0}^{k} \sum_{s \in \mc S_j} \sum_{a \in \mc A} \mathbb I\{s^t_j = s,~a^t_j = a\}\xi^t(s,a)\\
&\qquad  =  \sum_{t=1}^{T} \sum_{k=0}^{L-1}\sum_{j = 0}^{k} \|\widehat{P}_{\ell(t)}(\cdot|s^t_j,a^t_j) - P(\cdot|s^t_j,a^t_j)\|_1,
\end{align*}
then by Lemma \ref{lem:confidence}and $T+1\leq 2T$, the following holds with probability at least $1-\zeta$, 
\begin{align*}
&\sum_{t=1}^{T} \sum_{k=0}^{L-1}\sum_{j = 0}^{k} \sum_{s \in \mc S_j} \sum_{a \in \mc A} \mathbb I\{s^t_j = s,~a^t_j = a\}\xi^t(s,a)\\
& \qquad \leq\sum_{k=0}^{L-1}\sum_{j = 0}^{k} \sum_{t=1}^{T}  \sqrt{\frac{2|\mc S_{j+1}|\log(2T|\mc S||\mc A|/\zeta)}{\max\{1,N_{\ell(t)}(s^t_j,a^t_j)\}}}\\
& \qquad \leq \sum_{k=0}^{L-1}\sum_{j = 0}^{k} \sum_{q=1}^{\ell(T)}\sum_{s\in\mathcal{S}_j}\sum_{a \in\mathcal{A}} n_q(s,a)\sqrt{\frac{2|\mc S_{j+1}|\log(2T|\mc S||\mc A|/\zeta)}{\max\{1,N_{q}(s,a)\}}}\\
& \qquad \leq  \sum_{k=0}^{L-1}\sum_{j = 0}^{k} \sum_{s\in\mathcal{S}_j}\sum_{a \in\mathcal{A}} (\sqrt{2} + 1)  \sqrt{2 \sum_{q=1}^{\ell(T)}n_q(s,a) |\mc S_{j+1}|\log \frac{2T|\mc S||\mc A|}{\zeta}},
\end{align*}
where the first inequality is due to Lemma \ref{lem:confidence}, the second inequality is by the definition of the counters $n_q(s,a)$ and $N_q(s,a)$, and the last inequality is by \eqref{eq:counter_bound}. Thus, further bounding the last term of the above inequality yields 
\begin{align*}
&\sum_{k=0}^{L-1}\sum_{j = 0}^{k} \sum_{s\in\mathcal{S}_j}\sum_{a \in\mathcal{A}}   (\sqrt{2} + 1)  \sqrt{2 \left[\sum_{q=1}^{\ell(T)}n_q(s,a)\right] |\mc S_{j+1}|\log \frac{2T|\mc S||\mc A|}{\zeta}}\\
&\qquad \leq \sum_{k=0}^{L-1}\sum_{j = 0}^{k}  (\sqrt{2} + 1)  \sqrt{2 \sum_{s\in\mathcal{S}_j}\sum_{a \in\mathcal{A}}   \left[\sum_{q=1}^{\ell(T)}n_q(s,a)\right] |\mc S_j| |\mc S_{j+1}| |\mathcal{A}|\log \frac{2T|\mc S||\mc A|}{\zeta}} \\
&\qquad \leq\sum_{k=0}^{L-1}\sum_{j = 0}^{k}  (\sqrt{2} + 1)  \sqrt{2 T  |\mc S_j| |\mc S_{j+1}| |\mathcal{A}|\log \frac{2T|\mc S||\mc A|}{\zeta}} \\
&\qquad \leq (\sqrt{2} + 1) L |\mathcal{S}|  \sqrt{2 T  |\mathcal{A}|\log \frac{2T|\mc S||\mc A|}{\zeta} },
\end{align*}
where the first inequality is due to Jensen's inequality, the second inequality is due to $\sum_{s\in\mathcal{S}_j}\sum_{a \in\mathcal{A}}  \sum_{q=1}^{\ell(T)} \allowbreak n_q(s,a) \leq T$, and the last inequality is by bounding the term $\sum_{k=0}^{L-1}\sum_{j = 0}^{k}  \sqrt{|\mc S_j| |\mc S_{j+1}|}  \leq \sum_{k=0}^{L-1}\sum_{j = 0}^{k} \allowbreak (|\mc S_j|  + |\mc S_{j+1}|)/2 \leq L |\mathcal{S}|$. The above results imply that with probability at least $1-\zeta$,
\begin{align}
\sum_{t=1}^{T} \sum_{k=0}^{L-1}\sum_{j = 0}^{k} \sum_{s \in \mc S_j} \sum_{a \in \mc A} \mathbb I\{s^t_j = s,~a^t_j = a\}\xi^t(s,a) \leq (\sqrt{2} + 1) L |\mathcal{S}|  \sqrt{2 T  |\mathcal{A}|\log \frac{2T|\mc S||\mc A|}{\zeta}}. \label{eq:theta_diff_2}
\end{align}
By union bound, combining \eqref{eq:theta_diff_0}, \eqref{eq:theta_diff_1} and \eqref{eq:theta_diff_2}, we obtain with probability at least $1-2\zeta$, 
\begin{align*}
\sum_{t=1}^{T}\|\theta^t - \overline{\theta}^t\|_1\leq
(\sqrt 2+1)L|\mc S|\sqrt{2T|\mc A|\log\frac{2T|\mc S||\mc A|}{\zeta}} + 2L^2\sqrt{2T\log\frac{L}{\zeta}}. 
\end{align*} 
This completes the proof.
\end{proof}

\subsection{Proof of Lemma \ref{lem:regret_primal_bound}}

We provide Lemmas \ref{lem:breg_tria}, \ref{lem:pins}, and \ref{lem:breg_diff} first.  Based on them, we give the proof of Lemma \ref{lem:regret_primal_bound}.

\begin{lemma} [Lemma 14 in \citet{wei2019online}]\label{lem:breg_tria} Let $M$ and $M^o$ denote the probability simplex and the set of the probability simplex excluding the boundary respectively. Assuming $\mathbf{y} \in M^o$, and letting $\mc C \subseteq M$, then the following inequality holds
\begin{align*} 
h(\mathbf{x}^{opt}) + \alpha D(\mathbf{x}^{opt}, \mathbf{y}) \leq h(\mathbf{z}) + \alpha D(\mathbf{z},\mathbf{y})-\alpha D(\mathbf{z},\mathbf{x}^{opt}), ~~\forall \mathbf{z} \in \mc C,
\end{align*}
where $\mathbf{x}^{opt} \in \arg\min_{\mathbf{x}\in \mc C} h(\mathbf{x})+\alpha D(\mathbf{x},\mathbf{y})$, $h(\cdot)$ is a convex function, and $D(\cdot, \cdot)$ is the unnormalized KL divergence for two distributions.
\end{lemma}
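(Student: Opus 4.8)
The plan is to recognize the unnormalized KL divergence $D(\cdot,\cdot)$ in \eqref{eq:un-KL} as the Bregman divergence generated by the negative entropy $\phi(\mathbf{x})=\sum_i (x_i\log x_i - x_i)$, whose gradient is $\nabla\phi(\mathbf{x})=(\log x_i)_i$, and then to combine the associated three-point identity with the first-order optimality condition of the defining minimization. Throughout I use that $\mathcal{C}$ is convex (as it is in every application, e.g. $\Delta(\ell,\zeta)$) and that $h$ is convex.

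First I would establish the three-point identity
\[
\langle \nabla\phi(\mathbf{x})-\nabla\phi(\mathbf{y}),\, \mathbf{z}-\mathbf{x}\rangle = D(\mathbf{z},\mathbf{y}) - D(\mathbf{z},\mathbf{x}) - D(\mathbf{x},\mathbf{y}),
\]
valid for any $\mathbf{z}$ and any $\mathbf{x},\mathbf{y}$ with positive coordinates. This is a direct expansion of the three $D$-terms using $D(\mathbf{u},\mathbf{v})=\phi(\mathbf{u})-\phi(\mathbf{v})-\langle\nabla\phi(\mathbf{v}),\mathbf{u}-\mathbf{v}\rangle$; the $\phi(\mathbf{z})$ and $\phi(\mathbf{x})$ contributions cancel and only the claimed inner product survives. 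The hypothesis $\mathbf{y}\in M^{o}$ ensures $\nabla\phi(\mathbf{y})=(\log y_i)_i$ is finite, so the identity is well posed.

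Next I would write the optimality condition for $\mathbf{x}^{opt}\in\arg\min_{\mathbf{x}\in\mathcal{C}} h(\mathbf{x})+\alpha D(\mathbf{x},\mathbf{y})$. Since the objective is convex on the convex set $\mathcal{C}$, there is a subgradient $\mathbf{g}\in\partial h(\mathbf{x}^{opt})$ with
\[
\langle \mathbf{g}+\alpha(\nabla\phi(\mathbf{x}^{opt})-\nabla\phi(\mathbf{y})),\, \mathbf{z}-\mathbf{x}^{opt}\rangle \geq 0, \qquad \forall\, \mathbf{z}\in\mathcal{C}.
\]
Combining this with the subgradient inequality $h(\mathbf{z})\geq h(\mathbf{x}^{opt})+\langle\mathbf{g},\mathbf{z}-\mathbf{x}^{opt}\rangle$ gives $h(\mathbf{x}^{opt})-h(\mathbf{z})\leq \alpha\langle\nabla\phi(\mathbf{x}^{opt})-\nabla\phi(\mathbf{y}),\mathbf{z}-\mathbf{x}^{opt}\rangle$; substituting the three-point identity for the right-hand side and rearranging produces exactly $h(\mathbf{x}^{opt})+\alpha D(\mathbf{x}^{opt},\mathbf{y})\leq h(\mathbf{z})+\alpha D(\mathbf{z},\mathbf{y})-\alpha D(\mathbf{z},\mathbf{x}^{opt})$.

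The main obstacle is justifying that the optimality condition can be written with a finite gradient $\nabla\phi(\mathbf{x}^{opt})$, i.e. that $\mathbf{x}^{opt}$ has strictly positive coordinates. This is exactly where the entropic barrier matters: since $\partial_{i}D(\mathbf{x},\mathbf{y})=\log(x_i/y_i)\to-\infty$ as $x_i\to 0^{+}$ (with $\mathbf{y}\in M^{o}$ fixed), any candidate minimizer lying on the boundary of the positive orthant (relative to $\mathcal{C}$) would admit a strictly decreasing feasible direction into the interior, contradicting optimality. Hence $\mathbf{x}^{opt}\in M^{o}$, $\nabla\phi(\mathbf{x}^{opt})$ is well defined, and the two remaining steps — the algebraic three-point expansion and the convexity inequality — are routine.
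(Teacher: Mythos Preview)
Your proposal is correct and follows essentially the same approach the paper points to: the paper does not spell out a proof but defers to Lemma~14 of \citet{wei2019online}, noting that the unnormalized KL divergence is a Bregman divergence, and your argument is precisely the standard Bregman three-point identity combined with the first-order optimality condition and convexity of $h$ that underlies that reference. Your added discussion of why $\mathbf{x}^{opt}$ stays in the interior (via the entropic barrier) and your explicit use of the convexity of $\mathcal{C}$ are appropriate technical justifications that the paper leaves implicit.
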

%Lemma \ref{lem:breg_tria} is an extension of Lemma 14 in \citet{wei2019online} to the special case where $D(\cdot, \cdot)$ is the unnormalized KL divergence.
Lemma \ref{lem:breg_tria} is an extension of Lemma 14 in \citet{wei2019online}, whose proof follows the one in \citet{wei2019online}. Specifically, the unnormalized KL divergence is a special case of the Bregman divergence studied in \citet{wei2019online}.

\begin{lemma} \label{lem:pins} For any $\theta$ and $\theta'$ satisfying $\sum_{s\in \mc S_k} \sum_{a\in\mc A} \sum_{s'\in\mc S_{k+1}} \theta(s,a,s')=1, \text{ and }~ \theta(s,a,s')\geq 0, \forall k \in \{0, \ldots, L-1\}$ and $\sum_{s\in\mc S_k} \sum_{a\in \mc A} \theta(s, a, s') = \sum_{a\in \mc A} \sum_{s''\in\mc S_{k+2}}  \theta(s', a, s''), \forall s' \in \mc S_{k+1}, \forall k\in \{ 0, \ldots, L-2 \}$, we let $\theta_k:= [\theta(s, a , s')]_{s \in \mc S_k, a \in \mc A, s' \in \mc S_{k+1}}$ denote the vector formed by the elements $\theta(s, a , s')$ for all $s \in \mathcal{S}_k, a \in \mathcal{A}, s \in \mathcal{S}_{k+1}$. We also let $\theta'_k:= [\theta'(s, a, s')]_{s \in \mc S_k, a \in \mc A, s' \in \mc S_{k+1}}$ similarly denote a vector formed by $\theta'(s, a, s')$. Then, we have
\begin{align*}
D(\theta, \theta') \geq \frac{1}{2}\sum_{k=0}^{L-1}\|\theta_k - \theta'_k\|_1^2 \geq \frac{1}{2L}\|\theta - \theta'\|_1^2,
\end{align*}
where $D(\cdot, \cdot)$ is defined as in \eqref{eq:un-KL}.
\end{lemma}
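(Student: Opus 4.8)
The plan is to exploit the layered structure of the occupancy measure to reduce the claim to a per-layer application of the classical Pinsker inequality. First I would observe that the unnormalized KL divergence defined in \eqref{eq:un-KL} is a sum over all tuples $(s,a,s')$, and since the index set $\mc S\times\mc A\times\mc S$ is partitioned into the layers indexed by $k$, this sum decomposes as $D(\theta,\theta')=\sum_{k=0}^{L-1} D_k$, where
$$
D_k := \sum_{s\in\mc S_k}\sum_{a\in\mc A}\sum_{s'\in\mc S_{k+1}}\Big[\theta(s,a,s')\log\tfrac{\theta(s,a,s')}{\theta'(s,a,s')}-\theta(s,a,s')+\theta'(s,a,s')\Big]
$$
collects the contributions from layer $k$.

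The key observation is that condition \ref{item:cond1} forces both $\theta_k$ and $\theta'_k$ to be genuine probability vectors, since summing over layer $k$ gives $\sum \theta(s,a,s')=\sum\theta'(s,a,s')=1$. Consequently the linear correction terms cancel within each layer, i.e.\ $\sum_{\text{layer }k}[-\theta(s,a,s')+\theta'(s,a,s')]=-1+1=0$, so $D_k$ collapses to the ordinary KL divergence $\mathrm{KL}(\theta_k\,\|\,\theta'_k)=\sum_{\text{layer }k}\theta(s,a,s')\log\tfrac{\theta(s,a,s')}{\theta'(s,a,s')}$ between two probability distributions on the finite index set $\mc S_k\times\mc A\times\mc S_{k+1}$. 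Applying the classical Pinsker inequality layer by layer then yields $D_k=\mathrm{KL}(\theta_k\,\|\,\theta'_k)\geq\tfrac12\|\theta_k-\theta'_k\|_1^2$, and summing over $k$ gives the first inequality $D(\theta,\theta')\geq\tfrac12\sum_{k=0}^{L-1}\|\theta_k-\theta'_k\|_1^2$.

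For the second inequality I would again use that the layers partition the tuple space, so $\|\theta-\theta'\|_1=\sum_{k=0}^{L-1}\|\theta_k-\theta'_k\|_1$, and then apply Cauchy--Schwarz (equivalently Jensen's inequality) in the form $\big(\sum_{k=0}^{L-1}\|\theta_k-\theta'_k\|_1\big)^2\leq L\sum_{k=0}^{L-1}\|\theta_k-\theta'_k\|_1^2$, which rearranges to $\sum_{k=0}^{L-1}\|\theta_k-\theta'_k\|_1^2\geq\tfrac1L\|\theta-\theta'\|_1^2$, completing the chain. As for difficulty, the argument is essentially routine once the structural facts are in place; the only step that genuinely requires care is recognizing that condition \ref{item:cond1} makes each $\theta_k,\theta'_k$ a probability vector, which is precisely what cancels the linear terms and lets the unnormalized $D_k$ reduce to a standard KL so that Pinsker applies. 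Notably, condition \ref{item:cond2} plays no role in this estimate and is included only because these objects arise as occupancy measures.
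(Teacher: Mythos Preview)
Your proposal is correct and follows essentially the same route as the paper: decompose $D(\theta,\theta')$ layer by layer, use condition \ref{item:cond1} to cancel the linear terms so that each $D_k$ becomes an ordinary KL divergence between probability vectors, apply Pinsker per layer, and then use Cauchy--Schwarz to pass from $\sum_k\|\theta_k-\theta'_k\|_1^2$ to $\tfrac1L\|\theta-\theta'\|_1^2$. Your remark that condition \ref{item:cond2} is not used is also accurate.
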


\begin{proof} [Proof of Lemma \ref{lem:pins}] We prove the lemma by the following inequality
\begin{align*} 
D(\theta, \theta')&= \sum_{k=0}^{L-1}\sum_{s \in \mc S_k} \sum_{a \in \mc A } \sum_{ s \in \mc S_{k+1}} \theta(s, a , s') \frac{ \theta(s, a , s') }{ \theta'(s, a , s')}  - \theta(s, a , s') + \theta'(s, a , s')\\
&= \sum_{k=0}^{L-1}\sum_{s \in \mc S_k} \sum_{a \in \mc A } \sum_{ s \in \mc S_{k+1}} \theta(s, a , s') \frac{ \theta(s, a , s') }{ \theta'(s, a , s')} \\
&\geq  \frac{1}{2}\sum_{k=0}^{L-1}\|\theta_k - \theta'_k\|_1^2 \geq \frac{1}{2L} \bigg( \sum_{k=0}^{L-1} \|\theta_k - \theta'_k \|_1   \bigg)^2 \geq \frac{1}{2L}\|\theta - \theta'\|_1^2,
\end{align*}
where the inequality is due to the Pinsker's inequality since $\theta_k$ and $\theta'_k$ are two probability distributions such that $\|\theta_k\|_1 = 1$ and $\|\theta'_k\|_1=1$. This completes the proof.
\end{proof}

\begin{lemma}\label{lem:breg_diff}
For any $\theta$ and $\theta'$ satisfying $\sum_{s\in \mc S_k} \sum_{a\in\mc A} \sum_{s'\in\mc S_{k+1}} \theta(s,a,s')=1, \text{ and }~ \theta(s,a,s')\geq 0, \forall k \in \{0, \ldots, L-1\}$ and $\sum_{s\in\mc S_k} \sum_{a\in \mc A} \theta(s, a, s') = \sum_{a\in \mc A} \sum_{s''\in\mc S_{k+2}}  \theta(s', a, s''), \forall s' \in \mc S_{k+1}, \forall k\in \{ 0, \ldots, L-2 \}$, letting $ \widetilde{\theta}'(s,a,s') = (1-\lambda) \theta'(s,a,s') +  \frac{\lambda}{|\mathcal{A}||\mathcal{S}_k| |\mathcal{S}_{k+1}|}, \forall (s,a,s')\in \mc S_k \times \mc A \times \mc S_{k+1}, \forall k = 1,\ldots, L-1$ with $0 < \lambda \leq 1$, then we have
\begin{align*}
&D(\theta, \widetilde{\theta}' ) - D(\theta, \theta') \leq \lambda L \log |\mathcal{S}|^2 |\mathcal{A}|,\\
&D(\theta, \widetilde{\theta}') \leq  L \log \Big( \frac{|\mathcal{S}|^2 |\mathcal{A}|}{\lambda} \Big).
\end{align*}
\end{lemma}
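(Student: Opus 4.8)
The plan is to exploit two structural facts about the objects involved. First, every measure appearing here---$\theta$, $\theta'$, $\widetilde{\theta}'$, and the per-layer uniform measure $u(s,a,s'):=1/(|\mc A||\mc S_k||\mc S_{k+1}|)$ on $\mc S_k\times\mc A\times\mc S_{k+1}$---has unit mass on each of the $L$ layers, by condition \ref{item:cond1}. Hence $\sum_{s,a,s'}\theta=\sum_{s,a,s'}\theta'=\sum_{s,a,s'}\widetilde{\theta}'=\sum_{s,a,s'}u=L$, so the linear terms $-\theta(s,a,s')+\nu(s,a,s')$ in the unnormalized KL divergence \eqref{eq:un-KL} cancel upon summation, giving the reduction $D(\theta,\nu)=\sum_{s,a,s'}\theta(s,a,s')\log(\theta(s,a,s')/\nu(s,a,s'))$ for each $\nu\in\{\theta',\widetilde{\theta}',u\}$. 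Second, I would use that $D(\theta,\cdot)$ is convex in its second argument (its Hessian in $\nu$ is diagonal with entries $\theta/\nu^2\ge 0$). Noting $\widetilde{\theta}'=(1-\lambda)\theta'+\lambda u$ is a convex combination sets up both bounds.

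For the first inequality, convexity gives $D(\theta,\widetilde{\theta}')\le(1-\lambda)D(\theta,\theta')+\lambda D(\theta,u)$. Rearranging and discarding the term $-\lambda D(\theta,\theta')\le 0$ (nonnegativity of the unnormalized KL, which holds termwise since $x\log(x/y)-x+y\ge 0$) yields $D(\theta,\widetilde{\theta}')-D(\theta,\theta')\le\lambda D(\theta,u)$. It then remains to bound $D(\theta,u)=\sum_{s,a,s'}\theta\log(\theta/u)$. Because each $\theta(s,a,s')\le 1$ (unit per-layer mass plus nonnegativity), we have $\theta/u=\theta\,|\mc A||\mc S_k||\mc S_{k+1}|\le|\mc S|^2|\mc A|$, so $\log(\theta/u)\le\log(|\mc S|^2|\mc A|)$ termwise; multiplying by $\theta\ge 0$ and summing against $\sum_{s,a,s'}\theta=L$ gives $D(\theta,u)\le L\log(|\mc S|^2|\mc A|)$, which is exactly the claimed bound.

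For the second inequality I would argue directly from the reduced form $D(\theta,\widetilde{\theta}')=\sum_{s,a,s'}\theta\log(\theta/\widetilde{\theta}')$. Since $\widetilde{\theta}'\ge\lambda u=\lambda/(|\mc A||\mc S_k||\mc S_{k+1}|)$ and $\theta\le 1$, one has $\theta/\widetilde{\theta}'\le|\mc S|^2|\mc A|/\lambda$ termwise, whence $\log(\theta/\widetilde{\theta}')\le\log(|\mc S|^2|\mc A|/\lambda)$; summing against $\theta$ again produces the factor $L$ and gives $D(\theta,\widetilde{\theta}')\le L\log(|\mc S|^2|\mc A|/\lambda)$.

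The step requiring the most care is the first inequality, and specifically the choice of route. A naive manipulation---cancelling the linear terms and writing $D(\theta,\widetilde{\theta}')-D(\theta,\theta')=\sum_{s,a,s'}\theta\log(\theta'/\widetilde{\theta}')$, then using only $\widetilde{\theta}'\ge(1-\lambda)\theta'$---produces the bound $-L\log(1-\lambda)$, which diverges as $\lambda\to 1$ and is therefore \emph{not} of the claimed form $\lambda L\log(|\mc S|^2|\mc A|)$ over the full range $0<\lambda\le 1$. The convexity argument above is what replaces this blow-up by the clean linear-in-$\lambda$ factor, at the cost of having to control $D(\theta,u)$ rather than the ratio $\theta'/\widetilde{\theta}'$; the per-layer unit-mass bookkeeping (ensuring $\sum\theta=L$ and the cancellation of the $-\theta+\nu$ terms) is the routine but essential ingredient tying everything together.
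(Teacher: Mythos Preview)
Your proposal is correct and takes essentially the same approach as the paper: the paper proves the first inequality by applying Jensen's inequality to the concave function $\log$ at the convex combination $\widetilde{\theta}'=(1-\lambda)\theta'+\lambda u$, which is exactly the pointwise statement underlying your convexity-of-$D(\theta,\cdot)$ argument, and then both proofs bound the resulting expression by $\lambda L\log(|\mc S|^2|\mc A|)$ (the paper via $\log\theta'\le 0$, you via nonnegativity of $D(\theta,\theta')$ together with $\theta\le 1$). Your proof of the second inequality is identical to the paper's.
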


\begin{proof} [Proof of Lemma \ref{lem:breg_diff}] We start our proof as follows
\begin{align*}
D(\theta, \widetilde{\theta}' ) - D(\theta, \theta') 
&= \sum_{k=0}^{L-1}\sum_{s \in \mc S_k} \sum_{a \in \mc A } \sum_{ s \in \mc S_{k+1}} \theta(s, a , s') \Big( \log \frac{\theta(s, a , s')}{\widetilde{\theta}'(s, a, s')} - \log \frac{\theta(s, a , s')}{\theta'(s, a, s')} \Big) \\
&\quad + \widetilde{\theta}'(s, a, s') - \theta'(s, a , s')\\
& = \sum_{k=0}^{L-1}\sum_{s \in \mc S_k} \sum_{a \in \mc A } \sum_{ s \in \mc S_{k+1}} \theta(s, a , s') \big( \log \theta'(s, a , s')- \log \widetilde{\theta}'(s, a, s')\big)\\
& = \sum_{k=0}^{L-1}\sum_{s \in \mc S_k} \sum_{a \in \mc A } \sum_{ s \in \mc S_{k+1}} \theta(s, a , s') \Big( \log \theta'(s, a , s') \\
&\quad - \log [(1-\lambda) \theta'(s,a,s') +  \lambda/|\mathcal{A}||\mathcal{S}_k| |\mathcal{S}_{k+1}|]\Big),
\end{align*}
where the last equality is by substituting $\widetilde{\theta}'(s,a,s') = (1-\lambda) \theta'(s,a,s') +  \frac{\lambda}{|\mathcal{A}||\mathcal{S}_k| |\mathcal{S}_{k+1}|}, \forall (s,a,s')\in \mc S_k \times \mc A \times \mc S_{k+1}, \forall k = 1,\ldots, L-1$. Thus, by bounding the last term above, we further have
\begin{align*}
D(\theta, \widetilde{\theta}' ) - D(\theta, \theta') &\leq  \sum_{k=0}^{L-1}\sum_{s \in \mc S_k} \sum_{a \in \mc A } \sum_{ s \in \mc S_{k+1}} \theta(s, a , s') \bigg( \log \theta'(s, a , s')\\
&\quad - (1-\lambda)\log \theta'(s, a , s')- \lambda \log  \frac{1}{|\mathcal{S}_k| |\mathcal{S}_{k+1}| |\mathcal{A}|} \bigg) \\
& = \sum_{k=0}^{L-1}\sum_{s \in \mc S_k} \sum_{a \in \mc A } \sum_{ s \in \mc S_{k+1}} \lambda  \theta(s, a , s') \big( \log \theta'(s, a , s')+  \log (|\mathcal{S}_k| |\mathcal{S}_{k+1}| |\mathcal{A}|) \big) \\
&\leq \sum_{k=0}^{L-1}\sum_{s \in \mc S_k} \sum_{a \in \mc A } \sum_{ s \in \mc S_{k+1}} \lambda  \theta(s, a , s') \log (|\mathcal{S}_k| |\mathcal{S}_{k+1}| |\mathcal{A}|) \leq \lambda L \log |\mathcal{S}|^2 |\mathcal{A}|,
\end{align*}
where the first inequality is by Jensen's inequality and the second inequality is due to $\log \theta'(s, a , s')  \leq 0$ since $0 < \theta(s, a , s') \leq 1$, and the last inequality is due to H\"older's inequality that $\langle \mf x, \mf y \rangle \leq \|\mf x\|_1 \|\mf y\|_\infty$ and $|\mathcal{S}_k| |\mathcal{S}_{k+1}| \leq |\mathcal{S}|^2$.

Moreover, we have
\begin{align*}
D(\theta, \widetilde{\theta}' ) &= \sum_{k=0}^{L-1}\sum_{s \in \mc S_k} \sum_{a \in \mc A } \sum_{ s \in \mc S_{k+1}} \theta(s, a , s') \log \frac{\theta(s, a , s')}{\widetilde{\theta}'(s, a, s')}  - \theta(s, a , s') + \theta'(s, a , s')\\
&= \sum_{k=0}^{L-1}\sum_{s \in \mc S_k} \sum_{a \in \mc A } \sum_{ s \in \mc S_{k+1}} \theta(s, a , s') \big( \log \theta(s, a , s') - \log \widetilde{\theta}'(s, a, s')\big) \\
&= \sum_{k=0}^{L-1}\sum_{s \in \mc S_k} \sum_{a \in \mc A } \sum_{ s \in \mc S_{k+1}} \theta(s, a , s') \big( \log \theta(s, a , s')  -\log[ (1-\lambda)\theta'(s, a , s')+ \lambda   /(|\mathcal{S}_k| |\mathcal{S}_{k+1}| |\mathcal{A}|) ]\big) \\
& \leq - \sum_{k=0}^{L-1}\sum_{s \in \mc S_k} \sum_{a \in \mc A } \sum_{ s \in \mc S_{k+1}} \theta(s, a , s') \big(  \log[ (1-\lambda)\theta'(s, a , s')+ \lambda   /(|\mathcal{S}_k| |\mathcal{S}_{k+1}| |\mathcal{A}|) ] \big) \\
& \leq - \sum_{k=0}^{L-1}\sum_{s \in \mc S_k} \sum_{a \in \mc A } \sum_{ s \in \mc S_{k+1}}   \theta(s, a , s') \cdot  \log\frac{ \lambda}{|\mathcal{S}_k| |\mathcal{S}_{k+1}| |\mathcal{A}| } \leq  L \log \frac{|\mathcal{S}|^2 |\mathcal{A}|}{\lambda} ,
\end{align*}
where the first inequality is due to $\log \theta(s,a,s') \leq 0$, the second inequality is due to the monotonicity of logarithm function, and the third inequality is by  as well as $|\mathcal{S}_k| |\mathcal{S}_{k+1}| \leq |\mathcal{S}|^2$. This completes the proof.
\end{proof}

Now we are ready to provide the proof of Lemma \ref{lem:regret_primal_bound}.
%\begin{lemma*}[Lemma \ref{lem:regret_primal_bound}] The updating rules in Algorithm \ref{alg:ucpd} ensure that with probability at least $1-\zeta$, the following inequality holds 
%\begin{align*}
%\sum_{t=0}^{T-1} \dotp{f^t}{\theta^t - \overline{\theta}^*}  &\leq \frac{1}{V}\sum_{t=0}^{T-1} \sum_{i=1}^IQ_i(t)(\langle g_i^t, \overline{\theta}^*\rangle-c_i)   + \frac{4 L^2 +(\lambda T+1)\alpha  L \log |\mathcal{S}|^2 |\mathcal{A}|}{V} +  \frac{LT }{2\alpha}  + 2L\lambda  T . 
%\end{align*}
%\end{lemma*} 
\begin{proof}[Proof of Lemma \ref{lem:regret_primal_bound}] 
First of all, by Lemma \ref{lem:confidence}, we know that 
\[
\|P(\cdot|s,a) -\widehat P_\ell(\cdot|s,a)\|_1\leq \varepsilon_\ell^\zeta(s,a), 
\]
with probability at least $1-\zeta$, for all epochs $\ell$ and any state and action pair $(s,a)\in\mc S\times\mc A$. Thus, we have that for any epoch $\ell \leq \ell(T+1)$, 
\begin{align*}
\Delta \subseteq \Delta(\ell, \zeta)
\end{align*} 
holds with probability at least $1-\zeta$. 

This can be easily proved in the following way: If any $\overline{\theta} \in \Delta$, then for all $k=\{0,\ldots, T-1\}$, $s\in \mc S_k$ and $a \in \mc A$, 
\begin{align*}
\frac{\overline{\theta}(s,a,\cdot)}{\sum_{s'\in \mc S_{k+1}}\overline{\theta}(s,a,s')} = P(\cdot | s, a).
\end{align*}
Then, we obtain with probability at least $1-\zeta$,
\begin{align*}
&\Big\|\frac{\overline{\theta}(s,a,\cdot)}{\sum_{s'\in \mc S_{k+1}}\overline{\theta}(s,a,s')}  - \widehat{P}_\ell(\cdot | s,a)\Big\|_1 \\
&\qquad\leq \Big\|\frac{\overline{\theta}(s,a,\cdot)}{\sum_{s'\in \mc S_{k+1}}\overline{\theta}(s,a,s')}  - P(\cdot | s,a)\Big\|_1 + \Big\|P(\cdot | s,a)  - \widehat{P}_\ell(\cdot | s,a)\Big\|_1 \leq 0 + \varepsilon_\ell^\zeta(s,a) \leq \varepsilon_\ell^\zeta(s,a),
\end{align*}
where the inequality is by Lemma \ref{lem:confidence}. Therefore, we know that $\overline{\theta} \in \Delta(\ell , \zeta)$, which proves the above claim.

We define the event $\mathcal{D}_{T}$ as follows:
\begin{align}
\text{Event } \mathcal{D}_{T} : \Delta \subseteq \cap_{\ell=1}^{\ell(T+1)} \Delta(\ell, \zeta), \label{eq:event_opt_feas}
\end{align}
by which we have
\begin{align*}
\Pr (\mathcal{D}_{T}) \geq 1- \zeta.
\end{align*}
Thus, for any
$\overline\theta^*$ that is a solution to problem \eqref{eq:prob-3}, we have $\overline\theta^*\in \Delta$. If event $\mc D_T$ happens, then $\overline\theta^*\in \cap_{\ell=1}^{\ell(T)} \Delta(\ell, \zeta)$. Now we have that the updating rule of $\theta$ follows $\theta^t = \arg\min_{\theta\in \Delta(\ell(t), \zeta)} \big\langle Vf^{t-1} + \sum_{i=1}^IQ_i(t-1)g_i^{t-1}, \theta \big \rangle  + \alpha D(\theta,  \widetilde{\theta}^{t-1})$ as shown in \eqref{eq:optimistic-lp}, and also $ \overline\theta^*\in \cap_{\ell=1}^{\ell(T+1)} \Delta(\ell, \zeta)$ holds with probability at least $1-\zeta$. According to Lemma \ref{lem:breg_tria}, letting $\mathbf{x}^{opt} = \theta^t$, $\mathbf{z} = \overline{\theta}^*$, $\mathbf{y} = \widetilde{\theta}^{t-1}$ and $h(\theta) = \big\langle Vf^{t-1} + \sum_{i=1}^IQ_i(t-1)g_i^{t-1}, \theta\big 
\rangle$, we have that with probability at least $1-\zeta$, the following holds for all episodes $t= 2,\ldots,T+1$
\begin{align}
\begin{aligned}\label{eq:update_breg_tria}
&\Big\langle Vf^{t-1} + \sum_{i=1}^IQ_i(t-1)g_i^{t-1}, \theta^t \Big\rangle + \alpha D(\theta^t, \widetilde{\theta}^{t-1}) \\
&\qquad  \leq \Big\langle Vf^{t-1} + \sum_{i=1}^IQ_i(t-1)g_i^{t-1},  \overline{\theta}^* \Big\rangle + \alpha D(\overline{\theta}^* , \widetilde{\theta}^{t-1})-\alpha D(\overline{\theta}^*, \theta^t ), 
\end{aligned}
\end{align}
which means once given the event $\mc D_T$ happens, the inequality \eqref{eq:update_breg_tria} will hold. 

On the other hand, according to the updating rule of $\mathbf{Q}(\cdot)$ in \eqref{eq:Q-update}, which is $Q_i(t) = \max\{Q_i(t-1) + \langle g_i^{t-1}, \theta^t \rangle - c_i,~0\}$, we know that
\begin{align*}
Q_i(t)^2 \leq \l(\max\{Q_i(t-1) + \dotp{g_i^{t-1}}{\theta^t} - c_i,~0\}\r)^2\leq \l(Q_i(t-1) + \dotp{g_i^{t-1}}{\theta^t} - c_i \r)^2,
\end{align*} 
which further leads to
\begin{align*}
Q_i(t)^2-Q_i(t-1)^2 \leq&  2Q_i(t-1) \l( \dotp{g_i^{t-1}}{\theta^t} - c_i \r) + \l(\dotp{g_i^{t-1}}{\theta^t} - c_i \r)^2.
\end{align*}
Taking summation on both sides of the above inequality from $i=1$ to $I$, we have
\begin{align}
\begin{aligned}\label{eq:Q_difference}
&\frac{1}{2} \l( \|\mathbf{Q}(t)\|^2-\|\mathbf{Q}(t-1)\|^2 \r) \\  
&\qquad \leq \sum_{i=1}^I  \dotp{Q_i(t-1) g_i^{t-1}}{\theta^t}  - \sum_{i=1}^I Q_i(t-1)  c_i + \frac{1}{2} \sum_{i=1}^I \l(\dotp{g_i^{t-1}}{\theta^t} - c_i \r)^2 \\ 
&\qquad\leq  \sum_{i=1}^I \dotp{Q_i(t-1)  g_i^{t-1}}{\theta^t}  - \sum_{i=1}^I Q_i(t-1)  c_i + 2 L^2, 
\end{aligned}
\end{align}
where we let $\|\mathbf{Q}(t)\|^2 = \sum_{i=1}^I Q_i^2(t)$ and $\|\mathbf{Q}(t-1)\|^2 = \sum_{i=1}^I Q_i^2(t-1)$, and the last inequality is due to 
\begin{align*}
\sum_{i=1}^I (\langle g_i^{t-1}, \theta^t \rangle - c_i )^2 \leq& 2 \sum_{i=1}^I [(\langle g_i^{t-1}, \theta^t \rangle)^2 + c_i^2]\\
\leq & 2 \sum_{i=1}^I [ \|g_i^{t-1}\|_\infty^2 \|\theta^t \|_1^2 + c_i^2]\\
\leq & 2 \sum_{i=1}^I [ L^2 \|g_i^{t-1}\|_\infty^2 + c_i^2] \\
\leq & 2  [ L^2 (\sum_{i=1}^I \|g_i^{t-1}\|_\infty)^2 + (\sum_{i=1}^I |c_i|)^2] \leq 4L^2
\end{align*}
by Assumption \ref{as:bounded} and the facts that $\sum_{s \in \mc S_k} \sum_{a \in \mc A} \sum_{ s' \in \mc S_{k+1}} \theta^t (s, a, s') = 1$ and $\theta^t (s, a, s') \geq 0$. Thus, summing up \eqref{eq:update_breg_tria} and \eqref{eq:Q_difference}, and then subtracting $\langle V f^{t-1}, \theta^{t-1} \rangle$ from both sides, we have
\begin{align*}
&V\big\langle f^{t-1},\theta^t - \theta^{t-1}\big\rangle + \frac{1}{2} \big( \|\mathbf{Q}(t)\|^2-\|\mathbf{Q}(t-1)\|^2 \big) + \alpha D(\theta^t, \widetilde{\theta}^{t-1}) \\
& \quad \leq  V\big\langle f^{t-1},\overline{\theta}^* - \theta^{t-1}\big\rangle + \sum_{i=1}^IQ_i(t-1)(\langle g_i^{t-1}, \overline{\theta}^*\rangle-c_i) + 
\alpha D(\overline{\theta}^*, \widetilde{\theta}^{t-1}) - \alpha D(\overline{\theta}^*, \theta^{t}) + 4 L^2 .
\end{align*}
We further need to show the lower bound of the term $V\big \langle f^{t-1}, \theta^t - \theta^{t-1}\big\rangle + \alpha D(\theta^t, \widetilde{\theta}^{t-1})$ on LHS of the above inequality. Specifically, we have
\begin{align*}
&V\big \langle f^{t-1}, \theta^t - \theta^{t-1}\big\rangle + \alpha D(\theta^t, \widetilde{\theta}^{t-1}) \\
&\qquad = V\big \langle f^{t-1},\theta^t - \widetilde{\theta}^{t-1}\big \rangle + V\big \langle f^{t-1}, \widetilde{\theta}^{t-1} - \theta^{t-1} \rangle + \alpha D(\theta^t, \widetilde{\theta}^{t-1}) \\
& \qquad \geq  -V\|f^{t-1}\|_\infty\cdot  \|\theta^t - \widetilde{\theta}^{t-1}\|_1 -   V  \|f^{t-1}\|_\infty \cdot  \| \widetilde{\theta}^{t-1} - \theta^{t-1}  \|_1 + \frac{\alpha}{2}  \sum_{k=0}^{L-1} \|\theta_k^t - \widetilde{\theta}_k^{t-1}\|_1^2  \\
& \qquad \geq  -V  \sum_{k=0}^{L-1} \|\theta_k^t - \widetilde{\theta}_k^{t-1}\|_1 - 2L\lambda  V   + \frac{\alpha}{2}  \sum_{k=0}^{L-1} \|\theta_k^t - \widetilde{\theta}_k^{t-1}\|_1^2 \geq -\frac{LV }{2\alpha}  - 2L\lambda  V ,  
\end{align*}
where the first inequality uses H\"older's inequality and Lemma \ref{lem:pins} that $D(\theta,\theta') = \sum_{k=1}^LD(\theta_k,\theta'_k)\geq 
\frac{1}{2} \sum_{k=1}^L\|\theta_k-\theta'_k\|_1^2$ with $\theta_k := [\theta(s, a, s')]_{s_k\in\mc S_k,a_k\in\mc A, s_{k+1}\in\mc S_{k+1}}$, the second inequality is due to $\widetilde{\theta}^{t-1}_k = (1-\lambda) \theta^{t-1}_k + \lambda \frac{\mf 1}{|\mathcal A| |\mathcal S_k| |\mathcal S_{k+1}| } $, the second inequality is due to  $\| \widetilde{\theta}^{t-1} - \theta^{t-1}  \|_1 = \sum_{k=0}^{L-1}\| \widetilde{\theta}^{t-1}_k - \theta^{t-1}_k  \|_1 =  \lambda \sum_{k=0}^{L-1} \big \| \theta^{t-1}_k - \frac{\mf 1}{|\mathcal A| |\mathcal S_k| |\mathcal S_{k+1}| } \big\|_1 \leq \lambda \sum_{k=0}^{L-1} \big  ( \big \| \theta^{t-1}_k \big \|_1 + \big \|\frac{\mf 1}{|\mathcal A| |\mathcal S_k| |\mathcal S_{k+1}| } \big\|_1 \big ) \leq 2\lambda L $, and the third inequality is by finding the minimal value of a quadratic function $-V  x + \frac{\alpha}{2} x^2$. Thus, we can show that with probability $1-\zeta$, the following inequality holds for all $t \leq  T+1$,
\begin{align}
&\frac{1}{2} \l( \|\mathbf{Q}(t)\|^2-\|\mathbf{Q}(t-1)\|^2 \r)  -\frac{LV }{2\alpha}  - 2L\lambda  V   \label{dpp:bound} \\
& \qquad \leq V \big\langle f^{t-1}, \overline{\theta}^* - \theta^{t-1} \big\rangle  + \sum_{i=1}^IQ_i(t-1)(\langle g_i^{t-1}, \overline{\theta}^*\rangle-c_i) + 
\alpha D(\overline{\theta}^*, \widetilde{\theta}^{t-1}) - \alpha D(\overline{\theta}^*, \theta^{t}) + 4 L^2. \nonumber
\end{align}
Note that according to Lemma \ref{lem:breg_diff}, we have 
\begin{align*}
D(\overline{\theta}^*, \widetilde{\theta}^{t-1}) -  D(\overline{\theta}^*, \theta^{t}) &= D(\overline{\theta}^*, \widetilde{\theta}^{t-1}) - D(\overline{\theta}^*, \theta^{t-1})+D(\overline{\theta}^*, \theta^{t-1})  -  D(\overline{\theta}^*, \theta^{t}) \\
&\leq \lambda L \log |\mathcal{S}|^2 |\mathcal{A}| +D(\overline{\theta}^*, \theta^{t-1})  -  D(\overline{\theta}^*, \theta^{t}).
\end{align*} 
Therefore, plugging the above inequality into \eqref{dpp:bound} and rearranging the terms, we further get
\begin{align*}
 V\dotp{f^{t-1}}{\theta^{t-1} - \overline{\theta}^*}  &\leq \frac{1}{2} \l( \|\mathbf{Q}(t-1)\|^2-\|\mathbf{Q}(t)\|^2 \r)   + \sum_{i=1}^IQ_i(t-1)(\langle g_i^{t-1}, \overline{\theta}^*\rangle-c_i)  \\
&\quad + \alpha \lambda L \log |\mathcal{S}|^2 |\mathcal{A}| + \alpha D(\overline{\theta}^*, \theta^{t-1})  -  \alpha D(\overline{\theta}^*, \theta^{t}) + 4 L^2 +  \frac{LV }{2\alpha}  + 2L\lambda  V  . 
\end{align*}
Thus, by taking summation on both sides of the above inequality from $2$ to $T+1$ and assuming $\mathbf{Q}(0) = 0$, we would obtain that with probability at least $1- \zeta$, the following inequality holds
\begin{align}
\begin{aligned} \label{eq:regret_except_Q_prod}
 \sum_{t=2}^{T+1} \dotp{f^{t-1}}{\theta^{t-1} - \overline{\theta}^*}  &\leq \frac{1}{V}\sum_{t=2}^{T+1}\sum_{i=1}^IQ_i(t-1)(\langle g_i^{t-1}, \overline{\theta}^*\rangle-c_i)   + \frac{T\alpha \lambda L \log |\mathcal{S}|^2 |\mathcal{A}|}{V} \\
&\qquad  + \frac{\alpha D(\overline{\theta}^*, \theta^{0}) + 4 K L^2  }{V}  +  \frac{LT }{2\alpha}  + 2L\lambda  T  . 
\end{aligned}
\end{align} 
It is not difficult to compute that $D(\overline{\theta}^*, \theta^{0}) \leq L \log|\mathcal S|^2 |\mathcal A|$ according to the initialization of $\theta^{1}$ by the uniform distribution. Then, by rearranging the terms and shifting the index, we rewrite \eqref{eq:regret_except_Q_prod} as
\begin{align*}
& \sum_{t=1}^{T} \dotp{f^t}{\theta^t - \overline{\theta}^*}  \leq \frac{1}{V}\sum_{t=1}^T\sum_{i=1}^IQ_i(t)(\langle g_i^t, \overline{\theta}^*\rangle-c_i)   + \frac{ 4 T L^2 +(\lambda T+1)\alpha  L \log |\mathcal{S}|^2 |\mathcal{A}|}{V} +  \frac{LT }{2\alpha}  + 2L\lambda  T  . 
\end{align*}
This completes the proof.
\end{proof}

\subsection{Proof of Lemma \ref{lem:Q_drift_diff}}

\begin{lemma}[Lemma 5 of \citet{yu2017online}] \label{lem:drift} Let $\{Z(t), t \geq 0\}$ be a discrete time stochastic process adapted to a filtration  $\{ \mathcal{U}^t, t \geq 0 \}$ with $Z(0) = 0$ and $\mathcal{U}^0 = \{ \emptyset, \Omega \}$. Suppose there exists an integer $\tau  > 0$, real constants $\theta > 0$, $\rho_{\max} > 0$, and $0< \kappa \leq \rho_{\max}$ such that 
\begin{align*}
|Z(t+1)-Z(t)| &\leq \rho_{\max},\\
\E[ Z(t+\tau ) - Z(t) | \mathcal{U}^t ] &\leq\left\{\begin{matrix}
\tau  \rho_{\max},\ \text{ if } Z(t) < \psi\\ 
-\tau \kappa, \quad \text{ if } Z(t) \geq \psi
\end{matrix}\right.
\end{align*}
hold for all $t\in \{1,2,...\}$. Then for any constant $0<\delta<1$, 
with probability at least $1-\delta$, we have 
\begin{align*}
Z(t) \leq \psi + \tau  \frac{4\rho_{\max}^2}{\kappa} \log\bigg(1+\frac{8\rho_{\max}^2}{\kappa^2} e^{\kappa / (4\rho_{\max})}\bigg) + \tau  \frac{4\rho_{\max}^2}{\kappa} \log\frac{1}{\delta}, ~~\forall t \in \{1,2,...\}.
\end{align*}
\end{lemma}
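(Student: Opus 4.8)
The plan is to follow the Hajek-type exponential Lyapunov argument. The idea is to track the moment generating function $\E[e^{rZ(t)}]$ of the process, convert the two-regime mean-drift hypothesis into a geometric contraction of this quantity across windows of length $\tau$, and then turn the resulting uniform moment bound into a high-probability tail bound via Markov's inequality. The three terms in the stated bound will emerge transparently at the final substitution step.

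First I would fix the parameter $r:=\kappa/(4\tau\rho_{\max}^2)$ and define $V(t):=e^{rZ(t)}$. Writing $W:=Z(t+\tau)-Z(t)$, the per-step bound $|Z(t+1)-Z(t)|\le\rho_{\max}$ gives $|W|\le\tau\rho_{\max}$, hence $|rW|\le\kappa/(4\rho_{\max})\le 1/4$ using $\kappa\le\rho_{\max}$. On this range the elementary inequality $e^y\le 1+y+y^2$ applies, so $\E[e^{rW}\mid\mathcal{U}^t]\le 1+r\,\E[W\mid\mathcal{U}^t]+r^2\tau^2\rho_{\max}^2$. In the regime $Z(t)\ge\psi$ the hypothesis $\E[W\mid\mathcal{U}^t]\le-\tau\kappa$ then yields $\E[e^{rW}\mid\mathcal{U}^t]\le\rho$ with $\rho:=1-3\kappa^2/(16\rho_{\max}^2)<1$, since the choice of $r$ makes the negative first-order drift strictly dominate the quadratic remainder; multiplying by $V(t)$ gives $\E[V(t+\tau)\mid\mathcal{U}^t]\le\rho\,V(t)$ on this event. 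For the complementary regime $Z(t)<\psi$, bounded increments give the deterministic cap $Z(t+\tau)<\psi+\tau\rho_{\max}$, hence $\E[V(t+\tau)\mid\mathcal{U}^t]\le e^{r(\psi+\tau\rho_{\max})}=:b$, and here the factor $e^{\kappa/(4\rho_{\max})}=e^{r\tau\rho_{\max}}$ appearing in the final bound originates from $b=e^{r\psi}e^{\kappa/(4\rho_{\max})}$. The two regimes then combine into the single drift recursion
\begin{equation*}
\E[V(t+\tau)\mid\mathcal{U}^t]\le\rho\,V(t)+b,\qquad\forall t,
\end{equation*}
because $\rho\,V(t)\ge 0$ absorbs the low regime while $b\ge 0$ absorbs the high regime.

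Next I would take total expectations, set $u(t):=\E[V(t)]$, and iterate $u(t+\tau)\le\rho\,u(t)+b$ along each residue class $t\equiv t_0\pmod\tau$ with $t_0\in\{0,\dots,\tau-1\}$. For the base cases, $Z(0)=0$ together with bounded increments gives $Z(t_0)\le t_0\rho_{\max}<\tau\rho_{\max}$, so $u(t_0)\le e^{\kappa/(4\rho_{\max})}$; unrolling the geometric recursion and summing the tail $b\sum_{j\ge 0}\rho^j$ then gives a uniform bound of the form $u(t)\le e^{r\psi}\bigl(1+\tfrac{1}{1-\rho}e^{\kappa/(4\rho_{\max})}\bigr)$ for all $t$ (using $\psi\ge\tau\rho_{\max}$ to fold the transient term in). Finally, Markov's inequality gives $\Pr(Z(t)\ge z)\le e^{-rz}u(t)$; setting the right-hand side equal to $\delta$ yields $z=\tfrac1r\log\bigl(u(t)/\delta\bigr)$, and substituting $1/r=4\tau\rho_{\max}^2/\kappa$, the bound $1/(1-\rho)\le 8\rho_{\max}^2/\kappa^2$, and the value of $b$ produces exactly the stated inequality: the $\psi$ term from $\tfrac1r\log e^{r\psi}$, the $\log\bigl(1+\tfrac{8\rho_{\max}^2}{\kappa^2}e^{\kappa/(4\rho_{\max})}\bigr)$ term from $\tfrac1r\log u(t)$, and the last term from $\tfrac1r\log(1/\delta)$.

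I would flag the main obstacle as establishing the combined drift recursion uniformly over both regimes. The delicate balance is to choose $r$ small enough that the quadratic remainder in the exponential inequality is strictly dominated by the negative linear drift, so that $\rho$ is bounded away from $1$, while keeping $r$ large enough that the confidence term $\tfrac1r\log(1/\delta)$ stays at the advertised order $\tau\rho_{\max}^2/\kappa$; simultaneously, the boundary case $Z(t)<\psi$ must be made to fold into the \emph{same} recursion through the deterministic increment cap rather than breaking it, which is what forces the additive constant $b$ and ultimately the $e^{\kappa/(4\rho_{\max})}$ factor. Verifying that the loosened constants (for instance replacing $16/3$ by $8$ inside the logarithm) remain valid upper bounds is routine and I would defer it to the calculation.
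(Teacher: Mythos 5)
The paper offers no proof of this lemma at all---it is imported verbatim as Lemma~5 of \citet{yu2017online}---and your reconstruction (fix $r=\kappa/(4\tau\rho_{\max}^2)$, establish the two-regime recursion $\E[e^{rZ(t+\tau)}\,|\,\mathcal{U}^t]\le \rho\, e^{rZ(t)}+b$ with $\rho=1-3\kappa^2/(16\rho_{\max}^2)$, iterate along residue classes mod $\tau$, then apply Markov's inequality) is exactly the Hajek-type exponential drift argument by which the cited result is proved, so your proposal is correct and takes essentially the same approach. The only caveat is that the side condition $\psi\ge\tau\rho_{\max}$ you invoke to fold the transient term into $e^{r\psi}$ is not among the hypotheses, but it is dispensable: since $e^{r\psi}\ge 1$ you may instead bound $u(t)\le e^{r\psi}e^{\kappa/(4\rho_{\max})}\bigl(1+\tfrac{1}{1-\rho}\bigr)$, and the resulting extra additive term $\tau\rho_{\max}=\tfrac{1}{r}\cdot\tfrac{\kappa}{4\rho_{\max}}$ is absorbed precisely by your loosening of $16/3$ to $8$, because $1+8xe^{c}-e^{c}\bigl(1+\tfrac{16}{3}x\bigr)\ge 1+\tfrac{5}{3}e^{c}>0$ for $x=\rho_{\max}^2/\kappa^2\ge 1$ and $c=\kappa/(4\rho_{\max})$.
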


Now, we are in position to give the proof of Lemma \ref{lem:Q_drift_diff}.
%
%\begin{lemma*}[Lemma \ref{lem:Q_drift_diff}] Denoting $Q(t)$ as the vector $[Q_1(t), \cdots, Q_I(t)]^\top$, and letting $\tau=\sqrt{T}$ and $\zeta$ satisfy $\overline{\sigma}/4 \geq \zeta  (\overline{\sigma}/2 + 2 L )$, the updating rules in Algorithm \ref{alg:ucpd} ensure that with probability at least $1-T\delta$, the following inequality holds for all $t \in \{1,\ldots,T\}$, 
%\begin{align*}
%\|Q(t)\|_2 \leq& \psi + \tau  \frac{512 L^2 }{ \overline{\sigma}} \log \bigg( 1+\frac{128 L^2 }{\overline{\sigma}^2} e^{\overline{\sigma} / (32 L )} \bigg) + \tau  \frac{64 L^2 }{\overline{\sigma}} \log \frac{1}{\delta},
%\end{align*}
%where 
%\begin{align*}
%&\psi:=\frac{ 2 \tau  L + C_{V,\alpha, \lambda}}{\overline{\sigma}} + \frac{2\alpha \lambda L \log ( |\mathcal{S}|^2 |\mathcal A|/ \lambda) }{\overline{\sigma}\tau } + \frac{\tau  \overline{\sigma}}{2}, \\
%&C_{V, \alpha, \lambda} :=  2(\overline{\sigma} B+  \overline{\sigma} ~ \overline{\vartheta})V + (6  + 4\overline{\vartheta})  V L    +  \frac{VL }{\alpha} + 4L\lambda  V  + 2\alpha \lambda L \log |\mathcal S|^2 |\mathcal A|+ 8 L^2.
%\end{align*}
%\end{lemma*}

\begin{proof} [Proof of Lemma \ref{lem:Q_drift_diff}] The proof of this Lemma is based on applying the lemma \ref{lem:drift} to our problem. Thus, this proof mainly focuses on showing that the variable $\|\mathbf{Q}(t)\|_2$ satisfies the condition of Lemma \ref{lem:drift}.  By the updating rule of $Q_i(t)$, i.e., $Q_i(t+1) = \max\{ Q_i(t) + \langle g_i^t, \theta^{t+1} \rangle  - c_i, 0\}$, we have
\begin{align*}
\l|\|\mathbf{Q}(t+1)\|_2-\|\mathbf{Q}(t)\|_2 \r| \leq&  \|\mathbf{Q}(t+1) - \mathbf{Q}(t)\|_2  \\
=&  \sqrt{\sum_{i=1}^I |Q_i(t+1) - Q_i(t)|^2} \\
\leq &  \sqrt{\sum_{i=1}^I |\langle g_i^t, \theta^{t+1} \rangle  - c_i|^2} ,
\end{align*}
where the first inequality is due to triangle inequality, and the second inequality is by the fact that $|\max\{ a+ b, 0\} - a| \leq |b|$ if $a\geq 0$. Then, by Assumption \ref{as:bounded}, we further have
\begin{align*}
\sqrt{\sum_{i=1}^I |\langle g_i^t, \theta^{t+1} \rangle  - c_i|^2} \leq \sum_{i=1}^I |\langle g_i^t, \theta^{t+1} \rangle  - c_i|\leq \sum_{i=1}^I (\| g_i^t \|_\infty \|\theta^{t+1} \|_1  + |c_i|) \leq 2L, 
\end{align*}
which therefore implies
\begin{align}
\l|\|\mathbf{Q}(t+1)\|_2-\|\mathbf{Q}(t)\|_2 \r| \leq 2L. \label{eq:Q_shift}
\end{align}
Thus, with the above inequality, we have
\begin{align}
\begin{aligned}\label{eq:Q_diff_hard_bound}
\|\mathbf{Q}(t+\tau )\|_2 - \|\mathbf{Q}(t)\|_2  &\leq |\|\mathbf{Q}(t+\tau )\|_2 - \|\mathbf{Q}(t)\|_2 | \\
&\leq \sum_{\tau = 1}^{\tau } \l|\|\mathbf{Q}(t+\tau)\|_2 - \|\mathbf{Q}(t+\tau-1)\|_2 \r|  \leq  2  \tau  L, 
\end{aligned}
\end{align}
such that 
\begin{align}
\E [\|\mathbf{Q}(t+\tau )\|_2 - \|\mathbf{Q}(t)\|_2 | \mathcal{F}^{t-1} ]   \leq  2 \tau  L.\label{eq:exp_Q_diff_hard_bound}
\end{align}
where $\mathcal{F}^{t-1}$ represents the system randomness up to the $(t-1)$-th episode and $\mathbf{Q}(t)$ depends on $\mathcal{F}^{t-1}$ according to its updating rule. 
Note that \eqref{eq:Q_diff_hard_bound} in fact indicates that the random process $\|\mathbf{Q}(t+\tau )\|_2 - \|\mathbf{Q}(t)\|_2$ is bounded by the value $2 \tau  L$.

Next, we need to show that there exist $\psi$ and $\kappa$ such that $\E [\|\mathbf{Q}(t+\tau )\|_2 - \|\mathbf{Q}(t)\|_2 | \mathcal{F}^{t-1} ]  \leq - \tau \kappa$ if $ \|\mathbf{Q}(t)\|_2 \geq \psi$. Recall the definition of the event $\mathcal{D}_T$ in \eqref{eq:event_opt_feas}. Therefore, we have that with probability at least $1- \zeta $, the event $\mathcal{D}_T$ happens, such that for all $t' = 2,...,T+1$ and any $\theta \in \cap_{\ell=1}^{\ell(T+1)} \Delta(\ell, \zeta)$, the following holds
\begin{align*}
V\big\langle f^{t'-1}, \theta^{t'-1} - \overline{\theta}^*\big\rangle  &\leq \frac{1}{2} \l( \|\mathbf{Q}(t'-1)\|_2^2-\|\mathbf{Q}(t')\|_2^2 \r)   + \sum_{i=1}^IQ_i(t'-1)(\langle g_i^{t'-1}, \theta\rangle-c_i)  \\
&\quad  + \alpha \lambda L \log |\mathcal{S}|^2 |\mathcal{A}| + \alpha D(\theta,\widetilde{\theta}^{t'-1})  -  \alpha D(\theta, \theta^{t'}) + 4  L^2 +  \frac{LV }{2\alpha}  + 2L\lambda  V  ,
\end{align*}
which adopts similar proof techniques to \eqref{dpp:bound}. Then, the above inequality further leads to the following inequality by rearranging the terms  
\begin{align*}
 \|\mathbf{Q}(t')\|_2^2 -\|\mathbf{Q}(t'-1)\|_2^2  &\leq  -2V\dotp{f^{t'-1}}{\theta^{t'-1} - \theta}  +  2 \sum_{i=1}^IQ_i(t'-1)(\langle g_i^{t'-1}, \theta\rangle-c_i)  \\
&\quad   + 2\alpha \lambda L \log |\mathcal{S}|^2 |\mathcal{A}| + 2\alpha D(\theta, \widetilde{\theta}^{t'-1})  - 2 \alpha D(\theta, \theta^{t'}) + 8L^2 +  \frac{LV }{\alpha}  + 4L\lambda  V.
\end{align*}
Taking summation from $t + 1$ to $\tau  + t$ on both sides of the above inequality, the following inequality holds with probability $1-\zeta$ for any $\tau>0$ and $t$ satisfying $1 \leq k \leq K+1-\tau$
\begin{align}
\begin{aligned}\label{eq:Q_diff_high_prob_bound}
 &\hspace*{-0.3cm}\|\mathbf{Q}(\tau  + t)\|_2^2 -\|\mathbf{Q}(t)\|_2^2  \\
 &\hspace*{-0.3cm}\quad \leq -2V \sum_{t' = t+1 }^{\tau +t} \dotp{f^{t'-1}}{\theta^{t'-1} - \theta} +  2  \sum_{t' = t+1 }^{\tau +t} \sum_{i=1}^IQ_i(t'-1)(\langle g_i^{t'-1}, \theta\rangle-c_i)  + 2\alpha D(\theta, \widetilde{ \theta}^{t}) \\
&\hspace*{-0.3cm} \qquad - 2 \alpha D(\theta, \widetilde{\theta}^{\tau  + t }) + \sum_{t'=t+1}^{\tau  + t}  2 \alpha [ D(\theta, \widetilde{\theta}^{t'-1})  - D(\theta, \theta^{t'-1}) ] + 8\tau L^2  +  \frac{\tau LV }{\alpha}  + 4\tau L\lambda  V  . 
\end{aligned}
\end{align}
Particularly, in \eqref{eq:Q_diff_high_prob_bound}, the term $-2\alpha D(\theta, \theta^{t'-1}) \leq 0$ due to the non-negativity of unnormalized KL divergence. By Lemma \ref{lem:breg_diff}, we can bound 
\begin{align*}
\sum_{\tau=t+1}^{\tau  + t}  2 \alpha [ D(\theta, \widetilde{\theta}^{t'-1})  - D(\theta, \theta^{t'-1}) ] \leq 2\alpha  \tau  L \log |\mathcal S|^2 |\mathcal A|. 
\end{align*}
For the term $2\alpha D(\theta, \widetilde{ \theta}^{t})$, by Lemma \ref{lem:breg_diff}, we can bound it as
\begin{align*}
2\alpha D(\theta, \widetilde{ \theta}^{t})  \leq  2\alpha   L \log ( |\mathcal{S}|^2 |\mathcal{A}|/\lambda).
\end{align*}
Moreover, we can decompose the term $2V\sum_{t'=t+1}^{\tau  + t}\langle f^{t'-1}, \theta- \theta^{t'-1}\rangle +  2 \sum_{t'=t+1}^{\tau  + t} \sum_{i=1}^IQ_i(t'-1)(\langle g_i^{t'-1}, \overline{\theta}^*\rangle-c_i)$ in \eqref{eq:Q_diff_high_prob_bound} as 
\begin{align*}
&2V \sum_{t'=t+1}^{\tau  + t}\dotp{f^{t'-1}}{ \theta-\theta^{t'-1} }     +  2 \sum_{t'=t+1}^{\tau  + t}  \sum_{i=1}^IQ_i(t'-1)(\langle g_i^{t'-1}, \theta\rangle-c_i) \\
&\qquad =  2V \sum_{t'=t+1}^{\tau  + t}\dotp{f^{t'-1}}{ \theta -\theta^{t'-1} }     +  2 \sum_{i=1}^IQ_i(t) \sum_{t'=t+1}^{\tau  + t}   (\langle g_i^{t'-1}, \theta \rangle-c_i) \\
&\qquad\qquad \quad  +  2 \sum_{t'=t+2}^{\tau  + t}   \sum_{i=1}^I [ Q_i(t'-1) - Q_i(t)] (\langle g_i^{t'-1}, \theta \rangle-c_i) \\
&\qquad \leq   2V \sum_{t'=t+1}^{\tau  + t}\dotp{f^{t'-1}}{ \theta}     +  2 \sum_{i=1}^IQ_i(t) \sum_{t'=t+1}^{\tau  + t}   (\langle g_i^{t'-1},\theta\rangle-c_i) +  2L\tau ^2 + 2VL\tau ,
\end{align*}
where the last inequality is due to 
\begin{align*}
- 2V \sum_{t'=t+1}^{\tau  + t}\dotp{f^{t'-1}}{\theta^{t'-1} } \leq 2V \sum_{t'=t+1}^{\tau  + t} \sum_{k=0}^{L-1} \sum_{s \in \mc S_k} \sum_{a \in \mc A} \sum_{s' \in \mc S_{k+1}}  f^{t'-1}(s, a, s')\theta^{t'-1} (s, a, s') \leq 2VL\tau ,
\end{align*}
as well as
\begin{align*}
&2 \sum_{t'=t+2}^{\tau  + t}   \sum_{i=1}^I [ Q_i(t'-1) - Q_i(t)] (\langle g_i^{t'-1}, \theta \rangle-c_i) \\
&\qquad \leq 2 \sum_{t'=t+2}^{\tau  + t}   \sum_{i=1}^I  \sum_{r=t}^{t'-2}|\langle g_i^r, \theta^{r+1} \rangle - c_i| \cdot |\langle g_i^{t'-1}, \theta\rangle-c_i |  \\
&\qquad \leq   \sum_{t'=t+2}^{\tau  + t}   \sum_{r=t}^{t'-2} \sqrt{ \sum_{i=1}^I |\langle g_i^r, \theta^{r+1} \rangle - c_i|^2}   + \sum_{t'=t+2}^{\tau  + t}  \sum_{r=t}^{t'-2}  \sqrt{\sum_{i=1}^I   |\langle g_i^{t'-1}, \theta\rangle-c_i |^2}  \\
&\qquad \leq  2 L \tau ^2,
\end{align*}
by $Q_i(t+1) = \max\{ Q_i(t) + \langle g_i^t, \theta^{t+1} \rangle  - c_i, 0\}$ and $|\max\{ a+ b, 0\} - a| \leq |b|$ if $a\geq 0$ for the first inequality, and Assumption \ref{as:bounded} for the last inequality. 

Therefore, taking conditional expectation on both sides of \eqref{eq:Q_diff_high_prob_bound} and combining the above upper bounds for certain terms in \eqref{eq:Q_diff_high_prob_bound}, we can obtain for any $\theta\in \cap_{\ell=1}^{\ell(T+1)} \Delta(\ell, \zeta)$, 
\begin{align}
\begin{aligned}\label{eq:exp_Q_diff_high_prob_bound}
&\E[\|\mathbf{Q}(\tau  + t)\|^2 -\|\mathbf{Q}(t)\|^2 | \mathcal{F}^{t-1}, \mathcal{D}_T ]   \\
& \qquad \leq    2 \tau ^2 L + 2\alpha   L \log ( |\mathcal{S}|^2 |\mathcal{A}|/\lambda)   \\
&\qquad \quad +  2V\tau \E \bigg[ \frac{1}{\tau }\sum_{t'=t+1}^{\tau  + t}\langle f^{t'-1}, \theta \rangle     +  \frac{1}{\tau } \sum_{i=1}^I \frac{Q_i(t)}{V} \sum_{t'=t+1}^{\tau  + t}   (\langle g_i^{t'-1}, \theta \rangle-c_i)  \bigg| \mathcal{F}^{t-1}, \mathcal{D}_T \bigg]     \\
&\qquad \quad + 2\alpha \lambda \tau  L \log |\mathcal S|^2 |\mathcal A|   + 8\tau L^2  +  \frac{\tau LV }{\alpha} + 4\tau L\lambda  V   + 2VL\tau .  
\end{aligned}
\end{align}
Thus, it remains to bound the term $\E[ \frac{1}{\tau }\sum_{t'=t+1}^{\tau  + t}\langle f^{t'-1}, \theta \rangle     +  \frac{1}{\tau } \sum_{i=1}^I \frac{Q_i(t)}{V} \sum_{t'=t+1}^{\tau  + t}   (\langle g_i^{t'-1}, \theta\rangle-c_i)  | \mathcal{F}^{t-1},\mathcal{D}_T ]$ so as to give an upper bound of the right-hand side of \eqref{eq:exp_Q_diff_high_prob_bound}. Given the event $\mc D_T$ happens such that $\Delta \subseteq \cap_{\ell=1}^{\ell(T+1)} \Delta(\ell, \zeta) \neq \emptyset$, and since $\theta$ is any  vector in the set $\cap_{\ell=1}^{\ell(T+1)} \Delta(\ell, \zeta)$, we can give an upper bound of \eqref{eq:exp_Q_diff_high_prob_bound} by bounding a term $q^{(t,\tau )}\Big( \frac{\mathbf{Q}(t)}{V} \Big)$, which is by
\begin{align*}
&\min_{\theta \in \cap_{\ell=1}^{\ell(T)} \Delta(\ell, \zeta)} \E \Big[ \frac{1}{\tau }\sum_{t'=t+1}^{\tau  + t} \big \langle f^{t'-1},  \theta \big \rangle +  \frac{1}{\tau } \sum_{i=1}^I \frac{Q_i(t)}{V} \sum_{t'=t+1}^{\tau  + t}   (\langle g_i^{t'-1}, \theta\rangle-c_i)  \Big| \mathcal{F}^t, \mathcal{D}_T \Big] \\
&\quad =  \min_{\theta \in \cap_{\ell=1}^{\ell(T)} \Delta(\ell, \zeta)}  \big\langle f^{(t,\tau )}, \theta \big\rangle  +  \sum_{i=1}^I \frac{Q_i(t)}{V}  (\langle g_i, \theta\rangle-c_i) \\
&\quad \leq  \min_{\theta \in \Delta}  \big\langle f^{(t,\tau )}, \theta \big\rangle  +  \sum_{i=1}^I \frac{Q_i(t)}{V}  (\langle g_i, \theta\rangle-c_i)  = q^{(t,\tau )}\Big( \frac{\mathbf{Q}(t)}{V} \Big),
\end{align*}
where the inequality is due to $\Delta \subseteq  \cap_{\ell=1}^{\ell(T+1)} \Delta(\ell, \zeta)$ given $\mc D_T$ happens and the last equality is obtained according to the definition of the dual function $q$ in Section \ref{sec:main}. We can bound $q^{(t,\tau )}\big( \frac{\mathbf{Q}(t)}{V} \big)$ in the following way. 

According to Assumption \ref{as:selm}, we assume that one dual solution is $\eta^*_{t,\tau} \in \mathcal{V}_{t,\tau }^*$. We let $\overline{\vartheta}$ be the maximum of all $\vartheta$ and $\overline{\sigma}$ be the minimum of all $\sigma$. Thus, when $\dist(\frac{\mathbf{Q}(t)}{V}, \mathcal{V}_{t,\tau }^*) \geq \overline{\vartheta}$, we have
\begin{align*}
q^{(t,\tau )}\Big(\frac{\mathbf{Q}(t)}{V}\Big) = & q^{(t,\tau )}\Big(\frac{\mathbf{Q}(t)}{V}\Big) - q^{(t,\tau )}(\eta^*_{t,\tau }) + q^{(t,\tau )}(\eta^*_{t,\tau }) \\
\leq & -\overline{\sigma}\Big\|\eta^*_{t,\tau } - \frac{\mathbf{Q}(t)}{V} \Big\|_2 + \big\langle f^{(t,\tau )}, \theta^*_{t,\tau} \big\rangle\\ 
\leq & -\overline{\sigma}\Big\| \frac{\mathbf{Q}(t)}{V} \Big\|_2 + \overline{\sigma}\|\eta^*_{t,\tau } \|_2 + \sum_{k=0}^{L-1} \sum_{s \in \mc S_k} \sum_{a \in \mc A} \sum_{s' \in \mc S_{k+1}}  f^{(t,\tau )}(s,a, s') \theta^*_{t,\tau}(s,a, s')\\
\leq & -\overline{\sigma}\Big\| \frac{\mathbf{Q}(t)}{V} \Big\|_2 + \overline{\sigma} B + L ,
\end{align*}
where the first inequality is due to the weak error bound in Lemma \ref{lem:leb} and weak duality with $\theta^*_{t,\tau}$ being one primal solution, the second inequality is by triangle inequality, and the third inequality is by Assumption \ref{as:bounded} and Assumption \ref{as:selm}. On the other hand, when $\dist(\frac{\mathbf{Q}(t)}{V}, \mathcal{V}_{t,\tau }^*) \leq \overline{\vartheta}$, we have
\begin{align*}
q^{(t,\tau )}\Big(\frac{\mathbf{Q}(t)}{V}\Big) = & \min_{\theta \in \Delta  }  \big \langle f^{(t,\tau )},\theta\big \rangle +  \sum_{i=1}^I \frac{Q_i(t)}{V}  (\langle g_i, \theta\rangle-c_i) \\
= & \min_{\theta \in \Delta  }  \big \langle f^{(t,\tau )}, \theta \big\rangle  +  \sum_{i=1}^I [\eta^*_{t,\tau }]_i  (\langle g_i, \theta\rangle-c_i) +  \sum_{i=1}^I \Big(\frac{Q_i(t)}{V}-[\eta^*_{t,\tau }]_i\Big)  (\langle g_i, \theta\rangle-c_i) \\
\leq & q^{(t,\tau )}(\eta^*_{t,\tau })  +   \Big\|\frac{\mathbf{Q}(t)}{V}-\eta^*_{t,\tau }\Big\|_2 \|\mf g( \theta)-\mf c\|_2\\
\leq & L + 2 \overline{\vartheta} L ,
\end{align*}
where the first inequality is by the definition of $q^{(t,\tau )}(\eta^*_{t,\tau })$ and Cauchy-Schwarz inequality,  and the second inequality is due to weak duality and Assumption \ref{as:bounded} such that
\begin{align*}
&q^{(t,\tau )}(\eta^*_{t,\tau }) \leq \big\langle f^{(t,\tau )}, \theta^*_{t,\tau} \big\rangle \leq \big\|f^{(t,\tau )}\big\|_\infty \| \theta^*_{t,\tau}\|_1   \leq  L,\\
&\Big\|\frac{\mathbf{Q}(t)}{V}-\eta^*_{t,\tau }\Big\|_2 \|\mf g( \theta)-\mf c\|_2 \leq \overline{\vartheta}  \sqrt{\sum_{i=1}^I \Big|\langle  g_i, \theta\rangle  -c_i \Big|^2 } \leq \overline{\vartheta}  \sum_{i=1}^I ( \| g_i\|_\infty  \|\theta\|_1 + |c_i| ) \leq  2\overline{\vartheta} L.
\end{align*}
Now we can combine the two cases as follows
\begin{align}
q^{(t,\tau )}\Big( \frac{\mathbf{Q}(t)}{V} \Big) \leq  -\overline{\sigma}\Big\| \frac{\mathbf{Q}(t)}{V} \Big\|_2 + \overline{\sigma} B + 2L + 2 \overline{\vartheta}L + \overline{\sigma} \overline{\vartheta}. \label{eq:bound_q_func}
\end{align}
The bound in \eqref{eq:bound_q_func} is derived by the following discussion:
\begin{itemize}
\item[\textbf{(1})] When $\dist\big( \frac{\mathbf{Q}(t)}{V}, \mathcal{V}^*_{t,\tau } \big) \geq \overline{\vartheta}$, we have 
$$q^{(t,\tau )}\big( \frac{\mathbf{Q}(t)}{V} \big) \leq  -\overline{\sigma}\big\| \frac{\mathbf{Q}(t)}{V} \big\|_2 + \overline{\sigma} B + L  \leq  -\overline{\sigma}\big\| \frac{\mathbf{Q}(t)}{V} \big\|_2 + \overline{\sigma} B + 2L+ 2\overline{\vartheta}L + \overline{\sigma} \overline{\vartheta}.$$

\item[\textbf{(2})] When $\dist\big( \frac{\mathbf{Q}(t)}{V}, \mathcal{V}^*_{t,\tau } \big) < \overline{\vartheta}$, we have 
$$q^{(t,\tau )}\big( \frac{\mathbf{Q}(t)}{V} \big) \leq  L  + 2\overline{\vartheta} L   \leq  -\overline{\sigma}\big\| \frac{\mathbf{Q}(t)}{V} \big\|_2 + \overline{\sigma} B + 2L + 2\overline{\vartheta}L + \overline{\sigma} \overline{\vartheta},$$ since $-\overline{\sigma}\big\| \frac{\mathbf{Q}(t)}{V}\big\|_2 + \overline{\sigma} \overline{\vartheta} + \overline{\sigma} B \geq -\overline{\sigma} \cdot \dist \big( \frac{\mathbf{Q}(t)}{V}, \mathcal{V}^*_{t,\tau }\big) + \overline{\sigma} \overline{\vartheta} + \overline{\sigma} B - \overline{\sigma}B = \overline{\sigma} \big[ -\dist \big( \frac{\mathbf{Q}(t)}{V}, \mathcal{V}^*_{t,\tau }\big) + \overline{\vartheta} \big]   \geq 0$.
\end{itemize}

Therefore, plugging \eqref{eq:bound_q_func} into \eqref{eq:exp_Q_diff_high_prob_bound}, we can obtain that given the event $\mathcal{D}_T$ happens, the following holds 
\begin{align}
\begin{aligned}\label{eq:bound_q_func_diff}
&\E[\|\mathbf{Q}(\tau  + t)\|_2^2 -\|\mathbf{Q}(t)\|_2^2 | \mathcal{F}^t, \mathcal{D}_T ]\\
&\qquad \leq 2\tau ^2 L  + \tau  C_{V, \alpha, \lambda}  + 2\alpha   L \log ( |\mathcal{S}|^2 |\mathcal{A}|/\lambda)
-  2\tau  \overline{\sigma}\| \mathbf{Q}(t)\|_2, 
\end{aligned}
\end{align}
where we define 
\begin{align*}
C_{V, \alpha, \lambda} :=  2(\overline{\sigma} B+  \overline{\sigma} ~ \overline{\vartheta})V + (6  + 4\overline{\vartheta})  V L    +  \frac{VL }{\alpha} + 4L\lambda  V  + 2\alpha \lambda L \log |\mathcal S|^2 |\mathcal A|+ 8 L^2.
\end{align*}
We can see that if $\|\mathbf{Q}(t)\|_2 \geq (2 \tau  L  + 
C_{V,\alpha, \lambda})/\overline{\sigma} + 2\alpha \lambda L \log (|\mathcal{S}|^2 |\mathcal A| / \lambda) /(\overline{\sigma}\tau ) + \tau  \overline{\sigma} / 2$, then according to \eqref{eq:bound_q_func_diff}, there is
\begin{align*}
\E[\|\mathbf{Q}(\tau  + t)\|^2 | \mathcal{F}^{t-1}, \mathcal{D}_T ] \leq & \|\mathbf{Q}(t)\|^2 -  \tau  \overline{\sigma}\| \mathbf{Q}(t)\|_2 - \frac{\overline{\sigma}^2 \tau ^2}{2} \\
\leq & \|\mathbf{Q}(t)\|_2^2 -  \tau  \overline{\sigma}\| \mathbf{Q}(t)\|_2 + \frac{\overline{\sigma}^2 \tau ^2}{4}\\
\leq & \Big(\|\mathbf{Q}(t)\|_2 -  \frac{\tau  \overline{\sigma}}{2}\Big)^2.
\end{align*}
Due to $\|\mathbf{Q}(t)\|_2 \geq \frac{\tau  \overline{\sigma}}{2}$ and by Jensen's inequality, we have
\begin{align}
\E[\|\mathbf{Q}(\tau  + t)\|_2 | \mathcal{F}^{t-1}, \mathcal{D}_T ] \leq  \sqrt{\E[\|\mathbf{Q}(\tau  + t)\|_2^2 | \mathcal{F}^{t-1}, \mathcal{D}_T ]} \leq  \|\mathbf{Q}(t)\|_2 -  \frac{\tau  \overline{\sigma}}{2}. \label{eq:final_exp_Q_diff_high_prob_bound}
\end{align}
Then we can compute the expectation $\E[\|\mathbf{Q}(\tau  + t)\|_2^2 -\|\mathbf{Q}(t)\|_2^2 | \mathcal{F}^{t-1}]$ according to the law of total expectation. With \eqref{eq:Q_diff_hard_bound} and \eqref{eq:final_exp_Q_diff_high_prob_bound}, we can obtain that
\begin{align*}
&\E[\|\mathbf{Q}(\tau  + t)\|_2 -\|\mathbf{Q}(t)\|_2 | \mathcal{F}^{t-1}] \\
&\quad = P(\mathcal{D}_T) \E[\|\mathbf{Q}(\tau  + t)\|_2 -\|\mathbf{Q}(t)\|_2 | \mathcal{F}^{t-1}, \mathcal{D}_T] +P(\overline{\mathcal{D}}_T) \E[\|\mathbf{Q}(\tau  + t)\|_2 -\|\mathbf{Q}(t)\|_2 | \mathcal{F}^{t-1}, \overline{\mathcal{D}}_T]\\
&\quad  \leq -\frac{\tau  \overline{\sigma}}{2}(1-  \zeta )  + 2 \zeta  \tau  L = - \tau  \Big[ \frac{\overline{\sigma}}{2} -  \zeta \Big(\frac{\overline{\sigma}}{2} + 2L \Big) \Big] \leq - \frac{\overline{\sigma}}{4}\tau  ,
\end{align*}
where we let $\overline{\sigma}/4 \geq  \zeta (\overline{\sigma}/2 + 2L)$.

Summarizing the above results, we know that if $\overline{\sigma}/4 \geq \zeta  (\overline{\sigma}/2 + 2L)$, then
\begin{align*}
\l|\|\mathbf{Q}(t+1)\|_2-\|\mathbf{Q}(t)\|_2 \r| &\leq 2 L,\\
\E [\|\mathbf{Q}(t+\tau )\|_2 - \|\mathbf{Q}(t)\|_2 | \mathcal{F}^t ] &\leq\left\{\begin{matrix}
2 \tau  L , \qquad \text{ if } \|\mathbf{Q}(t)\|_2 < \psi\\ 
-\frac{\overline{\sigma}}{4}\tau   , ~\quad \ \ \text{ if } \|\mathbf{Q}(t)\|_2 \geq \psi
\end{matrix}\right.,
\end{align*}
where we let
\begin{align*}
&\psi=\frac{ 2 \tau  L  + C_{V,\alpha, \lambda}}{\overline{\sigma}} + \frac{2\alpha  L \log ( |\mathcal{S}|^2 |\mathcal A|/ \lambda) }{\overline{\sigma}\tau } + \frac{\tau  \overline{\sigma}}{2}, \\
&C_{V, \alpha, \lambda} =  2(\overline{\sigma} B+  \overline{\sigma} ~ \overline{\vartheta})V + (6  + 4\overline{\vartheta})  V L    +  \frac{VL }{\alpha} + 4L\lambda  V  + 2\alpha \lambda L \log |\mathcal S|^2 |\mathcal A|+ 8 L^2.
\end{align*}
Directly by Lemma \ref{lem:drift}, for a certain $t \in [T+1-\tau]$, the following inequality holds with probability at least $1-\delta$, 
\begin{align}
\|\mathbf{Q}(t)\|_2 \leq& \psi + \tau  \frac{512 L^2 }{ \overline{\sigma}} \log \bigg( 1+\frac{128L^2}{\overline{\sigma}^2} e^{\overline{\sigma} / (32L )} \bigg) + \tau  \frac{64 L^2 }{\overline{\sigma}} \log \frac{1}{\delta}. \label{eq:Q_bound}
\end{align}
Further employing union bound for probabilities, we have that with probability at least $1-(T+1-\tau)\delta \geq 1-T\delta$, for any $t\in [T+1-\tau]$, the above inequality \eqref{eq:Q_bound} holds. Note that \eqref{eq:Q_bound} only holds when $t\in [T+1-\tau]$. For $T+2-\tau\leq t \leq T+1$, when \eqref{eq:Q_bound} holds for $t\in [T+1-\tau]$, combining \eqref{eq:Q_bound} and \eqref{eq:Q_shift}, we have
\begin{align}
    \|\mathbf{Q}(t)\|_2 \leq& \psi + \tau  \frac{512 H^2 }{ \overline{\sigma}} \log \bigg( 1+\frac{128H^2}{\overline{\sigma}^2} e^{\overline{\sigma} / (32H )} \bigg) + \tau  \frac{64 H^2 }{\overline{\sigma}} \log \frac{1}{\delta} + 2\tau L. \label{eq:Q_bound_final}
\end{align}
Thus, with probability at least $1-T\delta$, for any k satisfying $1\leq t\leq T+1$, the inequality \eqref{eq:Q_bound_final} holds.
We can understand the upper bound of the term $\log\big (1+\frac{128 L^2}{\overline{\sigma}^2} e^{\overline{\sigma} / (32 L )}\big)$ in the following way: \textbf{(1)} if $\frac{128 L^2 }{\overline{\sigma}^2} e^{\overline{\sigma} / (32L )} \geq 1$, then this term is bounded by $\log \big(\frac{256L^2}{\overline{\sigma}^2} e^{\overline{\sigma} / (32 L )} \big) =  \frac{\overline{\sigma}}{32L} + \log\frac{256L^2 }{\overline{\sigma}^2}$; \textbf{(2)} if $\frac{128 L^2}{\overline{\sigma}^2} e^{\overline{\sigma} / (32L)} < 1$, then the term is bounded by $\log 2$. Thus, we have
\begin{align*}
\log \bigg(1+\frac{128L^2 }{\overline{\sigma}^2} e^{\overline{\sigma} / (32L)} \bigg ) \leq \log 2 + \frac{\overline{\sigma}}{32 L} + \log\frac{256L^2 }{\overline{\sigma}^2}.
\end{align*}
This discussion shows that the $\log$ term in \eqref{eq:Q_bound} will not introduce extra dependency on $L$ except a $\log L$ term. This completes our proof.
\end{proof}

\subsection{Proof of Lemma \ref{lem:term_II}}

\begin{lemma}[Lemma 9 of \citet{yu2017online}] \label{lem:supermar}
Let $\{Z(t), t \geq 0\}$ be a supermartingale adapted to a filtration $\{\mathcal{U}^t, t\geq 0\}$ with $Z(0) = 0$ and $\mathcal{U}^0 = \{\emptyset, \Omega\}$, i.e., $\E[Z(t+1) | \mathcal{U}^t] \leq Z(t)$, $\forall t \geq 	0$. Suppose there exists a constant $\varsigma  > 0$ such that $\{ |Z(t+1) - Z(t)|> \varsigma \} \subset \{Y(t) > 0\}$, where $Y(t)$ is process with $Y(t)$ adpated to $\mathcal{U}^t$ for all $t \geq 0$. Then, for all $z > 0$, we have
\begin{align*}
\Pr(Z(t) \geq z ) \leq e^{-z^2/(2t\varsigma^2)} + \sum_{\tau=0}^{t-1} \Pr(Y(\tau) > 0), \forall t \geq 1.
\end{align*}
\end{lemma}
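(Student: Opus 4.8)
The plan is to decompose the target probability according to whether any ``bad'' event $\{Y(\tau)>0\}$ has occurred up to time $t-1$, and to control the two resulting pieces separately. Concretely, I would introduce the good event $A := \{Y(\tau)=0 \text{ for all } \tau\in\{0,\ldots,t-1\}\}$ and write
\[
\Pr(Z(t)\geq z) \leq \Pr(\{Z(t)\geq z\}\cap A) + \Pr(A^c).
\]
The second term is handled immediately by a union bound: $\Pr(A^c)=\Pr(\exists\,\tau\leq t-1:\,Y(\tau)>0)\leq \sum_{\tau=0}^{t-1}\Pr(Y(\tau)>0)$, which already matches the second summand in the claimed bound. Thus the entire task reduces to establishing $\Pr(\{Z(t)\geq z\}\cap A)\leq e^{-z^2/(2t\varsigma^2)}$.

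To bound the first term, I would pass to a stopped process with uniformly bounded increments. Define the stopping time $\tau^* := \min\{\tau\geq 0:\,Y(\tau)>0\}$ (set $\tau^*=\infty$ if no such $\tau$ exists), which is a stopping time with respect to $\{\mathcal{U}^t\}$ since $Y$ is adapted, and let $\widehat{Z}(t):=Z(t\wedge\tau^*)$. By optional stopping for supermartingales, $\widehat{Z}(t)$ remains a supermartingale with $\widehat{Z}(0)=0$. Its increments are uniformly bounded: whenever $t<\tau^*$ we have $Y(t)=0$, so the hypothesis $\{|Z(t+1)-Z(t)|>\varsigma\}\subset\{Y(t)>0\}$ forces $|\widehat{Z}(t+1)-\widehat{Z}(t)|=|Z(t+1)-Z(t)|\leq\varsigma$, while for $t\geq\tau^*$ the increment is zero. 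Hence $|\widehat{Z}(t+1)-\widehat{Z}(t)|\leq\varsigma$ almost surely.

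With uniformly bounded increments and the supermartingale property in hand, I would invoke the one-sided Azuma--Hoeffding inequality for supermartingales to obtain $\Pr(\widehat{Z}(t)\geq z)\leq e^{-z^2/(2t\varsigma^2)}$, using that there are $t$ increments each bounded by $\varsigma$, so that $\sum_{k=1}^t \varsigma^2 = t\varsigma^2$. Finally, on the good event $A$ we have $\tau^*\geq t$, hence $\widehat{Z}(t)=Z(t)$, so $\Pr(\{Z(t)\geq z\}\cap A)=\Pr(\{\widehat{Z}(t)\geq z\}\cap A)\leq \Pr(\widehat{Z}(t)\geq z)\leq e^{-z^2/(2t\varsigma^2)}$. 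Combining this with the union bound above yields the claim.

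The main obstacle is the construction of the stopped process together with the verification that its increments are genuinely bounded by $\varsigma$ \emph{almost surely} rather than merely on $A$; this is precisely what allows the global, unconditional Azuma--Hoeffding bound to apply, after which intersecting with $A$ recovers the original $Z(t)$. The measurability bookkeeping (that $\tau^*$ is a stopping time and that optional stopping preserves the supermartingale property with the correct initial value) is routine but must be stated carefully to make the concentration step legitimate.
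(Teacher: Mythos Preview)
The paper does not give its own proof of this lemma; it is quoted verbatim from \citet{yu2017online} and used as a black box, so there is no in-paper argument to compare against. Your proposed proof is the standard and correct one: decompose on the ``good'' event, stop the process at the first bad time, verify the stopped process is a supermartingale with increments almost surely bounded by $\varsigma$, and apply one-sided Azuma--Hoeffding.

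One small slip: you define $A:=\{Y(\tau)=0\text{ for all }\tau\le t-1\}$ and then write $\Pr(A^c)=\Pr(\exists\,\tau:\,Y(\tau)>0)$. These are only equal if $Y$ is nonnegative, which is not assumed (and in the paper's application $Y(t)=\|\mathbf{Q}(t+1)\|_2-\varsigma/(2L)$ can be negative). The fix is immediate: take $A:=\{Y(\tau)\le 0\text{ for all }\tau\le t-1\}$, so that $A^c=\{\exists\,\tau:\,Y(\tau)>0\}$ exactly, and on $\{\tau<\tau^*\}$ you get $Y(\tau)\le 0$, which by the hypothesis $\{|Z(\tau+1)-Z(\tau)|>\varsigma\}\subset\{Y(\tau)>0\}$ still forces the increment bound. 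With this adjustment every step goes through.
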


We are in position to give the proof of Lemma \ref{lem:term_II}.
%
%\begin{lemma*}[Lemma \ref{lem:term_II}] The updates in Algorithm \ref{alg:ucpd} ensure that if $\overline{\sigma}/4 \geq \zeta  (\overline{\sigma}/2 + 2 L )$, with probability at least $1 - 2T\delta$, the following inequality holds 
%\begin{align*}
%&\sum_{t=0}^{T-1}\sum_{i=1}^IQ_i(t)(\langle g_i^t, \overline{\theta}^*\rangle-c_i) \leq \bigg(2 L \psi + \tau  \frac{1024 L^3 }{ \overline{\sigma}} \log\Big[1+\frac{128 L^2}{\overline{\sigma}^2} e^{\overline{\sigma} / (32L)}\Big] + \tau  \frac{128 L^3}{\overline{\sigma}} \log \frac{1}{\delta} \bigg) \sqrt{T\log \frac{1}{T\delta}},
%\end{align*}
%where
%\begin{align*}
%&\psi:=\frac{ 2 \tau  L + C_{V,\alpha, \lambda}}{\overline{\sigma}} + \frac{2\alpha \lambda L \log ( |\mathcal{S}|^2 |\mathcal A|/ \lambda) }{\overline{\sigma}\tau } + \frac{\tau  \overline{\sigma}}{2}, \\
%&C_{V, \alpha, \lambda} :=  2(\overline{\sigma} B+  \overline{\sigma} ~ \overline{\vartheta})V + (6  + 4\overline{\vartheta})  V L    +  \frac{VL }{\alpha} + 4L\lambda  V  + 2\alpha \lambda L \log |\mathcal S|^2 |\mathcal A|+ 8 L^2.
%\end{align*}
%\end{lemma*}

\begin{proof} [Proof of Lemma \ref{lem:term_II}]

Now we compute the upper bound of the term  $\sum_{t=1}^T\sum_{i=1}^IQ_i(t)(\langle g_i^{t}, \overline{\theta}^*\rangle-c_i)$. Note that $Z(t):=\sum_{\tau=1}^t\sum_{i=1}^IQ_i(\tau)(\langle g_i^{\tau}, \overline{\theta}^*\rangle-c_i)$ is  supermartigale which is verified by 
\begin{align*}
\E [Z(t)| \mathcal{F}^{t-1}]=&\E \Big[\sum_{\tau=1}^t\sum_{i=1}^IQ_i(\tau)(\langle g_i^{\tau}, \overline{\theta}^*\rangle-c_i) \Big| \mathcal{F}^{t-1} \Big] \\
 =& \sum_{i=1}^I \E [Q_i(t)| \mathcal{F}^{t-1}] ( \langle \E[g_i^{t} | \mathcal{F}^{t-1}] , \overline{\theta}^*\rangle -c_i) + \sum_{\tau=1}^{t-1}\sum_{i=1}^IQ_i(\tau)(\langle g_i^{\tau}, \overline{\theta}^*\rangle-c_i)  \\
\leq & \sum_{\tau=1}^{t-1}\sum_{i=1}^IQ_i(\tau)(\langle g_i^{\tau}, \overline{\theta}^*\rangle-c_i) = \E [Z(t-1)],
\end{align*}
where $Q_i(t)$ and $g_i^{t}$ are independent variables with $Q_i(t) \geq 0$ and $\langle \E[g_i^{t} | \mathcal{F}^{t-1}], \overline{\theta}^*\rangle \leq c_i$. On the other hand, we can know the random process has bounded drifts as
\begin{align*}
|Z(t+1) - Z(t)| = & \sum_{i=1}^I Q_i(t+1) (\langle g_i^{t}, \overline{\theta}^* \rangle - c_i) \\
\leq & \|\mathbf{Q}(t+1)\|_2 \sqrt{\sum_{i=1}^I \big|\langle g_i^{t+1}, \overline{\theta}^* \rangle - c_i \big |^2 } \\
\leq & \|\mathbf{Q}(t+1)\|_2 \sum_{i=1}^I (\| g_i^{t+1}\|_\infty \|\overline{\theta}^* \|_1 + |c_i|) \\
\leq &  2 L\|\mathbf{Q}(t+1)\|_2,
\end{align*}
where the first inequality is by Cauchy-Schwarz inequality, and the last inequality is by Assumption \ref{as:bounded}. This also implies that for an arbitrary $\varsigma$, we have $\{ |Z(t+1) - Z(t) | > \varsigma  \} \subseteq \{ Y(t):=\|\mathbf{Q}(t+1)\|_2 - \varsigma /(2L) > 0 \}$ since $|Z(t+1) - Z(t) | > \varsigma$ implies $2 L \|\mathbf{Q}(t+1)\|_2 > \varsigma $ according to the above inequality. Thus, by Lemma \ref{lem:supermar}, we have
\begin{align}
\begin{aligned}\label{eq:Q_prod_prob}
\Pr\bigg(\sum_{t=1}^T\sum_{i=1}^IQ_i(t)(\langle g_i^{t}, \overline{\theta}^*\rangle-c_i) \geq z \bigg) &\leq e^{-z^2/(2T\varsigma^2)} + \sum_{t = 0}^{T-1} \Pr \bigg( \|\mathbf{Q}(t+1)\|_2 > \frac{\varsigma}{2 L} \bigg )\\
&=e^{-z^2/(2T\varsigma^2)} + \sum_{t = 1}^{T} \Pr \bigg( \|\mathbf{Q}(t)\|_2 > \frac{\varsigma}{2 L} \bigg ), 
\end{aligned}
\end{align}
where we can see that bounding $\|\mathbf{Q}(t)\|_2$ is the key to obtaining the bound of $\sum_{t=1}^T \sum_{i=1}^I Q_i(t) (\langle g_i^{t}, \overline{\theta}^* \rangle -c_i)$.

Next, we will show the upper bound of the term $\|\mathbf{Q}(t)\|_2$. According to the proof of Lemma \ref{lem:Q_drift_diff}, if $\overline{\sigma}/4 \geq \zeta  (\overline{\sigma}/2 + 2 L )$, setting
\begin{align*}
&\psi=\frac{  2 \tau L + C_{V,\alpha, \lambda}}{\overline{\sigma}} + \frac{2\alpha  L \log ( |\mathcal{S}|^2 |\mathcal A|/ \lambda) }{\overline{\sigma}\tau } + \frac{\tau  \overline{\sigma}}{2}, \\
&C_{V, \alpha, \lambda} := 2V \bigg( \overline{\sigma} B + 3L  + 2 \overline{\vartheta}L +  \overline{\sigma} \overline{\vartheta}  +  \frac{L }{2\alpha} + 2L\lambda  + \frac{\alpha \lambda L \log |\mathcal S|^2 |\mathcal A|+ 4 L^2}{V}\bigg),
\end{align*}
we have that with probability at least $1-\delta$, for a certain $t \in [T+1-\tau]$, the following inequality holds
\begin{align*}
\|\mathbf{Q}(t)\|_2 \leq \psi + \tau  \frac{512 L^2 }{ \overline{\sigma}} \log[1+\frac{128 L^2 }{\overline{\sigma}^2} e^{\overline{\sigma} / (32 L )}] + \tau  \frac{64 L^2 }{\overline{\sigma}} \log \frac{1}{\delta}+2\tau L.
\end{align*}
Thus, combining \eqref{eq:Q_shift} and the above inequality at $t=T+1-\tau$, with probability at least $1-\delta$, for a certain $t$ satisfying $ T+2-\tau\leq t \leq T+1$, the above inequality also holds. The above inequality is equivalent to 
\begin{align*}
\Pr \l( \|\mathbf{Q}(t)\|_2 > \psi + \tau  \frac{512 L^2 }{ \overline{\sigma}} \log[1+\frac{128 L^2 }{\overline{\sigma}^2} e^{\overline{\sigma} / (32 L)}] + \tau  \frac{64 L^2 }{\overline{\sigma}} \log \frac{1}{\delta} + 2\tau L \r) \leq \delta.
\end{align*}
Setting $\varsigma = 2 L \psi + \tau  \frac{1024 L^3}{ \overline{\sigma}} \log \big [1+\frac{128 L^2 }{\overline{\sigma}^2} e^{\overline{\sigma} / (32 L)}\big] + \tau  \frac{128 L^3 }{\overline{\sigma}} \log \frac{1}{\delta} + 4\tau L^2$ and $z = \sqrt{2T\varsigma^2 \log\frac{1}{T\delta}}$ in \eqref{eq:Q_prod_prob}, then the following probability hold with probability at least $1 - 2T\delta$ with
\begin{align*}
&\sum_{t=1}^{T}\sum_{i=1}^IQ_i(t)(\langle g_i^t, \overline{\theta}^*\rangle-c_i) \\
& \qquad \leq \bigg(2L \psi + \tau  \frac{1024 L^3 }{ \overline{\sigma}} \log\Big[1+\frac{128 L^2 }{\overline{\sigma}^2} e^{\overline{\sigma} / (32 L )}\Big] + \tau  \frac{128 L^3 }{\overline{\sigma}} \log \frac{1}{\delta} + 4\tau L^2 \bigg) \sqrt{T\log \frac{1}{T\delta}}, 
\end{align*}
which completes the proof.
\end{proof}

\vspace{-0.35cm}
\section{Proofs of the Lemmas in  Section \ref{sec:constraint}} \label{sec:B}
\vspace{-0.15cm}
\subsection{Proof of Lemma \ref{lem:term_III_all}}

%\begin{lemma*}[Lemma \ref{lem:term_III_all}] The updating rules in Algorithm \ref{alg:ucpd} ensure 
%\begin{align*}
%\Bigg\| \Bigg[ \sum_{t=0}^{T-1} ( \mf g^t(\theta^t) - \mf c ) \Bigg]_{+} \Bigg\|_2  \leq \|\mathbf{Q}(T)\|_2 +  \sum_{t=1}^{T}    \|\theta^{t}  - \theta^{t-1}\|_1, 
%\end{align*}
%where we let $\mf g^t(\theta^t) := [\langle g_1^{t} , \theta^{t} \rangle, \cdots,  \langle g_I^{t} , \theta^{t} \rangle]^\top $ and $ \mf c := [c_1, \cdots,c_I]^\top$.
%\end{lemma*}

\begin{proof}[Proof of Lemma \ref{lem:term_III_all}] We start our proof with the updating rule of $\mathbf{Q}(\cdot)$ as follows
\begin{align*}
Q_i(t) =& \max \{ Q_i(t-1) + \langle g_i^{t-1} , \theta^t \rangle -c_i, 0  \}\geq  Q_i(t-1) + \langle g_i^{t-1} , \theta^t \rangle -c_i \\
\geq & Q_i(t-1) + \langle g_i^{t-1} , \theta^{t-1} \rangle -c_i +  \langle g_i^{t-1} , \theta^t  - \theta^{t-1} \rangle. 
\end{align*}
Rearranging the terms in the above inequality further leads to
\begin{align*}
\langle g_i^{t-1} , \theta^{t-1} \rangle -c_i  \leq & Q_i(t) - Q_i(t-1)- \langle g_i^{t-1}, \theta^t  - \theta^{t-1} \rangle. 
\end{align*}
Thus, taking summation on both sides of the above inequality from $2$ to $T+1$ leads to 
\begin{align*}
&\sum_{t=1}^{T} \l( \langle g_i^{t} , \theta^{t} \rangle -c_i \r) \leq  Q_i(T+1) - \sum_{t=1}^{T} \langle g_i^{t}, \theta^{t+1}  - \theta^{t} \rangle  \leq Q_i(T+1) + \sum_{t=1}^{T}  \| g_i^{t} \|_{\infty}  \|\theta^{t+1}  - \theta^{t}\|_1,
\end{align*}
where the second inequality is due to H\"older's inequality. Note that the right-hand side of the above inequality is no less than $0$ since $Q_i(t) = \max \{ Q_i(t-1) + \langle g_i^{t-1} , \theta^t \rangle -c_i, 0 \} \geq 0$. Thus, we have 
\begin{align*}
\Bigg[ \sum_{t=1}^{T} \l( \langle g_i^{t} , \theta^{t} \rangle -c_i \r) \Bigg]_{+} \leq 
Q_i(T+1) +  \sum_{t=1}^{T}  \| g_i^{t} \|_{\infty}  \|\theta^{t+1}  - \theta^{t}\|_1,
\end{align*}
where $[~\cdot~]_+$ is an entry-wise application of the operation $\max\{\cdot, 0\}$ for any vector.

Defining $\mf g^t(\theta^t) := [\langle g_1^{t} , \theta^{t} \rangle, \cdots,  \langle g_I^{t} , \theta^{t} \rangle]^\top $ and $ \mf c := [c_1, \cdots,c_I]^\top$, we obtain
\begin{align*}
\Bigg\| \Bigg[ \sum_{t=1}^{T} \l( \mf g^t(\theta^t) - \mf c \r) \Bigg]_{+} \Bigg\|_2  \leq &
\|\mathbf{Q}(T+1)\|_2 +  \sum_{t=1}^{T}  \sqrt{\sum_{i=1}^I \| g_i^{t} \|^2_{\infty}}  \|\theta^{t+1}  - \theta^{t}\|_1 \\  
\leq & \|\mathbf{Q}(T+1)\|_2 +  \sum_{t=1}^{T}  \sum_{i=1}^I \| g_i^{t} \|_{\infty}  \|\theta^{t+1}  - \theta^{t}\|_1 \\
\leq & \|\mathbf{Q}(T+1)\|_2 + \sum_{t=1}^{T}    \|\theta^{t+1}  - \theta^{t}\|_1, 
\end{align*}
where the third inequality is due to Assumption \ref{as:bounded}. This completes the proof.
\end{proof}

\subsection{Proof of Lemma \ref{lem:term_III_th_diff}}

\begin{lemma} [Proposition 18 of \citet{jaksch2010near}] \label{lem:epoch_bound} The number of epochs up to episode $T$ with $T \geq  |\mathcal S| |\mathcal A|$ is upper bounded by 
\begin{align*}
    \ell(T) \leq |\mathcal S| |\mathcal A| \log_2 \l(\frac{8T}{|\mathcal S| |\mathcal A|} \r) \leq \sqrt{T|\mathcal S| |\mathcal A|} \log_2 \l(\frac{8T}{|\mathcal S| |\mathcal A|} \r),
\end{align*}
where $\ell(\cdot)$ is a mapping from a certain episode to the epoch where it lives.
\end{lemma}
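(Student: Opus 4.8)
The plan is to prove Lemma~\ref{lem:epoch_bound} by the classical doubling argument of \citet{jaksch2010near}, specialized to the layered episodic setting of Algorithm~\ref{alg:ucpd}. The starting point is the epoch-update rule: a new epoch is opened at the close of epoch $\ell$ exactly when some pair $(s,a)$ satisfies the triggering condition $n_{\ell}(s,a)\geq N_{\ell}(s,a)$, at which moment the global counter is refreshed by $N_{\ell+1}(s,a)=N_{\ell}(s,a)+n_{\ell}(s,a)$. First I would record the structural facts particular to the loop-free layered MDP: since every sampled path passes through exactly one state per layer, any fixed $(s,a)$ is visited at most once per episode, so the local counters grow by at most one per episode, and within each layer the visit counts sum to exactly $T$. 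Consequently $\sum_{(s,a)}N_{\ell(T)+1}(s,a)\leq LT$, and the number of distinct pairs carrying a positive count is at most $|\mathcal{S}||\mathcal{A}|$.

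Second, I would isolate the doubling lemma: for a fixed pair $(s,a)$, whenever the triggering condition holds at the close of an epoch $\ell$ with $N_{\ell}(s,a)\geq 1$, the refreshed counter at least doubles, $N_{\ell+1}(s,a)\geq 2N_{\ell}(s,a)$. Counting from the first epoch in which $N(s,a)$ becomes positive, the condition can therefore hold for $(s,a)$ at most $1+\log_2 N_{\ell(T)+1}(s,a)$ times. Next I would assign each of the $\ell(T)-1$ epoch transitions to one pair whose condition triggered it, and sum the per-pair bounds to obtain
\[
\ell(T)\le 1+\sum_{(s,a)}\left(1+\log_2 N_{\ell(T)+1}(s,a)\right).
\]

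Finally I would invoke concavity of $\log_2$ (Jensen's inequality) on $\sum_{(s,a)}\log_2 N_{\ell(T)+1}(s,a)$, together with the total-count bound above and the monotonicity of $x\mapsto x\log_2(c/x)$ for the relevant range (guaranteed here by the hypothesis $T\geq|\mathcal{S}||\mathcal{A}|$, which keeps the number of active pairs well below the total visit count), to collapse the sum into the form $|\mathcal{S}||\mathcal{A}|\log_2(\,\overline N/(|\mathcal{S}||\mathcal{A}|)\,)$ with $\overline N$ the aggregate visit count. Absorbing the additive $1+|\mathcal{S}||\mathcal{A}|$ terms and the episode-length factor into the constant inside the logarithm then yields $\ell(T)\leq|\mathcal{S}||\mathcal{A}|\log_2(8T/(|\mathcal{S}||\mathcal{A}|))$, and the second inequality in the statement follows immediately from $|\mathcal{S}||\mathcal{A}|\leq T$ (so that $|\mathcal{S}||\mathcal{A}|\leq\sqrt{T|\mathcal{S}||\mathcal{A}|}$).

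The step I expect to be the main obstacle is the bookkeeping around the degenerate boundary case $N_{\ell}(s,a)=0$, where the triggering condition $n_{\ell}\geq N_{\ell}$ holds vacuously and the doubling inequality does not literally apply, together with the requirement that each epoch transition be charged to exactly one triggering pair so that no transition is double-counted. Handling these two points correctly is precisely what turns a naive overcount into the clean per-pair bound $1+\log_2 N(s,a)$, and it is also where the exact constant in $\log_2(8T/(|\mathcal{S}||\mathcal{A}|))$ is pinned down; the remaining concavity manipulation is routine.
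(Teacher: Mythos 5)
Your steps 1--3 faithfully reconstruct the standard UCRL2 doubling argument (the paper itself gives no proof of this lemma; it imports Proposition 18 of \citet{jaksch2010near} by citation), and your handling of the degenerate $N_\ell(s,a)=0$ case and the charging of each epoch transition to a single triggering pair are correct. The genuine gap is in the final aggregation step, precisely the one you dismissed as "routine concavity." From your intermediate inequality $\ell(T)\le 1+\sum_{(s,a)}\bigl(1+\log_2 N_{\ell(T)+1}(s,a)\bigr)$, Jensen's inequality with the aggregate visit count $\sum_{(s,a)}N_{\ell(T)+1}(s,a)\le LT$ gives
\[
\ell(T)\;\le\; 1+|\mathcal S||\mathcal A|+|\mathcal S||\mathcal A|\log_2\!\Bigl(\tfrac{LT}{|\mathcal S||\mathcal A|}\Bigr),
\]
and the "episode-length factor" $L$ cannot be absorbed into the constant inside the logarithm: it contributes an additive $|\mathcal S||\mathcal A|\log_2 L$, which is not a constant. (Applying Jensen layer by layer does not help: with $L$ layers of comparable size one recovers the same $|\mathcal S||\mathcal A|\log_2(LT/(|\mathcal S||\mathcal A|))$.) So your argument establishes only $\ell(T)\le|\mathcal S||\mathcal A|\log_2(8LT/(|\mathcal S||\mathcal A|))$, not the stated bound. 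This is exactly the discrepancy created by transcribing the UCRL2 proof verbatim: in \citet{jaksch2010near} the symbol $T$ counts \emph{steps}, whereas here it counts \emph{episodes} and the total step count is $LT$.

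The fix is to measure epochs in episodes rather than in visits. Because the MDP is loop-free, a fixed pair $(s,a)$ is visited at most once per episode; hence an epoch whose ending is triggered by $(s,a)$ with prior global count $N_\ell(s,a)$ must contain at least $\max\{1,N_\ell(s,a)\}$ episodes. If $(s,a)$ triggers $j$ epoch endings, its prior counts at those endings are at least $0,1,2,4,\dots,2^{j-2}$, so the epochs it ends jointly contain at least $1+1+2+\dots+2^{j-2}=2^{j-1}$ episodes. Since epochs are disjoint intervals of episodes, summing over pairs gives $T\ge\sum_{(s,a)}2^{j_{(s,a)}-1}$, where $j_{(s,a)}$ is the number of endings charged to $(s,a)$; maximizing $\sum_{(s,a)}j_{(s,a)}$ under this constraint (all $j_{(s,a)}$ equal) yields
\[
\ell(T)\;\le\; 1+|\mathcal S||\mathcal A|+|\mathcal S||\mathcal A|\log_2\!\Bigl(\tfrac{T}{|\mathcal S||\mathcal A|}\Bigr)\;\le\;|\mathcal S||\mathcal A|\log_2\!\Bigl(\tfrac{8T}{|\mathcal S||\mathcal A|}\Bigr),
\]
the last inequality using $1+|\mathcal S||\mathcal A|\le 3|\mathcal S||\mathcal A|$ and $T\ge|\mathcal S||\mathcal A|$. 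The conceptual point your aggregation misses is that visit counts in different layers can all be large simultaneously (each layer's counts sum to $T$), whereas episodes spent in distinct epochs cannot be double-counted across layers; replacing the visit-count budget $LT$ by the episode budget $T$ is what removes the $L$ from inside the logarithm.
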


We are ready to give the proof of Lemma \ref{lem:term_III_th_diff}.

\begin{proof}[Proof of Lemma \ref{lem:term_III_th_diff}] We need to discuss the upper bound of the term $\|\theta^{t+1} - \theta^{t}\|_1$ in the following two cases: 
\begin{itemize}
\item[\textbf{(1)}] $\ell(t+1) = \ell(t)$, i.e., episodes $t+1$ and $t$ are in the same epoch;
\item[\textbf{(2)}] $\ell(t+1) > \ell(t)$, i.e., episodes $t+1$ and $t$ are in two different epochs.
\end{itemize}

For the first case where $\ell(t+1) = \ell(t)$, according to Lemma \ref{lem:breg_tria}, letting $\mathbf{x}^{opt} = \theta^t$, $\mathbf{y} = \widetilde{\theta}^{t-1}$, $\mathbf{z} =\theta^{t-1}$  and $h(\theta) = \big\langle Vf^{t-1} + \sum_{i=1}^IQ_i(t-1)g_i^{t-1}, \theta \big\rangle$, with $t\geq 2$ and $\ell(t) = \ell(t-1)$, we have
\begin{align*}
&\Big\langle Vf^{t-1} + \sum_{i=1}^IQ_i(t-1)g_i^{t-1}, \theta^t \Big\rangle + \alpha D(\theta^t, \widetilde{\theta}^{t-1}) \\
&\qquad \leq \Big\langle Vf^{t-1} + \sum_{i=1}^IQ_i(t-1)g_i^{t-1} ,  \widetilde{\theta}^{t-1} \Big\rangle + \alpha D( \theta^{t-1},\widetilde{\theta}^{t-1} ) - \alpha D(\theta^{t-1}, \theta^t ).
\end{align*}
Rearranging the terms and dropping the last term (due to $D(\theta^{t-1}, \theta^t ) \geq 0$) yield 
\begin{align*}
\alpha D(\theta^t, \widetilde{\theta}^{t-1}) \leq & \Big \langle Vf^{t-1} + \sum_{i=1}^IQ_i(t-1)g_i^{t-1} , 
\theta^{t-1} - \theta^t\Big \rangle + \alpha D( \theta^{t-1},\widetilde{\theta}^{t-1} ) \\
\leq & \Big( V\|f^{t-1}\|_\infty + \sum_{i=1}^IQ_i(t-1) \|g_i^{t-1} \|_\infty \Big) \|  \theta^{t-1} - \theta^t\|_1 + \alpha D( \theta^{t-1},\widetilde{\theta}^{t-1} )\\
\leq & \l( V + \|\mathbf{Q}(t-1)\|_2 \sqrt{\sum_{i=1}^I\|g_i^{t-1} \|^2_\infty} \r) \|  \theta^{t-1} - \theta^t\|_1 + \alpha D( \theta^{t-1},\widetilde{\theta}^{t-1} )\\
\leq & ( V + \|\mathbf{Q}(t-1)\|_2  ) \| \theta^{t-1} - \theta^t\|_1 + \alpha \lambda L \log |\mathcal{S}|^2 |\mathcal{A}| ,
\end{align*}
where the second inequality is by H\"older's inequality and triangle inequality, the third inequality is by Cauchy–Schwarz inequality and Assumption \ref{as:bounded}, and the last inequality is due to Assumption \ref{as:bounded} and the first inequality in Lemma \ref{lem:breg_diff} with setting $\theta = \theta' = \theta^{t-1}$ and $\widetilde{\theta}' = \tilde{\theta}^{t-1}$. Note that by Lemma \ref{lem:pins}, there is
\begin{align*}
D(\theta^t, \widetilde{\theta}^{t-1}) \geq \frac{1}{2L}\|\theta^t - \widetilde{\theta}^{t-1}\|_1^2.
\end{align*}
Thus, combining the previous two inequalities, we obtain
\begin{align*}
\|\theta^t - \widetilde{\theta}^{t-1}\|_1^2 \leq \frac{2L V + 2L \|\mathbf{Q}(t-1)\|_2 }{\alpha} \|\theta^t - \theta^{t-1}\|_1 + 2 \lambda L^2 \log |\mathcal{S}|^2 |\mathcal{A}|,
\end{align*}
which further leads to
\begin{align*}
\|\theta^t - \widetilde{\theta}^{t-1}\|_1 \leq  \sqrt{\frac{ 2LV + 2L\|\mathbf{Q}(t-1)\|_2 }{\alpha} \|\theta^{t-1} - \theta^t\|_1} +   \sqrt{ 2\lambda L^2 \log |\mathcal{S}|^2 |\mathcal{A}|}.
\end{align*}
Since there is 
\begin{align*}
\|\theta^t - \widetilde{\theta}^{t-1}\|_1 &=  \sum_{k=0}^{L-1} \l \|\theta_k^t - (1-\lambda)\theta_k^{t-1} - \lambda \frac{1}{|\mathcal{S}|^2 |\mathcal{A}|} \mf 1\r\|_1 \\
&\geq (1-\lambda)\|\theta^t - \theta^{t-1}\|_1 -  \lambda L,
\end{align*} 
where $\theta_k:=[\theta(s, a, s')]_{s \in \mathcal{S}_k, a \in \mathcal{A}, s' \in \mathcal{S}_{k+1}}$, combining it with the last inequality, we further have
\begin{align*}
\|\theta^t - \theta^{t-1}\|_1 &\leq  \sqrt{\frac{ 2LV + 2L\|\mathbf{Q}(t-1)\|_2 }{\alpha(1-\lambda)^2} \|\theta^{t-1} - \theta^t\|_1} +   \frac{\sqrt{ 2\lambda L^2 \log |\mathcal{S}|^2 |\mathcal{A}|}}{(1-\lambda)} + \frac{\lambda L}{1-\lambda}\\
&\leq  \frac{ 2LV + 2L\|\mathbf{Q}(t-1)\|_2 }{2(1-\lambda)^2\alpha}  +  \frac{1}{2}\|\theta^{t-1} - \theta^t\|_1 +  \frac{\sqrt{ 2\lambda L^2 \log |\mathcal{S}|^2 |\mathcal{A}|}}{1-\lambda} + \frac{\lambda L}{1-\lambda},
\end{align*}
where the last inequality is due to $\sqrt{ab}\leq |a|/2 + |b|/2$. Rearranging the terms in the above inequality gives for $t\geq 2$ with $\ell(t) = \ell(t-1)$,
\begin{align*}
\|\theta^t - \theta^{t-1}\|_1 &\leq \frac{ 2LV + 2L\|\mathbf{Q}(t-1)\|_2 }{(1-\lambda)^2\alpha} +  \frac{\sqrt{8\lambda L^2 \log |\mathcal{S}|^2 |\mathcal{A}|}}{1-\lambda} + \frac{2\lambda L}{1-\lambda}.
\end{align*}
Shifting the index in the above inequality, we further have for $t\in[T]$ with $\ell(t+1) = \ell(t)$,
\begin{align}
\|\theta^{t+1} - \theta^t\|_1 &\leq \frac{ 2LV + 2L\|\mathbf{Q}(t)\|_2 }{(1-\lambda)^2\alpha} +  \frac{\sqrt{8\lambda L^2 \log |\mathcal{S}|^2 |\mathcal{A}|}}{1-\lambda} + \frac{2\lambda L}{1-\lambda}. \label{eq:cons_viol_theta_case1}
\end{align}

For the second case where $\ell(t+1) > \ell(t)$ with $t\in [T]$, it is difficult to know whether the two solutions $\theta^{t+1}$ and $\theta^t$ are in the same feasible set since $\Delta(\ell(t+1), \zeta) \neq \Delta(\ell(t), \zeta)$. Thus, the above derivation does not hold. Then, we give a bound for the term $\|\theta^{t+1} - \theta^{t}\|_1$ as follows
\begin{align}
\begin{aligned}\label{eq:cons_viol_theta_case2}
\|\theta^{t+1} - \theta^{t}\|_1 &\leq   \|\theta^{t+1}\|_1 + \| \theta^{t}\|_1 \\
&=  \sum_{k=0}^{L-1} \sum_{s\in \mathcal{S}_k} \sum_{a\in \mathcal{A}} \sum_{s'\in \mathcal{S}_{k+1}}[\theta^t(s, a, s') +\theta^t(s, a, s')]  = 2L.
\end{aligned}
\end{align}
However, we can observe that $\ell(t+1) > \ell(t)$ only happens when $t+1$ is a starting episode for a new epoch. The number of starting episodes for new epochs in $T+1$ episodes is bounded by $\ell(T)$, namely the total number of epochs in $T$ episodes. According to Lemma \ref{lem:epoch_bound}, the total number of epochs $\ell(T)$ is bounded as $\ell(T) \leq  \sqrt{T|\mathcal{S} | |\mathcal{A} |} \log_2 [8T / ( |\mathcal{S} | |\mathcal{A} | )] \leq 1.5 \sqrt{T|\mathcal{S} | |\mathcal{A} |} \log_2 [8T / ( |\mathcal{S} | |\mathcal{A} | )]$ which only grows in the order of $\sqrt{T}\log T$. 

Thus, we can decompose the term $\sum_{t=1}^T \|\theta^{t+1}-\theta^{t}\|_1$ in the following way
\begin{align*}
\sum_{t=1}^T \|\theta^{t+1}-\theta^{t}\|_1 =&  \sum_{\substack{t:~t\leq T, \\ \ell(t+1) > \ell(t)} } \|\theta^{t+1}-\theta^{t}\|_1 + \sum_{\substack{t:~t\leq T, \\ \ell(t+1) = \ell(t)}} \|\theta^{t+1}-\theta^{t}\|_1 \\
\leq&  2L \ell(T) + \sum_{\substack{t:~t\leq T, \\ \ell(t+1) = \ell(t)}} \|\theta^{t+1}-\theta^{t}\|_1,
\end{align*}
where the inequality is due to \eqref{eq:cons_viol_theta_case2} and  the fact that $\sum_{\substack{t:~t\leq T, \\ \ell(t+1) > \ell(t)}} 1 \leq \ell(T)$. By \eqref{eq:cons_viol_theta_case1}, we can further bound the last term in the above inequality as
\begin{align*}
\sum_{\substack{t:~t\leq T, \\ \ell(t+1) = \ell(t)}} \|\theta^{t+1}-\theta^t\|_1 & \leq \frac{ 2TLV + 2L\sum_{t=1}^{T}\|\mathbf{Q}(t)\|_2 }{(1-\lambda)^2\alpha} +  \frac{\sqrt{8\lambda \log |\mathcal{S}|^2 |\mathcal{A}|}}{1-\lambda}TL + \frac{2\lambda}{1-\lambda} TL,
\end{align*}
where we relax the summation on the right-hand side to $\sum_{t=1}^T$. Thus, we eventually obtain
\begin{align*}
\sum_{t=1}^T \|\theta^{t+1}-\theta^t\|_1 & \leq 2L \ell(T) + \sum_{\substack{t:~t\leq T, \\ \ell(t+1) = \ell(t)}} \|\theta^{t+1}-\theta^t\|_1 \\
& \leq 3L  \sqrt{T|\mathcal{S} | |\mathcal{A} |} \log  \frac{8T}{ |\mathcal{S} | |\mathcal{A} | }  + \frac{2L }{(1-\lambda)^2\alpha} \sum_{t=1}^{T}\|\mathbf{Q}(t)\|_2 + \frac{ 2VLT }{(1-\lambda)^2\alpha} \\
&\quad + \frac{2\lambda LT}{1-\lambda} +  \frac{\sqrt{8\lambda \log |\mathcal{S}|^2 |\mathcal{A}|}}{1-\lambda}LT,
\end{align*}
where we use the result in Lemma \ref{lem:epoch_bound} to bound the number of epoch, i.e., $\ell(T)$. This completes the proof.
\end{proof}

\end{appendices}

\end{document}